\newcommand\independent{\protect\mathpalette{\protect\independenT}{\perp}}
\def\independenT#1#2{\mathrel{\rlap{$#1#2$}\mkern2mu{#1#2}}}
\DeclareMathOperator*{\argmin}{arg\,min}
\DeclareMathOperator*{\Image}{Im}
\newtheorem{theorem}{Theorem}[section]
\newtheorem{lemma}[theorem]{Lemma}
\theoremstyle{definition}
\newtheorem{definition}[theorem]{Definition}
\newenvironment{customproof}[1]
{\innercustomproof}
{\endinnercustomproof}
\title{A Measure-Theoretic Approach to Kernel Conditional Mean Embeddings}
\author{Junhyung Park \\
	MPI for Intelligent Systems, T\"ubingen \\
	\texttt{junhyung.park@tuebingen.mpg.de} \\
	\And
	Krikamol Muandet \\
	MPI for Intelligent Systems, T\"ubingen \\
	\texttt{krikamol@tuebingen.mpg.de} \\
}
\begin{document}

\maketitle
\begin{abstract}
We present an operator-free, measure-theoretic approach to the conditional mean embedding (CME) as a random variable taking values in a reproducing kernel Hilbert space. While the kernel mean embedding of unconditional distributions has been defined rigorously, the existing operator-based approach of the conditional version depends on stringent assumptions that hinder its analysis. We overcome this limitation via a measure-theoretic treatment of CMEs. We derive a natural regression interpretation to obtain empirical estimates, and provide a thorough theoretical analysis thereof, including universal consistency. As natural by-products, we obtain the conditional analogues of the maximum mean discrepancy and Hilbert-Schmidt independence criterion, and demonstrate their behaviour via simulations. 
\end{abstract}

\section{Introduction}\label{Sintro}
The idea of embedding probability distributions into a reproducing kernel Hilbert space (RKHS), a space associated to a positive definite kernel, has received a lot of attention in the past decades \citep{berlinet2004reproducing,smola2007hilbert}, and has found a wealth of successful applications, such as independence testing \citep{gretton2008kernel}, two-sample testing \citep{gretton2012kernel}, learning on distributions \citep{muandet2012learning,lopez2015towards,szabo2016learning}, goodness-of-fit testing \citep{chwialkowski2016kernel,liu2016kernelised} and probabilistic programming \citep{scholkopf2015computing,simongabriel2016consistent}, among others -- see review \citep{muandet2017kernel}. It extends the idea of kernelising linear methods by embedding data points into high- (and often infinite-)dimensional RKHSs, which has been applied, for example, in ridge regression, spectral clustering, support vector machines and principal component analysis among others \citep{scholkopf2001learning,hofmann2008kernel,steinwart2008support}. 

Conditional distributions can also be embedded into RKHSs in a similar manner \citep{song2013kernel},\citep[Chapter 4]{muandet2017kernel}. Compared to unconditional distributions, conditional distributions can represent more complicated relations between random variables, and so conditional mean embeddings (CMEs) have the potential to unlock the arsenal of kernel mean embeddings to a wider setting. Indeed, CMEs have been applied successfully to dynamical systems \citep{song2009hilbert}, inference on graphical models \citep{song2010nonparametric}, probabilistic inference via kernel sum and product rules \citep{song2013kernel}, reinforcement learning \citep{grunewalder2012modelling,nishiyama2012hilbert}, kernelising the Bayes rule and applying it to nonparametric state-space models \citep{fukumizu2013kernel} and causal inference \citep{mitrovic2018causal} to name a few. 

Despite such progress, the current prevalent definition of the CME based on composing cross-covariance operators \citep{song2009hilbert} relies on some stringent assumptions, which are often violated and hinder its analysis. \citet{klebanov2020rigorous} recently attempted to clarify and weaken some of these assumptions, but strong and hard-to-verify conditions still persist. \citet{grunewalder2012conditional} provided a regression interpretation, but here, only the existence of the CME is shown, without an explicit expression. The main contribution of this paper is to remove these stringent assumptions using a novel measure-theoretic approach to the CME. This approach requires drastically weaker assumptions, and comes in an explicit expression. We believe this will enable a more principled analysis of its theoretical properties, and open doors to new application areas. We derive an empirical estimate based on vector-valued regression along with in-depth theoretical analysis, including universal consistency. In particular, we relax the assumption of \cite{grunewalder2012conditional} to allow for infinite-dimensional RKHSs. 

As natural by-products, we obtain quantities that are extensions of the maximum mean discrepancy (MMD) and the Hilbert-Schmidt independence criterion (HSIC) to the conditional setting, which we call the \textit{maximum conditional mean discrepancy} (MCMD) and the \textit{Hilbert-Schmidt conditional independence criterion} (HSCIC). We demonstrate their properties through simulation experiments. 

All proofs can be found in Appendix \ref{Sproofs}. 

\section{Preliminaries}\label{Spreliminaries}
We take \((\Omega,\mathcal{F},P)\) as the underlying probability space. Let \((\mathcal{X},\mathfrak{X})\), \((\mathcal{Y},\mathfrak{Y})\) and \((\mathcal{Z},\mathfrak{Z})\) be separable measurable spaces, and let \(X:\Omega\rightarrow\mathcal{X}\), \(Y:\Omega\rightarrow\mathcal{Y}\) and \(Z:\Omega\rightarrow\mathcal{Z}\) be random variables with distributions \(P_X\), \(P_Y\) and \(P_Z\). We will use \(Z\) as the conditioning variable throughout. 

\subsection{Positive definite kernels and RKHS embeddings}\label{SSrkhs}
Let \(\mathcal{H}_\mathcal{X}\) be a vector space of \(\mathcal{X}\rightarrow\mathbb{R}\) functions, endowed with a Hilbert space structure via an inner product \(\langle\cdot,\cdot\rangle_{\mathcal{H}_\mathcal{X}}\). A symmetric function \(k_\mathcal{X}:\mathcal{X}\times\mathcal{X}\rightarrow\mathbb{R}\) is a \textit{reproducing kernel} of \(\mathcal{H}_\mathcal{X}\) if and only if: 1. \(\forall x\in\mathcal{X}\), \(k_\mathcal{X}(x,\cdot)\in\mathcal{H}_\mathcal{X}\); 2. \(\forall x\in\mathcal{X}\) and \(\forall f\in\mathcal{H}_\mathcal{X}\), \(f(x)=\langle f,k_\mathcal{X}(x,\cdot)\rangle_{\mathcal{H}_\mathcal{X}}\). A space \(\mathcal{H}_\mathcal{X}\) which possesses a reproducing kernel is called a \textit{reproducing kernel Hilbert space} (RKHS) \citep{berlinet2004reproducing}. Throughout this paper, we assume that all RKHSs are \textit{separable}. This is not a restrictive assumption, since it is satisfied if, for example, \(k_\mathcal{X}\) is a continuous kernel \citep[p.130, Lemma 4.33]{steinwart2008support} (for further details, please see \cite{owhadi2017separability}). Given a distribution \(P_X\) on \(\mathcal{X}\), assuming the integrability condition 
\begin{equation}\label{Eintegrability}
\int_\mathcal{X}\sqrt{k_\mathcal{X}(x,x)}dP_X(x)<\infty,
\end{equation}
we define the \textit{kernel mean embedding} \(\mu_{P_X}(\cdot)=\int_\mathcal{X}k_\mathcal{X}(x,\cdot)dP_X(x)\) of \(P_X\), where the integral is a \textit{Bochner integral} \citep[p.15, Def. 35]{dinculeanu2000vector}. We will later show a conditional analogue of the following lemma (for completeness, a proof is provided in Appendix \ref{Sproofs}). 
\begin{lemma}[\cite{smola2007hilbert}]\label{Linterchange}
	For each \(f\in\mathcal{H}_\mathcal{X}\), \(\int_\mathcal{X}f(x)dP_X(x)=\langle f,\mu_{P_X}\rangle_{\mathcal{H}_\mathcal{X}}\). 
\end{lemma}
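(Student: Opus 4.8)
The plan is to reduce the claim to the elementary fact that bounded linear functionals commute with Bochner integrals. First I would dispose of the measurability and integrability bookkeeping. Writing $\Phi\colon\mathcal{X}\to\mathcal{H}_\mathcal{X}$, $\Phi(x)=k_\mathcal{X}(x,\cdot)$, the reproducing property gives $\langle\Phi(x),g\rangle_{\mathcal{H}_\mathcal{X}}=g(x)$ for every $g\in\mathcal{H}_\mathcal{X}$, and each such $g$ is a pointwise limit of finite linear combinations of sections $k_\mathcal{X}(x',\cdot)$, hence measurable; thus $\Phi$ is weakly measurable, and since $\mathcal{H}_\mathcal{X}$ is separable, the Pettis measurability theorem upgrades this to strong measurability. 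Because $\|\Phi(x)\|_{\mathcal{H}_\mathcal{X}}=\sqrt{k_\mathcal{X}(x,x)}$, the integrability condition \eqref{Eintegrability} makes $\Phi$ Bochner integrable, so $\mu_{P_X}=\int_\mathcal{X}\Phi(x)\,dP_X(x)$ is well defined; the same bound yields $|f(x)|=|\langle f,\Phi(x)\rangle_{\mathcal{H}_\mathcal{X}}|\le\|f\|_{\mathcal{H}_\mathcal{X}}\sqrt{k_\mathcal{X}(x,x)}$, so $f$ is $P_X$-integrable and the left-hand side of the statement makes sense.

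For the core step, fix $f\in\mathcal{H}_\mathcal{X}$ and consider the bounded linear functional $L_f\colon\mathcal{H}_\mathcal{X}\to\mathbb{R}$, $L_f(g)=\langle f,g\rangle_{\mathcal{H}_\mathcal{X}}$, which has operator norm $\|f\|_{\mathcal{H}_\mathcal{X}}$. A defining property of the Bochner integral is that it is preserved by any bounded linear map $T$: $T\big(\int\Phi\,dP_X\big)=\int T\circ\Phi\,dP_X$ — immediate for simple functions and passing to the limit via $\|T\Phi_n-T\Phi\|\le\|T\|\,\|\Phi_n-\Phi\|$ (see \citep{dinculeanu2000vector}). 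Applying this with $T=L_f$ and using the reproducing property twice,
\[
\langle f,\mu_{P_X}\rangle_{\mathcal{H}_\mathcal{X}}=L_f\big(\mu_{P_X}\big)=\int_\mathcal{X}L_f\big(k_\mathcal{X}(x,\cdot)\big)\,dP_X(x)=\int_\mathcal{X}\langle f,k_\mathcal{X}(x,\cdot)\rangle_{\mathcal{H}_\mathcal{X}}\,dP_X(x)=\int_\mathcal{X}f(x)\,dP_X(x),
\]
which is exactly the asserted identity.

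I do not expect a genuine obstacle: this is a routine verification. The only delicate point is establishing strong measurability of $\Phi$ so that the Bochner integral actually exists — and that is precisely where the blanket separability assumption on the RKHS is used; everything else is the bounded-functional/Bochner-integral interchange together with the reproducing property. I would also note that this same template — rewrite via the reproducing property, then commute a bounded functional past an integral — is the one I expect the conditional analogue of this lemma to follow later in the paper, with a conditional expectation in place of the integral.
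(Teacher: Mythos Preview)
Your argument is correct. Once $\mu_{P_X}$ is defined as a Bochner integral, commuting the bounded functional $\langle f,\cdot\rangle_{\mathcal{H}_\mathcal{X}}$ past the integral and invoking the reproducing property is all that is needed, and your measurability/integrability bookkeeping is appropriate.

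The paper takes a slightly different route: it fixes the functional $L_P(f)=\int_\mathcal{X} f(x)\,dP_X(x)$, shows it is bounded on $\mathcal{H}_\mathcal{X}$ (via the reproducing property and Cauchy--Schwarz together with \eqref{Eintegrability}), invokes the Riesz representation theorem to obtain a representer $h$, and then identifies $h$ with $\mu_{P_X}$ by evaluating at $f=k_\mathcal{X}(x,\cdot)$. That last identification step still relies on commuting an evaluation functional past the Bochner integral (the paper cites \citep[Corollary 37]{dinculeanu2000vector}), so the two arguments use the same underlying fact; yours simply applies it once to $\langle f,\cdot\rangle$ rather than detouring through Riesz. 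Your approach is more direct and, as you anticipate, it is exactly the template the paper uses for the conditional analogue (Lemma \ref{Lconditionalinterchange}). The Riesz route has the minor expository benefit of making the finiteness of $\int f\,dP_X$ explicit as the boundedness of $L_P$, but you cover that separately anyway.
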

Next, suppose \(\mathcal{H}_\mathcal{Y}\) is an RKHS of functions on \(\mathcal{Y}\) with kernel \(k_\mathcal{Y}\), and consider the \textit{tensor product RKHS} \(\mathcal{H}_\mathcal{X}\otimes\mathcal{H}_\mathcal{Y}\) (see \citep[pp.47-48]{weidmann1980linear} for a definition of tensor product Hilbert spaces). 
\begin{theorem}[{\citep[p.31, Theorem 13]{berlinet2004reproducing}}]\label{Tberlinet13}
	The tensor product \(\mathcal{H}_\mathcal{X}\otimes\mathcal{H}_\mathcal{Y}\) is generated by the functions \(f\otimes g:\mathcal{X}\times\mathcal{Y}\rightarrow\mathbb{R}\), with \(f\in\mathcal{H}_\mathcal{X}\) and \(g\in\mathcal{H}_\mathcal{Y}\) defined by \((f\otimes g)(x,y)=f(x)g(y)\). Moreover, \(\mathcal{H}_\mathcal{X}\otimes\mathcal{H}_\mathcal{Y}\) is an RKHS of functions on \(\mathcal{X}\times\mathcal{Y}\) with kernel
	\((k_\mathcal{X}\otimes k_\mathcal{Y})((x_1,y_1),(x_2,y_2))=k_\mathcal{X}(x_1,x_2)k_\mathcal{Y}(y_1,y_2).\)
\end{theorem}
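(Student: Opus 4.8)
The plan is to realize the abstract Hilbert-space tensor product $\mathcal{H}_\mathcal{X}\otimes\mathcal{H}_\mathcal{Y}$ concretely as a space of real-valued functions on $\mathcal{X}\times\mathcal{Y}$, and then verify directly that $K:=k_\mathcal{X}\otimes k_\mathcal{Y}$ is its reproducing kernel. Recall that $\mathcal{H}_\mathcal{X}\otimes\mathcal{H}_\mathcal{Y}$ is constructed as the completion of the algebraic tensor product $\mathcal{H}_\mathcal{X}\odot\mathcal{H}_\mathcal{Y}$ of finite sums $\sum_i f_i\otimes g_i$ under the inner product determined by $\langle f_1\otimes g_1, f_2\otimes g_2\rangle=\langle f_1,f_2\rangle_{\mathcal{H}_\mathcal{X}}\langle g_1,g_2\rangle_{\mathcal{H}_\mathcal{Y}}$ and extended bilinearly. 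First I would introduce the linear map $\iota$ on $\mathcal{H}_\mathcal{X}\odot\mathcal{H}_\mathcal{Y}$ sending $\sum_i f_i\otimes g_i$ to the function $(x,y)\mapsto\sum_i f_i(x)g_i(y)$, and check that it is well defined, i.e.\ independent of the chosen representation of an element as a finite sum of simple tensors; this is the step where the concrete interpretation of $f\otimes g$ as the pointwise product function $(x,y)\mapsto f(x)g(y)$ enters.

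The key computation is that for a simple tensor $f\otimes g$ and any $(x,y)\in\mathcal{X}\times\mathcal{Y}$, using the reproducing properties of $k_\mathcal{X}$ and $k_\mathcal{Y}$,
\begin{multline*}
\bigl\langle f\otimes g,\; k_\mathcal{X}(x,\cdot)\otimes k_\mathcal{Y}(y,\cdot)\bigr\rangle_{\mathcal{H}_\mathcal{X}\otimes\mathcal{H}_\mathcal{Y}} \\
= \langle f,k_\mathcal{X}(x,\cdot)\rangle_{\mathcal{H}_\mathcal{X}}\,\langle g,k_\mathcal{Y}(y,\cdot)\rangle_{\mathcal{H}_\mathcal{Y}}
= f(x)\,g(y) = \iota(f\otimes g)(x,y).
\end{multline*}
Note that $k_\mathcal{X}(x,\cdot)\otimes k_\mathcal{Y}(y,\cdot)\in\mathcal{H}_\mathcal{X}\odot\mathcal{H}_\mathcal{Y}$ and that $\iota\bigl(k_\mathcal{X}(x,\cdot)\otimes k_\mathcal{Y}(y,\cdot)\bigr)=K((x,y),\cdot)$. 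Extending the identity above by linearity gives $\langle F, k_\mathcal{X}(x,\cdot)\otimes k_\mathcal{Y}(y,\cdot)\rangle=\iota(F)(x,y)$ for every $F\in\mathcal{H}_\mathcal{X}\odot\mathcal{H}_\mathcal{Y}$; since the left-hand side is, by Cauchy--Schwarz, a continuous linear functional of $F$, this extends to every $F$ in the completion, which simultaneously shows that $\iota$ extends continuously to all of $\mathcal{H}_\mathcal{X}\otimes\mathcal{H}_\mathcal{Y}$ and that its extension agrees with pointwise evaluation.

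It remains to see that $\iota$ is injective on $\mathcal{H}_\mathcal{X}\otimes\mathcal{H}_\mathcal{Y}$, so that the latter genuinely consists of functions. The linear span of $\{k_\mathcal{X}(x,\cdot)\otimes k_\mathcal{Y}(y,\cdot): x\in\mathcal{X},\,y\in\mathcal{Y}\}$ is dense in $\mathcal{H}_\mathcal{X}\otimes\mathcal{H}_\mathcal{Y}$, because $\overline{\operatorname{span}}\{k_\mathcal{X}(x,\cdot):x\in\mathcal{X}\}=\mathcal{H}_\mathcal{X}$, $\overline{\operatorname{span}}\{k_\mathcal{Y}(y,\cdot):y\in\mathcal{Y}\}=\mathcal{H}_\mathcal{Y}$, and $\mathcal{H}_\mathcal{X}\odot\mathcal{H}_\mathcal{Y}$ is dense in the completion; hence if $\iota(F)\equiv 0$ then $F$ is orthogonal to every $k_\mathcal{X}(x,\cdot)\otimes k_\mathcal{Y}(y,\cdot)$ and so $F=0$. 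Thus $\iota$ identifies $\mathcal{H}_\mathcal{X}\otimes\mathcal{H}_\mathcal{Y}$ isometrically with a Hilbert space of functions on $\mathcal{X}\times\mathcal{Y}$ in which $K((x,y),\cdot)$ lies and reproduces evaluation at $(x,y)$, so $K$ is its reproducing kernel; writing out $K$ yields the stated product formula. (If one prefers to start from Moore--Aronszajn, positive-definiteness of $K$ follows from the Schur product theorem applied to the positive-definite kernels $k_\mathcal{X}$ and $k_\mathcal{Y}$.) Finally, the ``generated by'' assertion is precisely the density of $\mathcal{H}_\mathcal{X}\odot\mathcal{H}_\mathcal{Y}$ in its completion. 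I expect the only genuinely delicate point to be the bookkeeping around $\iota$: checking that it is well defined on the algebraic tensor product, and that its continuous extension to the completion really coincides with pointwise evaluation --- everything else reduces to the reproducing identity together with the density argument.
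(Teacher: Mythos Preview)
The paper does not supply its own proof of this theorem: it is quoted verbatim from \cite[p.~31, Theorem 13]{berlinet2004reproducing} and used as background, so there is no in-paper argument to compare against. Your proposal is correct and is essentially the standard proof one finds in that reference: realize the abstract completion as a function space via the evaluation map $\iota$, use the factorwise reproducing properties to get $\langle F,\,k_\mathcal{X}(x,\cdot)\otimes k_\mathcal{Y}(y,\cdot)\rangle=\iota(F)(x,y)$ on the algebraic tensor product, extend by continuity, and deduce injectivity from the density of $\operatorname{span}\{k_\mathcal{X}(x,\cdot)\otimes k_\mathcal{Y}(y,\cdot)\}$. One small remark: the well-definedness of $\iota$ on $\mathcal{H}_\mathcal{X}\odot\mathcal{H}_\mathcal{Y}$ that you flag as ``delicate'' is in fact an immediate consequence of your own key identity, since $\sum_i f_i(x)g_i(y)=\bigl\langle \sum_i f_i\otimes g_i,\,k_\mathcal{X}(x,\cdot)\otimes k_\mathcal{Y}(y,\cdot)\bigr\rangle$ and the right-hand side depends only on the abstract tensor, not on the chosen representation.
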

Now let us impose a slightly stronger integrability condition:
\begin{equation}\label{Estrongerintegrability}
\mathbb{E}_X[k_\mathcal{X}(X,X)]<\infty,\quad\mathbb{E}_Y[k_\mathcal{Y}(Y,Y)]<\infty.
\end{equation}
This ensures that \(k_\mathcal{X}(X,\cdot)\otimes k_\mathcal{Y}(Y,\cdot)\) is Bochner \(P_{XY}\)-integrable, and so \(\mu_{P_{XY}}\vcentcolon=\mathbb{E}_{XY}[k_\mathcal{X}(X,\cdot)\otimes k_\mathcal{Y}(Y,\cdot)]\in\mathcal{H}_\mathcal{X}\otimes\mathcal{H}_\mathcal{Y}\). The next lemma is analogous to Lemma \ref{Linterchange}:
\begin{lemma}[{\citep[Theorem 1]{fukumizu2004dimensionality}}]\label{Ltensorinterchange}
	For \(f\in\mathcal{H}_\mathcal{X}\), \(g\in\mathcal{H}_\mathcal{Y}\), \(\langle f\otimes g,\mu_{P_{\mathit{XY}}}\rangle_{\mathcal{H}_\mathcal{X}\otimes\mathcal{H}_\mathcal{Y}}=\mathbb{E}_{\mathit{XY}}[f(X)g(Y)]\).
\end{lemma}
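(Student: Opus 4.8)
The plan is to mimic the proof of Lemma~\ref{Linterchange}: exploit that $\mu_{P_{XY}}$ is a Bochner integral and that bounded linear functionals pass through Bochner integrals. Fix $f\in\mathcal{H}_\mathcal{X}$ and $g\in\mathcal{H}_\mathcal{Y}$, and consider the functional $\Lambda\colon\mathcal{H}_\mathcal{X}\otimes\mathcal{H}_\mathcal{Y}\to\mathbb{R}$ given by $\Lambda(h)=\langle f\otimes g,h\rangle_{\mathcal{H}_\mathcal{X}\otimes\mathcal{H}_\mathcal{Y}}$. By Cauchy--Schwarz, $\Lambda$ is bounded with $\|\Lambda\|\le\|f\otimes g\|_{\mathcal{H}_\mathcal{X}\otimes\mathcal{H}_\mathcal{Y}}=\|f\|_{\mathcal{H}_\mathcal{X}}\|g\|_{\mathcal{H}_\mathcal{Y}}$. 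Since $k_\mathcal{X}(X,\cdot)\otimes k_\mathcal{Y}(Y,\cdot)$ is Bochner $P_{XY}$-integrable, which is precisely what \eqref{Estrongerintegrability} secures (as noted just before the statement), the standard interchange property of the Bochner integral with bounded linear maps \citep[p.15]{dinculeanu2000vector} yields
\[
\langle f\otimes g,\mu_{P_{XY}}\rangle_{\mathcal{H}_\mathcal{X}\otimes\mathcal{H}_\mathcal{Y}}=\Lambda\bigl(\mathbb{E}_{XY}[k_\mathcal{X}(X,\cdot)\otimes k_\mathcal{Y}(Y,\cdot)]\bigr)=\mathbb{E}_{XY}\bigl[\langle f\otimes g,\,k_\mathcal{X}(X,\cdot)\otimes k_\mathcal{Y}(Y,\cdot)\rangle_{\mathcal{H}_\mathcal{X}\otimes\mathcal{H}_\mathcal{Y}}\bigr].
\]

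Next I would evaluate the integrand pointwise. By the defining property of the tensor product inner product, $\langle f\otimes g,\,a\otimes b\rangle_{\mathcal{H}_\mathcal{X}\otimes\mathcal{H}_\mathcal{Y}}=\langle f,a\rangle_{\mathcal{H}_\mathcal{X}}\langle g,b\rangle_{\mathcal{H}_\mathcal{Y}}$ on elementary tensors; applying this with $a=k_\mathcal{X}(X(\omega),\cdot)$ and $b=k_\mathcal{Y}(Y(\omega),\cdot)$, and then invoking the reproducing property in $\mathcal{H}_\mathcal{X}$ and in $\mathcal{H}_\mathcal{Y}$, gives $\langle f,k_\mathcal{X}(X,\cdot)\rangle_{\mathcal{H}_\mathcal{X}}\langle g,k_\mathcal{Y}(Y,\cdot)\rangle_{\mathcal{H}_\mathcal{Y}}=f(X)g(Y)$. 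Substituting this back yields $\langle f\otimes g,\mu_{P_{XY}}\rangle_{\mathcal{H}_\mathcal{X}\otimes\mathcal{H}_\mathcal{Y}}=\mathbb{E}_{XY}[f(X)g(Y)]$, as claimed. For good measure one can record that $|f(X)g(Y)|\le\|f\|_{\mathcal{H}_\mathcal{X}}\|g\|_{\mathcal{H}_\mathcal{Y}}\sqrt{k_\mathcal{X}(X,X)}\sqrt{k_\mathcal{Y}(Y,Y)}$, which together with \eqref{Estrongerintegrability} and Cauchy--Schwarz in $L^2(P)$ re-confirms finiteness of the right-hand side.

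I do not expect a genuine obstacle here; the only care needed is bookkeeping. First, one should note that the object $k_\mathcal{X}(X,\cdot)\otimes k_\mathcal{Y}(Y,\cdot)$ in the definition of $\mu_{P_{XY}}$ is literally the elementary tensor to which the inner-product identity applies, and that $\omega\mapsto k_\mathcal{X}(X(\omega),\cdot)\otimes k_\mathcal{Y}(Y(\omega),\cdot)$ is strongly measurable into the (separable) tensor product RKHS so the Bochner integral is well defined; both follow from separability of the RKHSs and Theorem~\ref{Tberlinet13}. Second, if one prefers not to cite the bounded-functional/Bochner-integral interchange as a black box, the identity can instead be obtained by establishing it for Bochner integrals of simple functions and passing to the limit, exactly as in the proof of Lemma~\ref{Linterchange}. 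Either route is routine.
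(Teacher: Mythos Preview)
Your argument is correct, but it follows a different route from the paper's. The paper proceeds in the style of its proof of Lemma~\ref{Linterchange}: it defines the functional \(L_{P_{XY}}\bigl(\sum_i f_i\otimes g_i\bigr)=\mathbb{E}_{XY}\bigl[\sum_i f_i(X)g_i(Y)\bigr]\) on \(\mathcal{H}_\mathcal{X}\otimes\mathcal{H}_\mathcal{Y}\), shows it is bounded, invokes the Riesz representation theorem to obtain a representer \(h\), and then identifies \(h=\mu_{P_{XY}}\) by evaluating on the canonical features \(k_\mathcal{X}(x,\cdot)\otimes k_\mathcal{Y}(y,\cdot)\). You instead fix \(f,g\) and push the bounded functional \(\langle f\otimes g,\,\cdot\,\rangle\) through the Bochner integral defining \(\mu_{P_{XY}}\), then reduce the integrand via the tensor inner product and the reproducing property. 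Your approach is shorter and avoids Riesz altogether; the paper's approach has the minor advantage of simultaneously establishing that \(\mu_{P_{XY}}\) is the Riesz representer of the full expectation functional (for all \(\sum_i f_i\otimes g_i\), not just elementary tensors), though that is not needed for the lemma as stated. A small remark: your opening sentence says you mimic the proof of Lemma~\ref{Linterchange}, but the paper's proof of that lemma is the Riesz argument, not the Bochner--commutation argument you actually use; you may want to adjust that phrasing.
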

As a consequence, for any pair \(f\in\mathcal{H}_\mathcal{X}\) and \(g\in\mathcal{H}_\mathcal{Y}\), we have \(\langle f\otimes g,\mu_{P_{\mathit{XY}}}-\mu_{P_X}\otimes\mu_{P_Y}\rangle_{\mathcal{H}_\mathcal{X}\otimes\mathcal{H}_\mathcal{Y}}=\text{Cov}_{\mathit{XY}}[f(X),g(Y)]\). There exists an isometric isomorphism \(T:\mathcal{H}_\mathcal{X}\otimes\mathcal{H}_\mathcal{Y}\rightarrow\text{HS}(\mathcal{H}_\mathcal{X},\mathcal{H}_\mathcal{Y})\), where \(\text{HS}(\mathcal{H}_\mathcal{X},\mathcal{H}_\mathcal{Y})\) is the space of Hilbert-Schmidt operators from \(\mathcal{H}_\mathcal{X}\) to \(\mathcal{H}_\mathcal{Y}\) (Lemma \ref{Lisomorphism}). The (centred) \textit{cross-covariance operator} is defined as \(\mathcal{C}_{\mathit{YX}}\vcentcolon=T(\mu_{P_{\mathit{XY}}}-\mu_{P_X}\otimes\mu_{P_Y})\) \citep[Theorem 1]{fukumizu2004dimensionality}. The object \(T(\mu_{P_{XY}})\) is referred to as the \textit{uncentred cross-covariance operator} in the literature \citep[Section 3.2]{song2010hilbert}. 

The notion of \textit{characteristic kernels} is essential, since it tells us that the associated RKHSs are rich enough to enable us to distinguish different distributions from their embeddings.
\begin{definition}[\cite{fukumizu2008kernel}]\label{Dcharacteristic}
	A positive definite kernel \(k_\mathcal{X}\) is \textit{characteristic} to a set \(\mathcal{P}\) of probability measures defined on \(\mathcal{X}\) if the map \(\mathcal{P}\rightarrow\mathcal{H}_\mathcal{X}:P_X\mapsto\mu_{P_X}\) is injective. 
\end{definition}
\citet{sriperumbudur2010hilbert} discusses various characterisations of characteristic kernels and show that the well-known Gaussian and Laplacian kernels are characteristic. We then have a metric on \(\mathcal{P}\) via \(\lVert\mu_{P_X}-\mu_{P_{X'}}\rVert_{\mathcal{H}_\mathcal{X}}\) for \(P_X,P_{X'}\in\mathcal{P}\), which is the definition of the MMD \citep{gretton2007kernel}. Furthermore, the HSIC is defined as the Hilbert-Schmidt norm of \(\mathcal{C}_{\mathit{YX}}\), or equivalently, \(\lVert\mu_{P_{\mathit{XY}}}-\mu_{P_X}\otimes\mu_{P_Y}\rVert_{\mathcal{H}_\mathcal{X}\otimes\mathcal{H}_\mathcal{Y}}\) \citep{gretton2005measuring}. If \(k_\mathcal{X}\otimes k_\mathcal{Y}\) is characteristic, then \(\text{HSIC}=0\) if and only if \(X\independent Y\). 

\subsection{Conditioning}\label{Sconditioning}
We briefly review the concept of conditioning in measure-theoretic probability theory, with Banach space-valued random variables. We consider a sub-\(\sigma\)-algebra \(\mathcal{E}\) of \(\mathcal{F}\) and a Banach space \(\mathcal{H}\). 
\begin{definition}[Conditional Expectation, {\citep[p.45, Definition 38]{dinculeanu2000vector}}]\label{Dconditionalexpectation}
	Suppose \(H\) is a Bochner \(P\)-integrable, \(\mathcal{H}\)-valued random variable. Then the \textit{conditional expectation} of \(H\) given \(\mathcal{E}\) is any \(\mathcal{E}\)-measurable, Bochner \(P\)-integrable, \(\mathcal{H}\)-valued random variable \(H'\) such that \(\int_AHdP=\int_AH'dP\) \(\forall A\in\mathcal{E}\). Any \(H'\) satisfying this condition is a \textit{version} of \(\mathbb{E}[H\mid\mathcal{E}]\). We write \(\mathbb{E}[H\mid Z]\) to mean \(\mathbb{E}[H\mid\sigma(Z)]\), where \(\sigma(Z)\) is the sub-\(\sigma\)-algebra of \(\mathcal{F}\) generated by the random variable \(Z\). 
\end{definition}
The (almost sure) uniqueness of the conditional expectation is shown in \citep[p.44, Proposition 37]{dinculeanu2000vector}, and the existence in \citep[pp.45-46, Theorems 39 and 50]{dinculeanu2000vector}. 
\begin{definition}[{\citep[p.149]{cinlar2011probability}}]\label{Dconditionalprobability}
	The \textit{conditional probability} of \(A\in\mathcal{F}\) given \(\mathcal{E}\) is \(P(A\mid\mathcal{E})=\mathbb{E}[\mathbf{1}_A\mid\mathcal{E}]\).
\end{definition}
Note that, in the unconditional case, the expectation is defined as the integral with respect to the measure, but in the conditional case, the expectation is defined \textit{first}, and the measure is \textit{defined} as the expectation of the indicator function. For this definition to be useful, we require an additional property, called \textit{regular version}. We first define the \textit{transition probability kernel}\footnote{Here, the term \say{kernel} must not be confused with the kernel associated to RKHSs.}. 
\begin{definition}[{\citep[p.37,40]{cinlar2011probability}}]\label{Dtransitionprobabilitykernel}
	Let \((\Omega_i,\mathcal{F}_i)\), \(i=1,2\) be measurable spaces. A mapping \(K:\Omega_1\times\mathcal{F}_2\rightarrow[0,\infty]\) is a \textit{transition kernel} from \((\Omega_1,\mathcal{F}_1)\) to \((\Omega_2,\mathcal{F}_2)\) if (i) \(\forall B\in\mathcal{F}_2\), \(\omega\mapsto K(\omega,B)\) is \(\mathcal{F}_1\)-measurable; (ii) \(\forall\omega\in\Omega_1\), \(B\mapsto K(\omega,B)\) is a measure on \((\Omega_2,\mathcal{F}_2)\).	If \(K(\omega,\Omega_2)=1\) \(\forall\omega\in\Omega_1\), \(K\) is said to be a \textit{transition probability kernel}.
\end{definition}
\begin{definition}[{\citep[p.150, Definition 2.4]{cinlar2011probability}}]\label{Dregularversion}
	For each \(A\in\mathcal{F}\), let \(Q(A)\) be a version of \(P(A|\mathcal{E})=\mathbb{E}[\mathbf{1}_A|\mathcal{E}]\). Then \(Q:(\omega,A)\mapsto Q_\omega(A)\) is said to be a \textit{regular version} of the conditional probability measure \(P(\cdot\mid\mathcal{E})\) if \(Q\) is a transition probability kernel from \((\Omega,\mathcal{E})\) to \((\Omega,\mathcal{F})\). 
\end{definition}
The following theorem, proved in Appendix \ref{Sproofs}, is the reason why a regular version is important. It means that, roughly speaking, the conditional expectation is indeed obtained by integration with respect to the conditional measure. 
\begin{theorem}[Adapted from {\citep[p.150, Proposition 2.5]{cinlar2011probability}}]\label{Tregularversion}
	Suppose that \(P(\cdot\mid\mathcal{E})\) admits a regular version \(Q\). Then \(QH:\Omega\rightarrow\mathcal{H}\) with \(\omega\mapsto Q_\omega H=\int_\Omega H(\omega')Q_\omega(d\omega')\)
	is a version of \(\mathbb{E}[H\mid\mathcal{E}]\) for every Bochner \(P\)-integrable \(H\). 
\end{theorem}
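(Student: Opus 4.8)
The plan is to bootstrap from the scalar-valued statement \citep[p.150, Proposition 2.5]{cinlar2011probability}, which says that whenever \(h:\Omega\to\mathbb{R}\) is \(P\)-integrable, the map \(\omega\mapsto\int_\Omega h(\omega')\,Q_\omega(d\omega')\) is a version of \(\mathbb{E}[h\mid\mathcal{E}]\). To upgrade this to a Bochner \(P\)-integrable \(\mathcal{H}\)-valued \(H\), I must verify the three requirements of Definition \ref{Dconditionalexpectation} for \(QH\): strong \(\mathcal{E}\)-measurability, Bochner \(P\)-integrability, and \(\int_A QH\,dP=\int_A H\,dP\) for all \(A\in\mathcal{E}\). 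I would do this in the usual three stages: prove the integral identity for simple \(H\); verify that \(QH\) is well-defined, strongly \(\mathcal{E}\)-measurable and Bochner \(P\)-integrable for general \(H\); and finally pass to the limit.

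First, for \(H=\mathbf{1}_B h\) with \(B\in\mathcal{F}\) and \(h\in\mathcal{H}\), one has \(Q_\omega H=Q_\omega(B)\,h\); since \(\omega\mapsto Q_\omega(B)\) is \(\mathcal{E}\)-measurable (Definitions \ref{Dtransitionprobabilitykernel} and \ref{Dregularversion}) and is a version of \(P(B\mid\mathcal{E})\), for every \(A\in\mathcal{E}\)
\[
\int_A Q_\omega H\,dP(\omega)=h\int_A Q_\omega(B)\,dP(\omega)=h\,P(A\cap B)=\int_A \mathbf{1}_B h\,dP ,
\]
and linearity extends this to every \(\mathcal{F}\)-simple \(H\).

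For general \(H\), choose simple functions \(H_n\to H\) pointwise with \(\lVert H_n\rVert_\mathcal{H}\le 2\lVert H\rVert_\mathcal{H}\) (the standard Bochner approximation). Applying the scalar result to \(\lVert H\rVert_\mathcal{H}\) shows \(\omega\mapsto\int_\Omega\lVert H\rVert_\mathcal{H}\,dQ_\omega\) is a version of \(\mathbb{E}[\lVert H\rVert_\mathcal{H}\mid\mathcal{E}]\), hence \(P\)-a.s.\ finite; off that null set \(H\) is Bochner \(Q_\omega\)-integrable (separability of \(\mathcal{H}\) supplies the required strong \(Q_\omega\)-measurability), so \(QH\) is defined \(P\)-a.s. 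Weak \(\mathcal{E}\)-measurability of \(QH\) follows since \(\phi(Q_\omega H)=\int_\Omega\phi(H)\,dQ_\omega\) is \(\mathcal{E}\)-measurable for each \(\phi\in\mathcal{H}^*\) by the scalar result; Pettis's theorem together with separability of \(\mathcal{H}\) then gives strong \(\mathcal{E}\)-measurability, and \(\int_\Omega\lVert Q_\omega H\rVert_\mathcal{H}\,dP(\omega)\le\int_\Omega\mathbb{E}[\lVert H\rVert_\mathcal{H}\mid\mathcal{E}]\,dP=\mathbb{E}[\lVert H\rVert_\mathcal{H}]<\infty\) gives Bochner \(P\)-integrability. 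Finally, for \(A\in\mathcal{E}\), dominated convergence for the Bochner integral gives \(\int_A H_n\,dP\to\int_A H\,dP\); for \(P\)-a.e.\ \(\omega\) the bound \(2\lVert H\rVert_\mathcal{H}\) is \(Q_\omega\)-integrable, so \(Q_\omega H_n\to Q_\omega H\) by dominated convergence; and since \(\lVert Q_\omega H_n\rVert_\mathcal{H}\le 2\int_\Omega\lVert H\rVert_\mathcal{H}\,dQ_\omega\) is \(P\)-integrable in \(\omega\), a further dominated-convergence step yields \(\int_A Q_\omega H_n\,dP(\omega)\to\int_A Q_\omega H\,dP(\omega)\). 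Combined with the simple-function identity for each \(H_n\), this gives \(\int_A QH\,dP=\int_A H\,dP\), as required.

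The delicate point is exactly the double limit in the last stage: ``\(H\) is Bochner \(Q_\omega\)-integrable'' holds only for \(P\)-a.e.\ \(\omega\), not every \(\omega\), and the passage \(n\to\infty\) needs an integrable dominating function both for the inner (\(Q_\omega\)) and the outer (\(P\)) integral — this is where the stronger integrability hypothesis and the separability of \(\mathcal{H}\) (through Pettis measurability) earn their keep. A slicker but essentially equivalent alternative is to fix a countable family \(\{\phi_n\}\subset\mathcal{H}^*\) separating points of \(\mathcal{H}\) (available by separability), observe that \(\phi_n(QH)=Q_\bullet(\phi_n(H))\) is a version of \(\mathbb{E}[\phi_n(H)\mid\mathcal{E}]=\phi_n(\mathbb{E}[H\mid\mathcal{E}])\) by the scalar result and the commutation of conditional expectation with bounded linear functionals, and conclude \(QH=\mathbb{E}[H\mid\mathcal{E}]\) \(P\)-a.s.\ off a single null set; one then still argues, as in the second stage above, that \(QH\) is genuinely an \(\mathcal{H}\)-valued, strongly \(\mathcal{E}\)-measurable, Bochner \(P\)-integrable object.
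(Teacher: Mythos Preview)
Your argument is correct and follows essentially the same route as the paper: verify the claim for simple functions using that \(Q_\omega(B)\) is a version of \(P(B\mid\mathcal{E})\), then approximate a general \(H\) by simple \(H_n\) and pass to the limit via dominated convergence on both the inner and outer integrals. The minor differences are that the paper obtains \(\mathcal{E}\)-measurability of \(QH\) directly as a pointwise limit of the \(\mathcal{E}\)-measurable \(QH_n\) (citing \cite[Theorem 10]{dinculeanu2000vector}) rather than via Pettis's theorem, and that the paper simply asserts \(Q_\omega H\) is well-defined for every \(\omega\) whereas you correctly note this holds only \(P\)-a.e.; your treatment of that point is in fact more careful than the paper's. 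Your alternative argument via a separating family in \(\mathcal{H}^*\) does not appear in the paper but is also sound.
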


\subsection{Vector-valued RKHS regression}\label{SSfunctionregression}
In this subsection, we introduce the theory of vector-valued RKHS regression, based on operator-valued kernels. Let \(\mathcal{H}\) be a Hilbert space, which will be the output space of regression. 
\begin{definition}[{\citep[Definition 1]{carmeli2006vector}}]\label{Drkhsvectorvalued}
	An \textit{\(\mathcal{H}\)-valued RKHS} on \(\mathcal{Z}\) is a Hilbert space \(\mathcal{G}\) such that 1. the elements of \(\mathcal{G}\) are functions \(\mathcal{Z}\rightarrow\mathcal{H}\); 2. \(\forall z\in\mathcal{Z}\), \(\exists C_z>0\) such that \(\lVert F(z)\rVert_\mathcal{H}\leq C_z\lVert F\rVert_\mathcal{G}\) \(\forall F\in\mathcal{G}\).
\end{definition}
Next, we let \(\mathcal{L}(\mathcal{H})\) denote the Banach space of bounded linear operators from \(\mathcal{H}\) into itself. 
\begin{definition}[{\citep[Definition 2]{carmeli2006vector}}]\label{Dpositivedefinitevectorvalued}
	A \textit{\(\mathcal{H}\)-kernel of positive type} on \(\mathcal{Z}\times\mathcal{Z}\) is a map \(\Gamma:\mathcal{Z}\times\mathcal{Z}\rightarrow\mathcal{L}(\mathcal{H})\) such that \(\forall N\in\mathbb{N}\), \(\forall z_1,...,z_N\in\mathcal{Z}\) and \(\forall c_1,...,c_N\in\mathbb{R}\), \(\sum^N_{i,j=1}c_ic_j\langle\Gamma(z_j,z_i)h,h\rangle_\mathcal{H}\geq0\) \(\forall h\in\mathcal{H}\).
\end{definition}
Analogously to the scalar case, it can be shown that any \(\mathcal{H}\)-valued RKHS \(\mathcal{G}\) possesses a \textit{reproducing kernel}, which is an \(\mathcal{H}\)-kernel of positive type \(\Gamma\) satisfying, for any \(z,z'\in\mathcal{Z}\), \(h,h'\in\mathcal{H}\) and \(F\in\mathcal{G}\), \(\langle F(z),h\rangle_\mathcal{H}=\langle F,\Gamma(\cdot,z)h\rangle_\mathcal{G}\) and \(\langle h,\Gamma(z,z')(h')\rangle_\mathcal{H}=\langle\Gamma(\cdot,z)(h),\Gamma(\cdot,z')(h')\rangle_\mathcal{G}\). 

Now suppose we want to perform regression with input space \(\mathcal{Z}\) and output space \(\mathcal{H}\), by minimising
\begin{equation}\label{Emiccheli4.1}
\frac{1}{n}\sum_{j=1}^n\lVert h_j-F(z_j)\rVert_\mathcal{H}^2+\lambda\lVert F\rVert_\mathcal{G}^2,
\end{equation}
where \(\lambda>0\) is a regularisation parameter and \(\{(z_j,h_j):j=1,...,n\}\subseteq\mathcal{Z}\times\mathcal{H}\). There is a corresponding representer theorem (here, \(\delta_{jl}\) is the Kronecker delta):
\begin{theorem}[{\citep[Theorem 4.1]{micchelli2005learning}}]\label{Tmiccheli4.1}
	If \(\hat{F}\) minimises (\ref{Emiccheli4.1}) in \(\mathcal{G}\), it is unique and has the form \(\hat{F}=\sum_{j=1}^n\Gamma(\cdot,z_j)(u_j)\) where the coefficients \(\{u_j:j=1,...,n\}\subseteq\mathcal{H}\) are the unique solution of the linear equations
	\(\sum_{l=1}^n(\Gamma(z_j,z_l)+n\lambda\delta_{jl})(u_l)=h_j,j=1,...,n\).
\end{theorem}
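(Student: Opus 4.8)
The plan is to handle this by a standard strict-convexity argument adapted to the operator-valued setting, constructing the minimiser explicitly rather than invoking an abstract existence theorem. Write \(\mathcal{E}(F)=\frac{1}{n}\sum_{j=1}^n\lVert h_j-F(z_j)\rVert_{\mathcal{H}}^2+\lambda\lVert F\rVert_{\mathcal{G}}^2\) for the objective in (\ref{Emiccheli4.1}). Since each evaluation map \(F\mapsto F(z_j)\) is a bounded linear operator \(\mathcal{G}\to\mathcal{H}\) by Definition \ref{Drkhsvectorvalued}, \(\mathcal{E}\) is Fréchet differentiable on \(\mathcal{G}\); it is convex as a sum of convex functions, and strictly convex because \(\lambda>0\) makes the term \(\lambda\lVert F\rVert_{\mathcal{G}}^2\) strictly convex. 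Consequently \(\mathcal{E}\) has at most one minimiser, and \(\hat F\) minimises \(\mathcal{E}\) if and only if its Fréchet derivative \(D\mathcal{E}(\hat F)\) vanishes.

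First I would compute that derivative. For \(F,G\in\mathcal{G}\), expanding the norms gives \(D\mathcal{E}(F)[G]=\frac{2}{n}\sum_j\langle F(z_j)-h_j,G(z_j)\rangle_{\mathcal{H}}+2\lambda\langle F,G\rangle_{\mathcal{G}}\), and applying the reproducing property \(\langle F(z_j)-h_j,G(z_j)\rangle_{\mathcal{H}}=\langle\Gamma(\cdot,z_j)(F(z_j)-h_j),G\rangle_{\mathcal{G}}\) of the operator-valued kernel turns this into \(D\mathcal{E}(F)[G]=\langle\frac{2}{n}\sum_j\Gamma(\cdot,z_j)(F(z_j)-h_j)+2\lambda F,\,G\rangle_{\mathcal{G}}\) for every \(G\). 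Hence \(D\mathcal{E}(F)=0\) is equivalent to \(\lambda F=\frac{1}{n}\sum_{j=1}^n\Gamma(\cdot,z_j)(h_j-F(z_j))\). In particular any minimiser is automatically of the claimed form \(\hat F=\sum_{j=1}^n\Gamma(\cdot,z_j)(u_j)\) with \(u_j:=\frac{1}{n\lambda}(h_j-F(z_j))\in\mathcal{H}\); substituting \(F(z_j)=\sum_l\Gamma(z_j,z_l)(u_l)\) into this identity and rearranging yields exactly \(\sum_{l=1}^n(\Gamma(z_j,z_l)+n\lambda\delta_{jl})(u_l)=h_j\).

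It then remains to produce such a solution and argue its uniqueness, and this is the step I would be most careful about. Consider the block operator \(\mathbf{A}=(\Gamma(z_j,z_l))_{j,l=1}^n\) on the Hilbert space \(\mathcal{H}^n\). It is bounded, self-adjoint, and positive, since for \(u=(u_1,\dots,u_n)\) the reproducing property gives \(\langle u,\mathbf{A}u\rangle_{\mathcal{H}^n}=\langle\sum_j\Gamma(\cdot,z_j)(u_j),\sum_l\Gamma(\cdot,z_l)(u_l)\rangle_{\mathcal{G}}=\lVert\sum_j\Gamma(\cdot,z_j)(u_j)\rVert_{\mathcal{G}}^2\geq0\). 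Therefore \(\mathbf{A}+n\lambda I\) is bounded below by \(n\lambda>0\), hence boundedly invertible, so the linear system has the unique solution \(u=(\mathbf{A}+n\lambda I)^{-1}(h_1,\dots,h_n)\); reversing the computation above shows \(\hat F=\sum_j\Gamma(\cdot,z_j)(u_j)\) satisfies \(D\mathcal{E}(\hat F)=0\), so it is a minimiser, strict convexity forces it to be the only one, and uniqueness of the coefficients follows from invertibility of \(\mathbf{A}+n\lambda I\). The only genuinely delicate point is the standard but worth-stating fact that for a strictly convex, Fréchet-differentiable functional on a Hilbert space, vanishing of the derivative is necessary and sufficient for being the unique global minimiser; the remainder is bookkeeping with the reproducing property of \(\Gamma\).
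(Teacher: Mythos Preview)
The paper does not supply its own proof of this result; it is quoted as a citation from Micchelli and Pontil, and none of the proofs collected in the appendix address it. Your argument is correct and follows the standard route: strict convexity of \(\mathcal{E}\) gives at most one minimiser, the first-order optimality condition together with the reproducing property of \(\Gamma\) forces any critical point into the representer form \(\sum_j\Gamma(\cdot,z_j)(u_j)\), and positivity of the block operator \(\mathbf{A}\) on \(\mathcal{H}^n\) makes \(\mathbf{A}+n\lambda I\) boundedly invertible, so the linear system has a unique solution which you then verify is indeed the minimiser. There is nothing within the paper itself to compare your proof against.
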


\section{Conditional mean embedding}\label{Scme}
We are now ready to introduce a formal definition of the conditional mean embedding of \(X\) given \(Z\).
\begin{definition}\label{Dcme}
	Assuming \(X\) satisfies the integrability condition (\ref{Eintegrability}), we define the \textit{conditional mean embedding} of \(X\) given \(Z\) as \(\mu_{P_{X|Z}}\vcentcolon=\mathbb{E}_{X|Z}[k_\mathcal{X}(X,\cdot)\mid Z]\). 
\end{definition}
This is a direct extension of the unconditional kernel mean embedding, \(\mu_{P_X}=\mathbb{E}_X[k_\mathcal{X}(X,\cdot)]\), but instead of being a fixed element in \(\mathcal{H}_\mathcal{X}\), \(\mu_{P_{X|Z}}\) is a \(Z\)-measurable random variable taking values in \(\mathcal{H}_\mathcal{X}\) (see Definition \ref{Dconditionalexpectation}). Also, for any function \(f:\mathcal{X}\rightarrow\mathbb{R}\), \(\mathbb{E}_{X|Z}[f(X)\mid Z]\) is a real-valued \(Z\)-measurable random variable. The following lemma is analogous to Lemma \ref{Linterchange}. 
\begin{lemma}\label{Lconditionalinterchange}
	For any \(f\in\mathcal{H}_\mathcal{X}\), \(\mathbb{E}_{X|Z}[f(X)\mid Z]=\langle f,\mu_{P_{X|Z}}\rangle_{\mathcal{H}_\mathcal{X}}\) almost surely.
\end{lemma}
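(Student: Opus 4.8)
The plan is to reduce the claim to the defining property of the (vector-valued) conditional expectation together with the reproducing property and the interchange lemma (Lemma~\ref{Linterchange}) for the \emph{unconditional} embedding. Fix $f\in\mathcal{H}_\mathcal{X}$. Both sides of the asserted identity are $\sigma(Z)$-measurable, real-valued random variables: the left-hand side by definition of conditional expectation, the right-hand side because $\mu_{P_{X|Z}}=\mathbb{E}_{X|Z}[k_\mathcal{X}(X,\cdot)\mid Z]$ is $\sigma(Z)$-measurable as an $\mathcal{H}_\mathcal{X}$-valued random variable and the inner product against the fixed element $f$ is a continuous (hence measurable) map $\mathcal{H}_\mathcal{X}\to\mathbb{R}$. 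So it suffices to show that the two sides have the same integral over every $A\in\sigma(Z)$, by almost-sure uniqueness of conditional expectations (cited after Definition~\ref{Dconditionalexpectation}).

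The key step is the interchange of the inner product $\langle f,\cdot\rangle_{\mathcal{H}_\mathcal{X}}$ with the Bochner integral. Concretely, for $A\in\sigma(Z)$,
\begin{align*}
\int_A \langle f,\mu_{P_{X|Z}}\rangle_{\mathcal{H}_\mathcal{X}}\,dP
&=\Big\langle f,\int_A \mu_{P_{X|Z}}\,dP\Big\rangle_{\mathcal{H}_\mathcal{X}}
=\Big\langle f,\int_A k_\mathcal{X}(X,\cdot)\,dP\Big\rangle_{\mathcal{H}_\mathcal{X}},
\end{align*}
where the first equality uses that a bounded linear functional commutes with the Bochner integral (e.g.\ \citep[Ch.~II]{dinculeanu2000vector}), and the second uses the defining property of $\mathbb{E}[\,\cdot\mid Z]$ applied to the integrable $\mathcal{H}_\mathcal{X}$-valued random variable $k_\mathcal{X}(X,\cdot)$ (integrability is exactly condition~(\ref{Eintegrability})). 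Applying the same linear-functional/Bochner-integral interchange once more, the last expression equals $\int_A \langle f,k_\mathcal{X}(X,\cdot)\rangle_{\mathcal{H}_\mathcal{X}}\,dP=\int_A f(X)\,dP$ by the reproducing property. On the other hand, $\int_A f(X)\,dP=\int_A \mathbb{E}_{X|Z}[f(X)\mid Z]\,dP$ by definition of the conditional expectation of the real random variable $f(X)$, which is $P$-integrable since $|f(X)|\le\lVert f\rVert_{\mathcal{H}_\mathcal{X}}\sqrt{k_\mathcal{X}(X,X)}$ and~(\ref{Eintegrability}) holds. Comparing, $\int_A \langle f,\mu_{P_{X|Z}}\rangle_{\mathcal{H}_\mathcal{X}}\,dP=\int_A \mathbb{E}_{X|Z}[f(X)\mid Z]\,dP$ for all $A\in\sigma(Z)$, so the two $\sigma(Z)$-measurable integrable random variables agree almost surely.

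The main obstacle, and the only nontrivial point, is justifying that the bounded linear functional $\langle f,\cdot\rangle_{\mathcal{H}_\mathcal{X}}$ passes through the Bochner integral on the sets $A\in\sigma(Z)$ and, relatedly, that $\mu_{P_{X|Z}}$ is genuinely Bochner $P$-integrable so that $\int_A\mu_{P_{X|Z}}\,dP$ makes sense; both follow from standard Bochner-integration facts (the conditional expectation of a Bochner-integrable random variable is Bochner-integrable, and continuous linear maps commute with Bochner integrals), but they must be invoked carefully. Everything else—measurability of both sides, the reproducing property, and uniqueness of conditional expectations—is routine. I would also remark that this is the exact conditional analogue of Lemma~\ref{Linterchange}, recovered by taking $\mathcal{E}$ trivial.
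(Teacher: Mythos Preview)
Your proof is correct and follows essentially the same route as the paper: verify that $\langle f,\mu_{P_{X|Z}}\rangle_{\mathcal{H}_\mathcal{X}}$ is $\sigma(Z)$-measurable and integrable, then check the defining integral identity $\int_A\langle f,\mu_{P_{X|Z}}\rangle_{\mathcal{H}_\mathcal{X}}\,dP=\int_A f(X)\,dP$ for all $A\in\sigma(Z)$ by commuting the bounded linear functional $\langle f,\cdot\rangle_{\mathcal{H}_\mathcal{X}}$ through the Bochner integral, invoking the defining property of the vector-valued conditional expectation, and applying the reproducing property. The only cosmetic difference is that the paper cites \citep[p.30, Theorem 36]{dinculeanu2000vector} explicitly for the interchange step, whereas you refer to it more generically.
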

Next, assuming \(X\) and \(Y\) satisfy (\ref{Estrongerintegrability}), we define \(\mu_{P_{XY|Z}}\vcentcolon=\mathbb{E}_{XY|Z}[k_\mathcal{X}(X,\cdot)\otimes k_\mathcal{Y}(Y,\cdot)\mid Z]\), a \(Z\)-measurable, \(\mathcal{H}_\mathcal{X}\otimes\mathcal{H}_\mathcal{Y}\)-valued random variable. We have the following analogy of Lemma \ref{Ltensorinterchange}: 
\begin{lemma}\label{Lconditionaltensorinterchange}
	For any pair \(f\in\mathcal{H}_\mathcal{X}\) and \(g\in\mathcal{H}_\mathcal{Y}\), \(\mathbb{E}_{\mathit{XY}|Z}[f(X)g(Y)\mid Z]=\langle f\otimes g,\mu_{P_{\mathit{XY}|Z}}\rangle_{\mathcal{H}_\mathcal{X}\otimes\mathcal{H}_\mathcal{Y}}\) almost surely. 
\end{lemma}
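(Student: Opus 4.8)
The plan is to reduce the statement to an application of the defining property of conditional expectation (Definition \ref{Dconditionalexpectation}), mirroring the structure of the proof of Lemma \ref{Lconditionalinterchange} but now in the tensor product RKHS. Fix \(f\in\mathcal{H}_\mathcal{X}\) and \(g\in\mathcal{H}_\mathcal{Y}\), so that \(f\otimes g\in\mathcal{H}_\mathcal{X}\otimes\mathcal{H}_\mathcal{Y}\) by Theorem \ref{Tberlinet13}. The key observation is that the inner product against a fixed element is a bounded linear functional on \(\mathcal{H}_\mathcal{X}\otimes\mathcal{H}_\mathcal{Y}\), and bounded linear maps commute with Bochner conditional expectation. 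Concretely, I would first record the general fact: if \(H\) is a Bochner \(P\)-integrable \(\mathcal{H}\)-valued random variable and \(L:\mathcal{H}\to\mathbb{R}\) is a bounded linear functional, then \(L(\mathbb{E}[H\mid\mathcal{E}])=\mathbb{E}[L(H)\mid\mathcal{E}]\) almost surely. This follows directly from Definition \ref{Dconditionalexpectation}: for any \(A\in\mathcal{E}\), \(\int_A L(\mathbb{E}[H\mid\mathcal{E}])\,dP = L\!\left(\int_A \mathbb{E}[H\mid\mathcal{E}]\,dP\right) = L\!\left(\int_A H\,dP\right) = \int_A L(H)\,dP\), where the interchange of \(L\) with the Bochner integral is the standard property of Bochner integrals, and \(L(\mathbb{E}[H\mid\mathcal{E}])\) is \(\mathcal{E}\)-measurable as the composition of a continuous map with an \(\mathcal{E}\)-measurable map; uniqueness of conditional expectation then gives the claim.

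With this in hand, I would apply it with \(\mathcal{H}=\mathcal{H}_\mathcal{X}\otimes\mathcal{H}_\mathcal{Y}\), \(\mathcal{E}=\sigma(Z)\), the random variable \(H=k_\mathcal{X}(X,\cdot)\otimes k_\mathcal{Y}(Y,\cdot)\) (which is Bochner \(P_{XY}\)-integrable under (\ref{Estrongerintegrability}), as noted in the excerpt, so that \(\mu_{P_{XY|Z}}=\mathbb{E}_{XY|Z}[H\mid Z]\) is well-defined), and \(L=\langle f\otimes g,\cdot\rangle_{\mathcal{H}_\mathcal{X}\otimes\mathcal{H}_\mathcal{Y}}\). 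This yields \(\langle f\otimes g,\mu_{P_{XY|Z}}\rangle_{\mathcal{H}_\mathcal{X}\otimes\mathcal{H}_\mathcal{Y}} = \mathbb{E}_{XY|Z}\!\left[\langle f\otimes g,\, k_\mathcal{X}(X,\cdot)\otimes k_\mathcal{Y}(Y,\cdot)\rangle_{\mathcal{H}_\mathcal{X}\otimes\mathcal{H}_\mathcal{Y}}\,\middle|\,Z\right]\) almost surely. It then remains to identify the integrand: by the kernel factorisation in Theorem \ref{Tberlinet13} together with the defining inner product relation for the tensor product kernel, \(\langle f\otimes g,\, k_\mathcal{X}(X,\cdot)\otimes k_\mathcal{Y}(Y,\cdot)\rangle_{\mathcal{H}_\mathcal{X}\otimes\mathcal{H}_\mathcal{Y}} = \langle f, k_\mathcal{X}(X,\cdot)\rangle_{\mathcal{H}_\mathcal{X}}\langle g, k_\mathcal{Y}(Y,\cdot)\rangle_{\mathcal{H}_\mathcal{Y}} = f(X)g(Y)\) by the reproducing property in each factor. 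Substituting this gives exactly \(\mathbb{E}_{XY|Z}[f(X)g(Y)\mid Z]\), completing the argument.

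I expect the main obstacle to be largely bookkeeping rather than conceptual: one must be careful that \(\langle f\otimes g,\cdot\rangle\) is genuinely bounded (immediate, with norm \(\lVert f\rVert_{\mathcal{H}_\mathcal{X}}\lVert g\rVert_{\mathcal{H}_\mathcal{Y}}\)), that the interchange of a bounded functional with the Bochner integral over \(A\in\mathcal{E}\) is legitimate (standard, e.g. \citep[p.21, Theorem 6]{dinculeanu2000vector}-type results), and that all identities hold \(P\)-almost surely rather than everywhere, since conditional expectations are only defined up to null sets. A minor subtlety worth flagging is that the reproducing-property computation identifying the integrand with \(f(X)g(Y)\) holds pointwise in \(\omega\), so it may be invoked before taking the conditional expectation; this keeps the almost-sure qualifiers confined to the single application of uniqueness of conditional expectation. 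No new assumptions beyond (\ref{Estrongerintegrability}) are needed.
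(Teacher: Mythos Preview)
Your proposal is correct and follows essentially the same route as the paper's proof: both verify directly that \(\langle f\otimes g,\mu_{P_{XY|Z}}\rangle_{\mathcal{H}_\mathcal{X}\otimes\mathcal{H}_\mathcal{Y}}\) satisfies the defining property of \(\mathbb{E}_{XY|Z}[f(X)g(Y)\mid Z]\) by interchanging the bounded linear functional \(\langle f\otimes g,\cdot\rangle\) with the Bochner integral over sets \(A\in\sigma(Z)\), and then invoke the reproducing property in \(\mathcal{H}_\mathcal{X}\otimes\mathcal{H}_\mathcal{Y}\). The only difference is presentational---you first isolate the general fact \(L(\mathbb{E}[H\mid\mathcal{E}])=\mathbb{E}[L(H)\mid\mathcal{E}]\) and then specialise, whereas the paper carries out the verification in one pass for the specific functional.
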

By Lemmas \ref{Lconditionalinterchange} and \ref{Lconditionaltensorinterchange}, for any pair \(f\in\mathcal{H}_\mathcal{X}\) and \(g\in\mathcal{H}_\mathcal{Y}\), 
\begin{alignat*}{2}
\langle f\otimes g,\mu_{P_{\mathit{XY}|Z}}-\mu_{P_{X|Z}}\otimes\mu_{P_{Y|Z}}&\rangle_{\mathcal{H}_\mathcal{X}\otimes\mathcal{H}_\mathcal{Y}}=\text{Cov}_{\mathit{XY}|Z}(f(X),g(Y)\mid Z)\\
&=\mathbb{E}_{\mathit{XY}|Z}[f(X)g(Y)\mid Z]-\mathbb{E}_{X|Z}[f(X)\mid Z]\mathbb{E}_{Y|Z}[g(Y)\mid Z]
\end{alignat*}
almost surely. Hence, we define the \textit{conditional cross-covariance operator} as \(\mathcal{C}_{YX|Z}\vcentcolon=T(\mu_{P_{XY|Z}}-\mu_{P_{X|Z}}\otimes\mu_{P_{Y|Z}})\) (see Section \ref{SSrkhs} for the definition of \(T\)). 

\subsection{Comparison with existing definitions}\label{SScomparison}
As previously mentioned, the idea of CMEs and conditional cross-covariance operators is not a novel one, yet our development of the theory above differs significantly from the existing works. In this subsection, we review the previous approaches and compare them to ours. 

The prevalent definition of CMEs in the literature is the following. We first need to endow the conditioning space \(\mathcal{Z}\) with a scalar kernel, say \(k_\mathcal{Z}:\mathcal{Z}\times\mathcal{Z}\rightarrow\mathbb{R}\), with corresponding RKHS \(\mathcal{H}_\mathcal{Z}\). 
\begin{definition}[{\citep[Definition 3]{song2009hilbert}}]\label{Dconditionalmeanembedding}
	The conditional mean embedding of the conditional distribution \(P(X\mid Z)\) is the operator \(\mathcal{U}_{X|Z}:\mathcal{H}_\mathcal{Z}\rightarrow\mathcal{H}_\mathcal{X}\) defined by \(\mathcal{U}_{X|Z}=\mathcal{C}_{\mathit{XZ}}\mathcal{C}_{\mathit{ZZ}}^{-1}\), where \(\mathcal{C}_{\mathit{XZ}}\) and \(\mathcal{C}_{\mathit{ZZ}}\) are unconditional (cross-)covariance operators as defined in Section \ref{SSrkhs}. 
\end{definition}
As noted by \citep{song2009hilbert}, the motivation for this comes from \citep[Theorem 2]{fukumizu2004dimensionality}, which states that for any \(f\in\mathcal{H}_\mathcal{X}\), if \(\mathbb{E}_{X|Z}[f(X)\mid Z=\cdot]\in\mathcal{H}_\mathcal{Z}\), then  \(\mathcal{C}_{\mathit{ZZ}}\mathbb{E}_{X|Z}[f(X)\mid Z=\cdot]=\mathcal{C}_{\mathit{ZX}}f\). This relation can be used to prove the following theorem, which is analogous to Lemma \ref{Lconditionalinterchange}. 
\begin{theorem}[{\citep[Theorem 4]{song2009hilbert}}]\label{Tsong4}
	For \(f\in\mathcal{H}_\mathcal{X}\), assuming \(\mathbb{E}_{X|Z}[f(X)\mid Z=\cdot]\in\mathcal{H}_\mathcal{Z}\), \(\mathcal{U}_{X|Z}\) satisfies: 1. \(\mu_{X|z}\vcentcolon=\mathbb{E}_{X|z}[k_\mathcal{X}(X,\cdot)\mid Z=z]=\mathcal{U}_{X|Z}k_\mathcal{Z}(z,\cdot)\); 2. \(\mathbb{E}_{X|z}[f(X)\mid Z=z]=\langle f,\mu_{X|z}\rangle_{\mathcal{H}_\mathcal{X}}\).
\end{theorem}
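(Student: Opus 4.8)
The plan is to obtain both assertions from the single relation $\mathcal{C}_{\mathit{ZZ}}\,\mathbb{E}_{X|Z}[f(X)\mid Z=\cdot]=\mathcal{C}_{\mathit{ZX}}f$ recalled above (valid under the stated hypothesis by \citet[Theorem 2]{fukumizu2004dimensionality}), combined with nothing more than the reproducing property of $k_\mathcal{Z}$, the adjoint identities $\mathcal{C}_{\mathit{ZX}}=\mathcal{C}_{\mathit{XZ}}^{*}$ and $\mathcal{C}_{\mathit{ZZ}}=\mathcal{C}_{\mathit{ZZ}}^{*}$, and the elementary fact that a bounded linear functional passes through a Bochner integral (this last being exactly the content of Lemma \ref{Linterchange}). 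The invertibility of $\mathcal{C}_{\mathit{ZZ}}$ is treated as built into Definition \ref{Dconditionalmeanembedding}. I would first establish the identity $\mathbb{E}_{X|z}[f(X)\mid Z=z]=\langle f,\mathcal{U}_{X|Z}k_\mathcal{Z}(z,\cdot)\rangle_{\mathcal{H}_\mathcal{X}}$ --- the substantive step, which does not mention $\mu_{X|z}$ at all --- then deduce assertion 1, and finally read off assertion 2 as stated.

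\textbf{Substantive step.} Fix $f$ with $g_f\vcentcolon=\mathbb{E}_{X|Z}[f(X)\mid Z=\cdot]\in\mathcal{H}_\mathcal{Z}$. The Fukumizu relation gives $\mathcal{C}_{\mathit{ZZ}}g_f=\mathcal{C}_{\mathit{ZX}}f$, hence $g_f=\mathcal{C}_{\mathit{ZZ}}^{-1}\mathcal{C}_{\mathit{ZX}}f$. For every $z$ I would then compute, applying the reproducing property of $k_\mathcal{Z}$ and sliding $\mathcal{C}_{\mathit{ZZ}}^{-1}$ and $\mathcal{C}_{\mathit{ZX}}$ to the other side of the inner product via their adjoints (using $(\mathcal{C}_{\mathit{ZZ}}^{-1})^{*}=\mathcal{C}_{\mathit{ZZ}}^{-1}$ by self-adjointness and $\mathcal{C}_{\mathit{ZX}}^{*}=\mathcal{C}_{\mathit{XZ}}$),
\begin{align*}
\mathbb{E}_{X|z}[f(X)\mid Z=z]=g_f(z)=\langle g_f,k_\mathcal{Z}(z,\cdot)\rangle_{\mathcal{H}_\mathcal{Z}}&=\langle \mathcal{C}_{\mathit{ZZ}}^{-1}\mathcal{C}_{\mathit{ZX}}f,\,k_\mathcal{Z}(z,\cdot)\rangle_{\mathcal{H}_\mathcal{Z}}\\
&=\langle f,\,\mathcal{C}_{\mathit{XZ}}\mathcal{C}_{\mathit{ZZ}}^{-1}k_\mathcal{Z}(z,\cdot)\rangle_{\mathcal{H}_\mathcal{X}}=\langle f,\,\mathcal{U}_{X|Z}k_\mathcal{Z}(z,\cdot)\rangle_{\mathcal{H}_\mathcal{X}}.
\end{align*}

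\textbf{Assertions 1 and 2.} For assertion 1, I would first note that $\mu_{X|z}$ is the unconditional kernel mean embedding of the regular conditional distribution $P(\cdot\mid Z=z)$, hence a well-defined element of $\mathcal{H}_\mathcal{X}$ for $P_Z$-a.e.\ $z$ --- the Bochner integral is finite because (\ref{Eintegrability}) forces $\int_\mathcal{X}\sqrt{k_\mathcal{X}(x,x)}\,P(dx\mid Z=z)<\infty$ almost surely. Testing against an arbitrary $f\in\mathcal{H}_\mathcal{X}$ and invoking Lemma \ref{Linterchange} for the measure $P(\cdot\mid Z=z)$ gives $\langle f,\mu_{X|z}\rangle_{\mathcal{H}_\mathcal{X}}=\mathbb{E}_{X|z}[f(X)\mid Z=z]$, which by the display equals $\langle f,\mathcal{U}_{X|Z}k_\mathcal{Z}(z,\cdot)\rangle_{\mathcal{H}_\mathcal{X}}$; since this holds for all $f$ (a countable total set suffices, by separability of $\mathcal{H}_\mathcal{X}$), the two elements coincide, which is assertion 1. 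Substituting $\mu_{X|z}=\mathcal{U}_{X|Z}k_\mathcal{Z}(z,\cdot)$ back into the display yields assertion 2. There is no circularity: the substantive step is proved without reference to $\mu_{X|z}$.

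\textbf{Main obstacle.} The algebra is light once the Fukumizu relation is granted, so the real difficulty is not a calculation but the hypotheses embedded in the statement. The membership $\mathbb{E}_{X|Z}[f(X)\mid Z=\cdot]\in\mathcal{H}_\mathcal{Z}$ is precisely the stringent and typically-violated condition that this paper aims to dispense with; it must hold for \emph{every} $f$ (not merely one) for assertion 1, and the pointwise equalities above are genuinely valid only on the set where the regular conditional distribution exists, so assertion 1 should be understood as holding $P_Z$-almost everywhere. A second delicate point is that $\mathcal{C}_{\mathit{ZZ}}$ is compact and therefore not boundedly invertible when $\mathcal{H}_\mathcal{Z}$ is infinite-dimensional, so ``$\mathcal{C}_{\mathit{ZZ}}^{-1}$'' has to be interpreted on a suitable domain (its range, or via a Tikhonov-type regularisation) and the step $g_f=\mathcal{C}_{\mathit{ZZ}}^{-1}\mathcal{C}_{\mathit{ZX}}f$ justified there --- one more manifestation of the very assumptions this paper sets out to remove.
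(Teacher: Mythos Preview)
Your proof is correct and follows precisely the route the paper indicates: the paper does not supply a detailed proof of this cited result but merely remarks that it follows from the Fukumizu relation $\mathcal{C}_{\mathit{ZZ}}\,\mathbb{E}_{X|Z}[f(X)\mid Z=\cdot]=\mathcal{C}_{\mathit{ZX}}f$, and you have unpacked exactly that hint into a complete argument via adjoints and the reproducing property. Your identification of the hidden hypotheses --- that the membership condition must hold for \emph{every} $f$ to obtain assertion 1, that $\mathcal{C}_{\mathit{ZZ}}^{-1}$ is at best densely defined, and that the equalities are $P_Z$-almost-sure --- is also accurate and mirrors the paper's own critique of this operator-based framework in Section \ref{SScomparison}.
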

Now we highlight the key differences between this approach and ours. Firstly, this approach requires the endowment of a kernel \(k_\mathcal{Z}\) on the conditioning space \(\mathcal{Z}\), and defines the CME as an \textit{operator} from \(\mathcal{H}_\mathcal{Z}\) to \(\mathcal{H}_\mathcal{X}\). By contrast, Definition \ref{Dcme} did not consider any kernel or function on \(\mathcal{Z}\), and defined the CME as a \textit{Bochner conditional expectation} given \(\sigma(Z)\). We argue that it is more natural not to endow the \textit{conditioning space} with a kernel before the estimation stage. Secondly, the operator-based approach assumes that \(\mathbb{E}_{X|Z}[f(X)|Z=\cdot]\), as a function in \(z\), lives in \(\mathcal{H}_\mathcal{Z}\). This is a severe restriction; it is stated in \citep{song2009hilbert} that this assumption, while true for finite domains with characteristic kernels, is not necessarily true for continuous domains, and \citep{fukumizu2013kernel} gives a simple counterexample using the Gaussian kernel. Lastly, it also assumes that \(\mathcal{C}_{\mathit{ZZ}}^{-1}\) exists, which is another unrealistic assumption. \citep{fukumizu2013kernel} mentions that this assumption is too strong in many situations, and gives a counterexample using the Gaussian kernel. The most common remedy is to resort to the regularised version \(\mathcal{C}_{\mathit{XZ}}(\mathcal{C}_{\mathit{ZZ}}+\lambda I)^{-1}\) and treat it as an approximation of \(\mathcal{U}_{X|Z}\). These assumptions have been clarified and slightly weakened in \citep{klebanov2020rigorous}, but strong and hard-to-verify conditions persist. In contrast, Definition \ref{Dcme} extend the notions of kernel mean embedding, expectation operator and cross-covariance operator to the conditional setting simply by using the formal definition of conditional expectations (Definition \ref{Dconditionalexpectation}), and the subsequent result in Lemma \ref{Lconditionalinterchange}, analogous to \citep[Theorem 4]{song2009hilbert}, does not rely on any assumptions. 

A regression interpretation is given in \cite{grunewalder2012conditional}, by showing the \textit{existence}, for each \(z\in\mathcal{Z}\), of \(\mu(z)\in\mathcal{H}_\mathcal{X}\) that satisfies \(\mathbb{E}[h(X)\mid Z=z]=\langle h,\mu(z)\rangle_{\mathcal{H}_\mathcal{X}}\). However, no explicit expression for \(\mu(z)\) is provided. In contrast, our definition provides an explicit expression \(\mu_{P_{X|Z}}=\mathbb{E}_{X|Z}[k_\mathcal{X}(X,\cdot)\mid Z]\). 

In \citep[Section A.2]{fukumizu2004dimensionality}, the conditional cross-covariance operator is defined, but in a significantly different way. It is defined as \(\Sigma_{\mathit{YX}|Z}\vcentcolon=\mathcal{C}_{\mathit{YX}}-\mathcal{C}_{\mathit{YZ}}\tilde{\mathcal{C}}_{\mathit{ZZ}}^{-1}\mathcal{C}_{\mathit{ZX}}\), where \(\tilde{\mathcal{C}}_{\mathit{ZZ}}^{-1}\) is the right inverse of \(\mathcal{C}_{\mathit{ZZ}}\) on \((\textnormal{Ker}\mathcal{C}_{\mathit{ZZ}})^\perp\). This has the property that, for all \(f\in\mathcal{H}_\mathcal{X}\) and \(g\in\mathcal{H}_\mathcal{Y}\), \(\langle g,\Sigma_{\mathit{YX}|Z}f\rangle_{\mathcal{H}_\mathcal{Y}}=\mathbb{E}_Z[\textnormal{Cov}_{\mathit{XY}|Z}(f(X),g(Y)\mid Z)]\). Note that this is different to our relation stated after Lemma \ref{Lconditionaltensorinterchange}; the conditional covariance is integrated out over \(\mathcal{Z}\). In fact, this difference is explicitly noted by \cite{song2009hilbert}. 

\section{Empirical estimates}\label{Sempirical}
In this section, we discuss how we can obtain empirical estimates of \(\mu_{P_{X|Z}}=\mathbb{E}_{X|Z}[k_\mathcal{X}(X,\cdot)\mid Z]\). 
\begin{theorem}\label{Tcinlar4.4generalised}
	Denote the Borel \(\sigma\)-algebra of \(\mathcal{H}_\mathcal{X}\) by \(\mathcal{B}(\mathcal{H}_\mathcal{X})\). Then we can write \(\mu_{P_{X|Z}}=F_{P_{X|Z}}\circ Z\), where \(F_{P_{X|Z}}:\mathcal{Z}\rightarrow\mathcal{H}_\mathcal{X}\) is some deterministic function, measurable with respect to \(\mathfrak{Z}\) and \(\mathcal{B}(\mathcal{H}_\mathcal{X})\). 
\end{theorem}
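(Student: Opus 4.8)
The statement is a Doob--Dynkin (factorisation) result for random variables valued in a separable Hilbert space, and the plan is to reduce it to the classical scalar version coordinate-wise. First I would record that, for the fixed version we work with, Definition \ref{Dconditionalexpectation} makes $\mu_{P_{X|Z}}$ a $\sigma(Z)$-measurable map $\Omega\to\mathcal{H}_\mathcal{X}$, where $\mathcal{H}_\mathcal{X}$ carries its Borel $\sigma$-algebra $\mathcal{B}(\mathcal{H}_\mathcal{X})$. Using the standing assumption that $\mathcal{H}_\mathcal{X}$ is separable, fix a countable orthonormal basis $\{e_n\}_{n\in\mathbb{N}}$. Since each $\langle\cdot,e_n\rangle_{\mathcal{H}_\mathcal{X}}$ is continuous and the $e_n$ separate points, $\mathcal{B}(\mathcal{H}_\mathcal{X})$ is generated by the family $\{\langle\cdot,e_n\rangle_{\mathcal{H}_\mathcal{X}}:n\in\mathbb{N}\}$; consequently each real-valued map $\omega\mapsto\langle\mu_{P_{X|Z}}(\omega),e_n\rangle_{\mathcal{H}_\mathcal{X}}$ is $\sigma(Z)$-measurable.

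Next I would apply the classical Doob--Dynkin lemma to each coordinate: for every $n$ there is a $\mathfrak{Z}$-measurable function $f_n:\mathcal{Z}\to\mathbb{R}$ with $\langle\mu_{P_{X|Z}},e_n\rangle_{\mathcal{H}_\mathcal{X}}=f_n\circ Z$ (exactly, not just almost surely, since $\mu_{P_{X|Z}}$ is itself a fixed $\sigma(Z)$-measurable function, and the scalar factorisation needs no structural hypothesis on $\mathcal{Z}$). Set $A\vcentcolon=\{z\in\mathcal{Z}:\sum_{n}f_n(z)^2<\infty\}$, which is in $\mathfrak{Z}$ because $z\mapsto\sum_n f_n(z)^2=\sup_N\sum_{n\le N}f_n(z)^2$ is $\mathfrak{Z}$-measurable, and define $F_{P_{X|Z}}(z)\vcentcolon=\sum_{n}f_n(z)e_n$ for $z\in A$ and $F_{P_{X|Z}}(z)\vcentcolon=0$ for $z\notin A$. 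Each partial sum $z\mapsto\sum_{n\le N}f_n(z)e_n$ is $\mathfrak{Z}/\mathcal{B}(\mathcal{H}_\mathcal{X})$-measurable, on $A$ these partial sums converge in $\mathcal{H}_\mathcal{X}$ to $F_{P_{X|Z}}$, and a pointwise limit of $\mathcal{B}(\mathcal{H}_\mathcal{X})$-valued measurable maps into the metric space $\mathcal{H}_\mathcal{X}$ is measurable; together with the constant value on the measurable set $A^c$ this shows $F_{P_{X|Z}}$ is $\mathfrak{Z}/\mathcal{B}(\mathcal{H}_\mathcal{X})$-measurable.

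Finally I would verify $\mu_{P_{X|Z}}=F_{P_{X|Z}}\circ Z$. For every $\omega$, $\mu_{P_{X|Z}}(\omega)$ is a genuine element of $\mathcal{H}_\mathcal{X}$, so $\sum_n f_n(Z(\omega))^2=\sum_n\langle\mu_{P_{X|Z}}(\omega),e_n\rangle_{\mathcal{H}_\mathcal{X}}^2=\lVert\mu_{P_{X|Z}}(\omega)\rVert_{\mathcal{H}_\mathcal{X}}^2<\infty$, hence $Z(\omega)\in A$ and $F_{P_{X|Z}}(Z(\omega))=\sum_n\langle\mu_{P_{X|Z}}(\omega),e_n\rangle_{\mathcal{H}_\mathcal{X}}e_n=\mu_{P_{X|Z}}(\omega)$ by the orthonormal expansion. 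I expect the only delicate points to be (i) identifying $\mathcal{B}(\mathcal{H}_\mathcal{X})$ with the $\sigma$-algebra generated by the coordinate functionals --- a standard fact for separable Banach spaces, needed to push $\sigma(Z)$-measurability down to the coordinates --- and (ii) the bookkeeping that makes $F_{P_{X|Z}}$ well defined and measurable on the set where the coordinate series may fail to converge. Alternatively, one can bypass the coordinate argument entirely by invoking the Doob--Dynkin lemma in the form valid for random variables valued in a Polish space, since a separable Hilbert space is Polish; the coordinate-wise route is preferable here because it is elementary and self-contained.
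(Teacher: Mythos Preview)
Your proof is correct and takes a genuinely different route from the paper's. The paper argues directly at the level of the map \(\mu_{P_{X|Z}}\): it uses the Hausdorff property of \(\mathcal{H}_\mathcal{X}\) to show that \(\mu_{P_{X|Z}}\) is constant on each fibre \(Z^{-1}(z)\), defines \(\tilde{F}\) on \(\Image(Z)\) pointwise by this common value, checks \(\tilde{\mathfrak{Z}}/\mathcal{B}(\mathcal{H}_\mathcal{X})\)-measurability using \(\sigma(Z)=\{Z^{-1}(B):B\in\tilde{\mathfrak{Z}}\}\) together with surjectivity of \(Z\) onto its image, and finally invokes a measurable extension theorem (Dudley) to extend \(\tilde{F}\) from \(\Image(Z)\) to all of \(\mathcal{Z}\), which is where separability and completeness of \(\mathcal{H}_\mathcal{X}\) enter. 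Your argument instead exploits the Hilbert-space structure from the start: you reduce to the scalar Doob--Dynkin lemma coordinate-wise along a countable orthonormal basis, reassemble \(F_{P_{X|Z}}\) from the coordinate functions \(f_n\), and obtain a function defined and measurable on all of \(\mathcal{Z}\) without any separate extension step. Your approach is more elementary and self-contained (no appeal to an extension theorem), and the bookkeeping on the convergence set \(A\) cleanly handles points outside \(\Image(Z)\); the paper's fibre-constancy argument, on the other hand, is more topological and would apply verbatim to any Hausdorff target space up to the extension step. One minor expository remark: the measurability of each coordinate \(\omega\mapsto\langle\mu_{P_{X|Z}}(\omega),e_n\rangle_{\mathcal{H}_\mathcal{X}}\) follows already from continuity of \(\langle\cdot,e_n\rangle_{\mathcal{H}_\mathcal{X}}\) and composition, so you do not strictly need the stronger fact that the coordinate functionals generate \(\mathcal{B}(\mathcal{H}_\mathcal{X})\).
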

Hence, estimating \(\mu_{P_{X|Z}}\) boils down to estimating the function \(F_{P_{X|Z}}\), which is exactly the setting for vector-valued regression (Section \ref{SSfunctionregression}) with input space \(\mathcal{Z}\) and output space \(\mathcal{H}_\mathcal{X}\). In contrast to \cite{grunewalder2012conditional}, where regression is motivated by applying the Riesz representation theorem conditioned on each value of \(z\in\mathcal{Z}\), we derive the CME as an explicit function of \(Z\), which we argue is a more principled way to motivate regression. Moreover, for continuous \(Z\), the event \(Z=z\) has measure 0, so it is not measure-theoretically rigorous to apply the Riesz representation theorem conditioned on \(Z=z\).  

The natural optimisation problem is to minimise the loss \(\mathcal{E}_{X|Z}(F)\vcentcolon=\mathbb{E}_Z[\lVert F_{P_{X|Z}}(Z)-F(Z)\rVert^2_{\mathcal{H}_\mathcal{X}}]\) among all \(F\in\mathcal{G}_{\mathcal{X}\mathcal{Z}}\), where \(\mathcal{G}_{\mathcal{X}\mathcal{Z}}\) is a vector-valued RKHS of functions \(\mathcal{Z}\rightarrow\mathcal{H}_\mathcal{X}\). For simplicity, we endow \(\mathcal{G}_{\mathcal{X}\mathcal{Z}}\) with a kernel \(l_{\mathcal{X}\mathcal{Z}}(z,z')=k_\mathcal{Z}(z,z')\text{Id}\), where \(k_\mathcal{Z}(\cdot,\cdot)\) is a scalar kernel on \(\mathcal{Z}\).\footnote{\(\mathcal{E}_{X|Z}\) is not the only loss function, nor is \(l_{\mathcal{X}\mathcal{Z}}\) the only kernel, that we can use for this problem. \citet{kadri2016operator} discuss various operator-valued kernels that can be used (albeit without closed-form solutions) and \citet{laforgue2020duality} discuss other loss functions that can be used for more robust estimates. We view this flexibility to facilitate other loss and kernel functions in the regression set-up, although not explored in depth in this work, as a significant advantage over the previous approaches.}

We cannot minimise \(\mathcal{E}_{X|Z}\) directly, since we do not observe samples from \(\mu_{P_{X|Z}}\), but only the pairs \((x_i,z_i)\) from \((X,Z)\). We bound this with a surrogate loss \(\tilde{\mathcal{E}}_{X|Z}\) that has a sample-based version:
\begin{alignat*}{2}
\mathcal{E}_{X|Z}(F)&=\mathbb{E}_Z[\lVert\mathbb{E}_{X|Z}[k_\mathcal{X}(X,\cdot)-F(Z)\mid Z]\rVert^2_{\mathcal{H}_\mathcal{X}}]\leq\mathbb{E}_Z\mathbb{E}_{X|Z}[\lVert k_\mathcal{X}(X,\cdot)-F(Z)\rVert^2_{\mathcal{H}_\mathcal{X}}\mid Z]\\
&=\mathbb{E}_{X,Z}[\lVert k_\mathcal{X}(X,\cdot)-F(Z)\rVert^2_{\mathcal{H}_\mathcal{X}}]=\vcentcolon\tilde{\mathcal{E}}_{X|Z}(F),
\end{alignat*}
where we used generalised conditional Jensen's inequality (see Appendix \ref{SgeneralisedJensen}, or \cite{perlman1974jensen}). Section \ref{SSuniversalityconsistency} discusses the meaning of this surrogate loss. We replace the surrogate population loss with a regularised empirical loss based on samples \(\{(x_i,z_i)\}_{i=1}^n\) from the joint distribution \(P_{\mathit{XZ}}\): \(\hat{\mathcal{E}}_{X|Z,n,\lambda}(F)\vcentcolon=\frac{1}{n}\sum^n_{i=1}\lVert k_\mathcal{X}(x_i,\cdot)-F(z_i)\rVert^2_{\mathcal{H}_\mathcal{X}}+\lambda\lVert F\rVert^2_{\mathcal{G}_{\mathcal{X}\mathcal{Z}}}\), where \(\lambda>0\) is a regularisation parameter. We see that this loss functional is exactly in the form of (\ref{Emiccheli4.1}). Therefore, by Theorem \ref{Tmiccheli4.1}, the minimiser \(\hat{F}_{P_{X|Z},n,\lambda}\) of \(\hat{\mathcal{E}}_{X|Z,n,\lambda}\) is \(\hat{F}_{P_{X|Z},n,\lambda}(\cdot)=\mathbf{k}_Z^T(\cdot)\mathbf{f}\), where \(\mathbf{k}_Z(\cdot)\vcentcolon=(k_Z(z_1,\cdot),...,k_Z(z_n,\cdot))^T\), \(\mathbf{f}\vcentcolon=(f_1,...,f_n)^T\) and the coefficients \(f_i\in\mathcal{H}_\mathcal{X}\) are the unique solutions of the linear equations \((\mathbf{K}_Z+n\lambda\mathbf{I})\mathbf{f}=\mathbf{k}_X\), where \([\mathbf{K}_Z]_{ij}\vcentcolon=k_\mathcal{Z}(z_i,z_j)\), \(\mathbf{k}_X\vcentcolon=(k_\mathcal{X}(x_1,\cdot),...,k_\mathcal{X}(x_n,\cdot))^T\) and \(\mathbf{I}\) is the \(n\times n\) identity matrix. Hence, the coefficients are \(\mathbf{f}=\mathbf{W}\mathbf{k}_X\), where \(\mathbf{W}=(\mathbf{K}_Z+n\lambda\mathbf{I})^{-1}\).
Finally, substituting this into the expression for \(\hat{F}_{P_{X|Z},n,\lambda}(\cdot)\), we have
\begin{equation}\label{Eempirical}
\hat{F}_{P_{X|Z},n,\lambda}(\cdot)=\mathbf{k}_Z^T(\cdot)\mathbf{W}\mathbf{k}_X\in\mathcal{G}_{\mathcal{X}\mathcal{Z}}.
\end{equation}

\subsection{Surrogate loss, universality and consistency}\label{SSuniversalityconsistency}
In this subsection, we investigate the meaning and consequences of using the surrogate loss \(\tilde{\mathcal{E}}_{X|Z}\) instead of the original \(\mathcal{E}_{X|Z}\), as well as the universal consistency property of our learning algorithm. 

Denote by \(L^2(\mathcal{Z},P_Z;\mathcal{H}_\mathcal{X})\) the Banach space of (equivalence classes of) measurable functions \(F:\mathcal{Z}\rightarrow\mathcal{H}_\mathcal{X}\) such that \(\lVert F(\cdot)\rVert_{\mathcal{H}_\mathcal{X}}^2\) is \(P_Z\)-integrable, with norm \(\lVert F\rVert_2=(\int_\mathcal{Z}\lVert F(z)\rVert^2_{\mathcal{H}_\mathcal{X}}dP_Z(z))^{\frac{1}{2}}\). We can note that the true function \(F_{P_{X|Z}}\) belongs to \(L^2(\mathcal{Z},P_Z;\mathcal{H}_\mathcal{X})\), because Theorem \ref{Tcinlar4.4generalised} tells us that \(F_{P_{X|Z}}\) is indeed measurable, and by Theorem \ref{TgeneralisedconditionalJensen} and (\ref{Estrongerintegrability}), \(\int_\mathcal{Z}\lVert F_{P_{X|Z}}(z)\rVert_{\mathcal{H}_\mathcal{X}}^2dP_Z(z)=\mathbb{E}_Z[\lVert\mathbb{E}_{X|Z}[k_\mathcal{X}(X,\cdot)\mid Z]\rVert_{\mathcal{H}_\mathcal{X}}^2]\leq\mathbb{E}_Z[\mathbb{E}_{X|Z}[\lVert k_\mathcal{X}(X,\cdot)\rVert_{\mathcal{H}_\mathcal{X}}^2\mid Z]]=\mathbb{E}_X[\lVert k_\mathcal{X}(X,\cdot)\rVert_{\mathcal{H}_\mathcal{X}}^2]<\infty\). The true function \(F_{P_{X|Z}}\) is the unique minimiser in \(L^2(\mathcal{Z},P_Z;\mathcal{H}_\mathcal{X})\) of both \(\mathcal{E}_{X|Z}\) and \(\tilde{\mathcal{E}}_{X|Z}\):
\begin{theorem}\label{Tgrunewalder3.1}
	\(F_{P_{X|Z}}\) minimises both \(\tilde{\mathcal{E}}_{X|Z}\) and \(\mathcal{E}_{X|Z}\) in \(L^2(\mathcal{Z},P_Z;\mathcal{H}_\mathcal{X})\). Moreover, it is almost surely equal to any other minimiser of the loss functionals. 
\end{theorem}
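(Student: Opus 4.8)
The plan is to establish both claims by a direct computation that reduces everything to the defining property of conditional expectation (Definition \ref{Dconditionalexpectation}) and the interchange Lemma \ref{Lconditionalinterchange}. For the surrogate loss $\tilde{\mathcal{E}}_{X|Z}$, I would first expand, for an arbitrary $F\in L^2(\mathcal{Z},P_Z;\mathcal{H}_\mathcal{X})$, the squared norm $\lVert k_\mathcal{X}(X,\cdot)-F(Z)\rVert^2_{\mathcal{H}_\mathcal{X}}$ by adding and subtracting $F_{P_{X|Z}}(Z)$, which gives the three terms $\lVert k_\mathcal{X}(X,\cdot)-F_{P_{X|Z}}(Z)\rVert^2$, $\lVert F_{P_{X|Z}}(Z)-F(Z)\rVert^2$, and the cross term $2\langle k_\mathcal{X}(X,\cdot)-F_{P_{X|Z}}(Z),\,F_{P_{X|Z}}(Z)-F(Z)\rangle_{\mathcal{H}_\mathcal{X}}$. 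Taking $\mathbb{E}_{X,Z}=\mathbb{E}_Z\mathbb{E}_{X|Z}$ and conditioning the cross term on $Z$ first: since $F_{P_{X|Z}}(Z)-F(Z)$ is $Z$-measurable, it pulls out of the inner conditional expectation, and $\mathbb{E}_{X|Z}[k_\mathcal{X}(X,\cdot)\mid Z]=\mu_{P_{X|Z}}=F_{P_{X|Z}}(Z)$ by definition, so the conditional expectation of the first slot of the inner product vanishes. (To be careful about pushing the conditional expectation through the inner product, I would invoke Lemma \ref{Lconditionalinterchange} applied coordinatewise, or note that $h\mapsto\langle h, w\rangle$ is a bounded linear functional and conditional expectation commutes with such functionals; this is a standard fact for Bochner conditional expectations.) Hence $\tilde{\mathcal{E}}_{X|Z}(F)=\tilde{\mathcal{E}}_{X|Z}(F_{P_{X|Z}})+\mathbb{E}_Z[\lVert F_{P_{X|Z}}(Z)-F(Z)\rVert^2_{\mathcal{H}_\mathcal{X}}]$, which is minimised exactly when the second term is zero, i.e.\ when $F=F_{P_{X|Z}}$ $P_Z$-almost everywhere.

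For the original loss $\mathcal{E}_{X|Z}(F)=\mathbb{E}_Z[\lVert F_{P_{X|Z}}(Z)-F(Z)\rVert^2_{\mathcal{H}_\mathcal{X}}]$, the statement is immediate once one observes that this is precisely the second term in the decomposition above: it is a nonnegative quantity that vanishes iff $F=F_{P_{X|Z}}$ in $L^2(\mathcal{Z},P_Z;\mathcal{H}_\mathcal{X})$, so $F_{P_{X|Z}}$ is the minimiser and it is unique up to $P_Z$-null sets, which is the same as saying any other minimiser is almost surely equal to it. I should first record (as the excerpt already does in the paragraph preceding the theorem) that $F_{P_{X|Z}}\in L^2(\mathcal{Z},P_Z;\mathcal{H}_\mathcal{X})$, so that it is a legitimate competitor, and that $\tilde{\mathcal{E}}_{X|Z}(F_{P_{X|Z}})<\infty$ under the integrability hypothesis \eqref{Eintegrability} (indeed \eqref{Estrongerintegrability} is used elsewhere but \eqref{Eintegrability} already suffices here since $\lVert k_\mathcal{X}(X,\cdot)\rVert^2_{\mathcal{H}_\mathcal{X}}=k_\mathcal{X}(X,X)$ need not be integrable — in fact I should double-check whether $\tilde{\mathcal{E}}_{X|Z}$ is finite at all without \eqref{Estrongerintegrability}; if not, I would state the result under the integrability assumption that makes $\tilde{\mathcal{E}}_{X|Z}$ finite, namely $\mathbb{E}_X[k_\mathcal{X}(X,X)]<\infty$, noting $\mathcal{E}_{X|Z}$ is always well-behaved).

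The main obstacle, and the only point requiring genuine care, is the justification that $\mathbb{E}_{X|Z}$ commutes with the inner product against a $Z$-measurable $\mathcal{H}_\mathcal{X}$-valued random element in the cross term — i.e.\ that $\mathbb{E}_{X|Z}[\langle k_\mathcal{X}(X,\cdot), w(Z)\rangle_{\mathcal{H}_\mathcal{X}}\mid Z] = \langle \mathbb{E}_{X|Z}[k_\mathcal{X}(X,\cdot)\mid Z], w(Z)\rangle_{\mathcal{H}_\mathcal{X}}$ almost surely, where $w(Z)=F_{P_{X|Z}}(Z)-F(Z)$. This follows by a standard "pull out what is known" argument combined with the fact that bounded linear operators (here, the functional $\langle\cdot,h\rangle_{\mathcal{H}_\mathcal{X}}$ for fixed $h$, then an approximation in $h$) commute with Bochner conditional expectation; since $\mathcal{H}_\mathcal{X}$ is separable one can reduce to a countable orthonormal basis and apply Lemma \ref{Lconditionalinterchange} componentwise, then use monotone/dominated convergence, with the integrability of $\lVert k_\mathcal{X}(X,\cdot)\rVert_{\mathcal{H}_\mathcal{X}}\,\lVert w(Z)\rVert_{\mathcal{H}_\mathcal{X}}$ supplied by Cauchy--Schwarz and the $L^2$ memberships. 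Everything else is bookkeeping with the tower property $\mathbb{E}_{X,Z}=\mathbb{E}_Z\circ\mathbb{E}_{X|Z}$.
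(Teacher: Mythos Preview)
Your proposal is correct and follows essentially the same route as the paper: both arguments handle \(\mathcal{E}_{X|Z}\) trivially from its definition, and for \(\tilde{\mathcal{E}}_{X|Z}\) both establish the identity \(\tilde{\mathcal{E}}_{X|Z}(F)-\tilde{\mathcal{E}}_{X|Z}(F_{P_{X|Z}})=\mathbb{E}_Z[\lVert F(Z)-F_{P_{X|Z}}(Z)\rVert_{\mathcal{H}_\mathcal{X}}^2]\) via Lemma~\ref{Lconditionalinterchange}, from which minimality and almost-sure uniqueness follow immediately. The only cosmetic difference is that you add and subtract \(F_{P_{X|Z}}(Z)\) inside the norm (a Pythagorean-style decomposition) whereas the paper expands \(\tilde{\mathcal{E}}_{X|Z}(F)-\tilde{\mathcal{E}}_{X|Z}(F_{P_{X|Z}})\) directly; your treatment of the cross term is in fact more careful than the paper's, which applies Lemma~\ref{Lconditionalinterchange} with \(f=F(Z)\) without further comment.
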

Note the difference in the statement of Theorem \ref{Tgrunewalder3.1} from \citep[Theorem 3.1]{grunewalder2012conditional}, which only considers the minimisation of the loss functionals in \(\mathcal{G}_{\mathcal{X}\mathcal{Z}}\), whereas we consider the larger space \(L^2(\mathcal{Z},P_Z;\mathcal{H}_\mathcal{X})\).
Next, we discuss the concepts of \textit{universal kernels} and \textit{universal consistency}. 
\begin{definition}[{\cite[Definition 2]{carmeli2010vector}}]\label{DC0universality}
	A kernel \(l_{\mathcal{X}\mathcal{Z}}:\mathcal{Z}\times\mathcal{Z}\rightarrow\mathcal{L}(\mathcal{H}_\mathcal{X})\) with RKHS \(\mathcal{G}_{\mathcal{X}\mathcal{Z}}\) is \(\mathcal{C}_0\) if \(\mathcal{G}_{\mathcal{X}\mathcal{Z}}\) is a subspace of \(\mathcal{C}_0(\mathcal{Z},\mathcal{H}_\mathcal{X})\), the space of continuous functions \(\mathcal{Z}\rightarrow\mathcal{H}_\mathcal{X}\) vanishing at infinity. The kernel \(l_{\mathcal{X}\mathcal{Z}}\) is \textit{\(\mathcal{C}_0\)-universal} if is is \(\mathcal{C}_0\) and \(\mathcal{G}_{\mathcal{X}\mathcal{Z}}\) is dense in \(L^2(\mathcal{Z},P_Z;\mathcal{H}_\mathcal{X})\) for any measure \(P_Z\) on \(\mathcal{Z}\). 
\end{definition}
\citet[Example 14]{carmeli2010vector} shows that \(l_{\mathcal{X}\mathcal{Z}}=k_\mathcal{Z}(\cdot,\cdot)\text{Id}\) is \(\mathcal{C}_0\)-universal if \(k_\mathcal{Z}\) is a universal scalar kernel, which in turn is guaranteed if \(k_\mathcal{Z}\) is Gaussian or Laplacian, for example \citep{steinwart2001influence}. 
\begin{figure*}[t]
	\begin{center}
		\centerline{\includegraphics[scale=0.7]{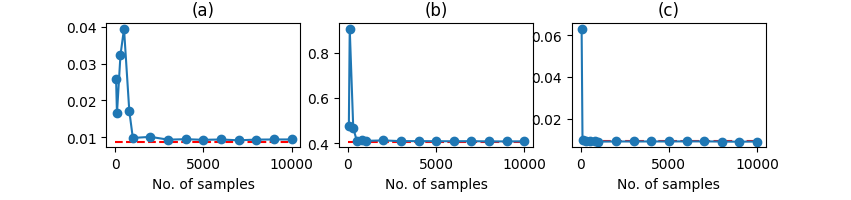}}
		\caption{Solid blue and dashed red lines represent  \(\tilde{\mathcal{E}}_{X|Z}(\hat{F}_{P_{X|Z},n,\lambda_n})\) and \(\tilde{\mathcal{E}}_{X|Z}(F_{P_{X|Z}})\) respectively.}
		\label{Fconsistency}
	\end{center}
\end{figure*}
The consistency result with optimal rate \(\mathcal{O}_p(\frac{\log{n}}{n})\) in \citep[Corollaries 4.1, 4.2]{grunewalder2012conditional} is based on \cite{caponnetto2006optimal}, and assumes, along with some distributional assumptions, that \(\mathcal{H}_\mathcal{X}\) is finite-dimensional, which is not true for many common choices of \(k_\mathcal{X}\) (see Appendix \ref{Sgeneralisation} for more details). In \citep[Theorem 6]{song2009hilbert}, \citep[Theorem 1]{song2010nonparametric} and \citep[Theorem 1.3.2]{fukumizu2015nonparametric}, consistency is also shown under various assumptions, with rates at best \(\mathcal{O}_p(n^{-\frac{1}{4}})\). In Theorem \ref{Tconsistency}, we prove universal consistency without any distributional assumptions, and in Theorem \ref{Trate}, we show that a convergence rate of \(\mathcal{O}_p(n^{-1/4})\) can be achieved with a simple smoothness assumption that \(F_{P_{X|Z}}\in\mathcal{G}_{\mathcal{X}\mathcal{Z}}\) (sometimes referred to as the \textit{well-specified case}; see \cite{szabo2016learning}). In particular, both results relax the finite-dimensionality assumption on \(\mathcal{H}_\mathcal{X}\) of \cite{grunewalder2012conditional}. 
\begin{theorem}\label{Tconsistency}
	Suppose that \(k_\mathcal{X}\) and \(k_\mathcal{Z}\) are bounded kernels, i.e. there are \(B_\mathcal{Z},B_\mathcal{X}>0\) with \(\sup_{z\in\mathcal{Z}}k_\mathcal{Z}(z,z)\leq B^2_\mathcal{Z}\), \(\sup_{x\in\mathcal{X}}k_\mathcal{X}(x,x)\leq B^2_\mathcal{X}\), and that the operator-valued kernel \(l_{\mathcal{X}\mathcal{Z}}\) is \(\mathcal{C}_0\)-universal. Let the regularisation parameter \(\lambda_n\) decay to 0 at a slower rate than \(\mathcal{O}(n^{-1/2})\). Then the learning algorithm that yields \(\hat{F}_{P_{X|Z},n,\lambda_n}\) is universally consistent, i.e. for any joint distribution \(P_{XZ}\), \(\epsilon>0\) and \(\delta>0\), \(P_{\mathit{XZ}}(\tilde{\mathcal{E}}_{X|Z}(\hat{F}_{P_{X|Z},n,\lambda_n})-\tilde{\mathcal{E}}_{X|Z}(F_{P_{X|Z}})>\epsilon)<\delta\) for sufficiently large \(n\). 
\end{theorem}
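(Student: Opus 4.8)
The plan is to decompose the excess surrogate risk into an \emph{approximation} term and an \emph{estimation} term, and to control each separately. Writing $\hat F_n \vcentcolon= \hat F_{P_{X|Z},n,\lambda_n}$ and letting $F_\lambda \vcentcolon= \argmin_{F\in\mathcal{G}_{\mathcal{X}\mathcal{Z}}}\big(\tilde{\mathcal{E}}_{X|Z}(F)+\lambda\lVert F\rVert^2_{\mathcal{G}_{\mathcal{X}\mathcal{Z}}}\big)$ be the population regularised minimiser, I would write
\[
\tilde{\mathcal{E}}_{X|Z}(\hat F_n)-\tilde{\mathcal{E}}_{X|Z}(F_{P_{X|Z}})
=\underbrace{\big(\tilde{\mathcal{E}}_{X|Z}(\hat F_n)-\tilde{\mathcal{E}}_{X|Z}(F_{\lambda_n})\big)}_{\text{estimation}}
+\underbrace{\big(\tilde{\mathcal{E}}_{X|Z}(F_{\lambda_n})-\tilde{\mathcal{E}}_{X|Z}(F_{P_{X|Z}})\big)}_{\text{approximation}}.
\]
For the approximation term, the key input is $\mathcal{C}_0$-universality of $l_{\mathcal{X}\mathcal{Z}}$: by Definition \ref{DC0universality}, $\mathcal{G}_{\mathcal{X}\mathcal{Z}}$ is dense in $L^2(\mathcal{Z},P_Z;\mathcal{H}_\mathcal{X})$, and since $F_{P_{X|Z}}\in L^2(\mathcal{Z},P_Z;\mathcal{H}_\mathcal{X})$ (shown in the text preceding Theorem \ref{Tgrunewalder3.1}) and is the unique $L^2$-minimiser of $\tilde{\mathcal{E}}_{X|Z}$ (Theorem \ref{Tgrunewalder3.1}), a standard argument shows the regularised-minimiser approximation error $\mathcal{A}(\lambda)\vcentcolon=\inf_{F\in\mathcal{G}_{\mathcal{X}\mathcal{Z}}}\big(\tilde{\mathcal{E}}_{X|Z}(F)-\tilde{\mathcal{E}}_{X|Z}(F_{P_{X|Z}})+\lambda\lVert F\rVert^2_{\mathcal{G}_{\mathcal{X}\mathcal{Z}}}\big)\to 0$ as $\lambda\to0$. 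In particular, $\tilde{\mathcal{E}}_{X|Z}(F_{\lambda_n})-\tilde{\mathcal{E}}_{X|Z}(F_{P_{X|Z}})\le\mathcal{A}(\lambda_n)\to0$, and also $\lambda_n\lVert F_{\lambda_n}\rVert^2_{\mathcal{G}_{\mathcal{X}\mathcal{Z}}}\le\mathcal{A}(\lambda_n)$, which will be needed below.

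For the estimation term, the strategy is a standard bound via the empirical process. Observe that $\hat F_n$ minimises $\hat{\mathcal{E}}_{X|Z,n,\lambda_n}$, so $\hat{\mathcal{E}}_{X|Z,n,\lambda_n}(\hat F_n)\le\hat{\mathcal{E}}_{X|Z,n,\lambda_n}(F_{\lambda_n})$. Rearranging and adding/subtracting population terms yields
\[
\tilde{\mathcal{E}}_{X|Z}(\hat F_n)-\tilde{\mathcal{E}}_{X|Z}(F_{\lambda_n})
\le \big(\tilde{\mathcal{E}}_{X|Z}-\hat{\mathcal{E}}^0_n\big)(\hat F_n)
 + \big(\hat{\mathcal{E}}^0_n-\tilde{\mathcal{E}}_{X|Z}\big)(F_{\lambda_n})
 + \lambda_n\lVert F_{\lambda_n}\rVert^2_{\mathcal{G}_{\mathcal{X}\mathcal{Z}}},
\]
where $\hat{\mathcal{E}}^0_n(F)\vcentcolon=\frac1n\sum_i\lVert k_\mathcal{X}(x_i,\cdot)-F(z_i)\rVert^2_{\mathcal{H}_\mathcal{X}}$ is the unregularised empirical loss. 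The last term is $\le\mathcal{A}(\lambda_n)\to0$. The second term is an average of i.i.d.\ bounded random variables (boundedness uses $\sup_x k_\mathcal{X}(x,x)\le B_\mathcal{X}^2$ and $\lVert F_{\lambda_n}(z)\rVert_{\mathcal{H}_\mathcal{X}}\le C_z\lVert F_{\lambda_n}\rVert_{\mathcal{G}_{\mathcal{X}\mathcal{Z}}}$, with $C_z\le B_\mathcal{Z}$ from $l_{\mathcal{X}\mathcal{Z}}=k_\mathcal{Z}\,\mathrm{Id}$ and $\lVert F_{\lambda_n}\rVert_{\mathcal{G}_{\mathcal{X}\mathcal{Z}}}\le\sqrt{\mathcal{A}(\lambda_n)/\lambda_n}$), so it vanishes in probability by the weak law provided $\mathcal{A}(\lambda_n)/\lambda_n$ does not blow up too fast — here the assumption $\lambda_n n^{1/2}\to\infty$ enters. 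The first term requires a uniform (over a ball of $\mathcal{G}_{\mathcal{X}\mathcal{Z}}$) deviation bound: since $\hat F_n$ lies in a ball of radius $R_n$ with $\lambda_n R_n^2\le\hat{\mathcal{E}}_{X|Z,n,\lambda_n}(0)=\frac1n\sum_i k_\mathcal{X}(x_i,x_i)\le B_\mathcal{X}^2$, i.e.\ $R_n\le B_\mathcal{X}/\sqrt{\lambda_n}$, I would invoke a Rademacher-complexity or covering-number bound for vector-valued RKHS balls (the loss is Lipschitz on the relevant bounded region), giving $\sup_{\lVert F\rVert_{\mathcal{G}_{\mathcal{X}\mathcal{Z}}}\le R_n}|(\tilde{\mathcal{E}}_{X|Z}-\hat{\mathcal{E}}^0_n)(F)|=\mathcal{O}_p\big(R_n/\sqrt n\,\big)=\mathcal{O}_p\big(1/(\sqrt{\lambda_n}\sqrt n)\big)$, which tends to $0$ in probability exactly under $\lambda_n\sqrt n\to\infty$, i.e.\ $\lambda_n$ decaying slower than $n^{-1/2}$.

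Combining: $\tilde{\mathcal{E}}_{X|Z}(\hat F_n)-\tilde{\mathcal{E}}_{X|Z}(F_{P_{X|Z}})\le\mathcal{A}(\lambda_n)+\mathcal{O}_p\big(1/(\sqrt{\lambda_n n})\big)+o_p(1)\to0$ in probability, which is the claimed statement once one notes the left side is nonnegative by Theorem \ref{Tgrunewalder3.1} (so convergence in probability to $0$ gives the $\epsilon,\delta$ formulation). I expect the main obstacle to be making the uniform deviation bound over the growing ball of radius $R_n\propto\lambda_n^{-1/2}$ fully rigorous in the \emph{infinite-dimensional} output space $\mathcal{H}_\mathcal{X}$ — this is precisely where \cite{grunewalder2012conditional} used finite-dimensionality — so I would lean on vector-valued Rademacher-complexity results (e.g.\ contraction for Hilbert-space-valued losses together with the RKHS-ball complexity bound $\mathcal{O}(R_n \sup_z\sqrt{\operatorname{tr}\Gamma(z,z)}/\sqrt n)$, finite here since $\operatorname{tr}$ is replaced by the scalar bound $B_\mathcal{Z}^2$ as $\Gamma=k_\mathcal{Z}\mathrm{Id}$ acting effectively through the scalar kernel) rather than covering numbers, and verify carefully that no dimension-dependent constant sneaks in.
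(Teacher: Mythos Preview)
Your overall architecture (approximation error via $\mathcal{C}_0$-universality plus estimation error via a uniform deviation bound over a growing RKHS ball) is the natural one, and the approximation part is fine. The paper itself does not give a self-contained argument for Theorem~\ref{Tconsistency}; it simply invokes \cite[Theorem~2.3]{park2020regularised}, so there is no detailed ``paper's proof'' to compare against beyond noting that the cited work is by the same authors and presumably carries out an argument of this general shape.

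There is, however, a genuine gap in your estimation step. You write that the RKHS-ball Rademacher bound is $\mathcal{O}\big(R_n\sqrt{\operatorname{tr}\Gamma(z,z)}/\sqrt{n}\big)$ and that this is finite because ``$\operatorname{tr}$ is replaced by the scalar bound $B_\mathcal{Z}^2$''. This is exactly the step that fails: with $\Gamma(z,z)=k_\mathcal{Z}(z,z)\,\mathrm{Id}_{\mathcal{H}_\mathcal{X}}$ and $\dim\mathcal{H}_\mathcal{X}=\infty$, one has $\operatorname{tr}\Gamma(z,z)=k_\mathcal{Z}(z,z)\cdot\dim\mathcal{H}_\mathcal{X}=\infty$. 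The paper makes precisely this observation in Appendix~\ref{Sgeneralisation} when explaining why the Caponnetto--De~Vito bounds used in \cite{grunewalder2012conditional} are inapplicable. Likewise, the vector-valued contraction route (Maurer-type) you allude to produces a sum over an orthonormal basis of $\mathcal{H}_\mathcal{X}$ of scalar Rademacher complexities, which again diverges. So the handwave ``acting effectively through the scalar kernel'' is not innocuous---it is the whole difficulty, and as written your bound is $+\infty$.

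Two ways to repair this. First, the paper's Appendix~\ref{Sgeneralisation} outlines the algorithmic-stability route of \cite{kadri2016operator}: the regularised least-squares map is $\beta$-stable with $\beta\propto 1/(\lambda_n n)$ (using only the \emph{operator norm} of $\Gamma(z,z)$, not its trace), and Theorem~\ref{Tkadri8} then yields a generalisation bound whose dominant term is of order $n\beta/\sqrt{n}\propto 1/(\lambda_n\sqrt{n})$, giving exactly the condition $\lambda_n\sqrt{n}\to\infty$. Second, if you want to stay with empirical-process tools, exploit the isometry $\mathcal{G}_{\mathcal{X}\mathcal{Z}}\cong\mathcal{H}_\mathcal{Z}\otimes\mathcal{H}_\mathcal{X}$ to decompose the loss as $k_\mathcal{X}(x,x)-2(F(z))(x)+\lVert F(z)\rVert^2_{\mathcal{H}_\mathcal{X}}$: the cross term $(F(z))(x)=\langle F,\,k_\mathcal{Z}(z,\cdot)\otimes k_\mathcal{X}(x,\cdot)\rangle$ is a \emph{scalar} RKHS evaluation on $\mathcal{Z}\times\mathcal{X}$ and has Rademacher complexity $\mathcal{O}(R_nB_\mathcal{Z}B_\mathcal{X}/\sqrt{n})$, while the quadratic term $\lVert F(z)\rVert^2$ can be controlled via operator concentration of $\frac{1}{n}\sum_i\epsilon_i\,k_\mathcal{Z}(z_i,\cdot)\otimes k_\mathcal{Z}(z_i,\cdot)$ in Hilbert--Schmidt norm, yielding $\mathcal{O}(R_n^2 B_\mathcal{Z}^2/\sqrt{n})$. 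Note this gives $\mathcal{O}_p(R_n^2/\sqrt{n})=\mathcal{O}_p(1/(\lambda_n\sqrt{n}))$, not the $\mathcal{O}_p(R_n/\sqrt{n})$ you stated, but it still vanishes under $\lambda_n\sqrt{n}\to\infty$. Either repair closes the gap; the argument as you wrote it does not.
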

Figure  experimentally verifies universal consistency under three noise levels. We use the distributions \(Z\sim\mathcal{N}(0, 1)\), (a) \(X=e^{-\frac{1}{2}Z^2}\sin(2Z)+N_a\), \(N_a\sim0.3\mathcal{N}(0,1)\); (b) \(X=e^{-\frac{1}{2}Z^2}\sin(2Z)+N_b\), \(N_b\sim3\mathcal{N}(0,1)\); (c) \(X=Z+N_a\), with regularisation \(\lambda_n=10^{-7}n^{-\frac{1}{4}}\). 
\begin{theorem}\label{Trate}
	Assume further that \(F_{P_{X|Z}}\in\mathcal{G}_{\mathcal{X}\mathcal{Z}}\). Then with probability at least \(1-\delta\), 
	\begin{equation*}
	\begin{split}
	&\tilde{\mathcal{E}}_{X|Z}(\hat{F}_{P_{X|Z},n,\lambda_n})-\tilde{\mathcal{E}}_{X|Z}(F_{P_{X|Z}})\leq\lambda_n\left\lVert F_{P_{X|Z}}\right\rVert^2_{\mathcal{G}_{\mathcal{X}\mathcal{Z}}}\\
	&+\frac{2\ln\left(\frac{4}{\delta}\right)}{3n\lambda_n}\left(1+\sqrt{1+\frac{18n}{\ln\left(\frac{4}{\delta}\right)}}\right)\left(\left(B_\mathcal{Z}\left\lVert F_{P_{X|Z}}\right\rVert_{\mathcal{G}_{\mathcal{X}\mathcal{Z}}}+B_\mathcal{X}\right)^2\lambda_n+B_\mathcal{X}^2\left(B_\mathcal{Z}+\sqrt{\lambda_n}\right)^2\right)
	\end{split}
	\end{equation*}
\end{theorem}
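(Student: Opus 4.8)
The plan is to bound the excess surrogate risk by the elementary ``regularisation error plus two generalisation gaps'' decomposition, using that the unregularised empirical loss and the surrogate population loss are least-squares functionals whose difference is an explicit quadratic form. Write $F^\star:=F_{P_{X|Z}}$ and $\hat F:=\hat F_{P_{X|Z},n,\lambda_n}$, let $\hat{\mathcal{E}}_n(F):=\frac{1}{n}\sum_{i=1}^n\lVert k_\mathcal{X}(x_i,\cdot)-F(z_i)\rVert^2_{\mathcal{H}_\mathcal{X}}$ denote the \emph{unregularised} empirical loss, so $\hat{\mathcal{E}}_{X|Z,n,\lambda_n}=\hat{\mathcal{E}}_n+\lambda_n\lVert\cdot\rVert^2_{\mathcal{G}_{\mathcal{X}\mathcal{Z}}}$, and note that $F^\star\in\mathcal{G}_{\mathcal{X}\mathcal{Z}}$ is exactly what makes the manipulations below legitimate. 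I would begin from
\begin{equation*}
\tilde{\mathcal{E}}_{X|Z}(\hat F)-\tilde{\mathcal{E}}_{X|Z}(F^\star)=\underbrace{\big(\tilde{\mathcal{E}}_{X|Z}(\hat F)-\hat{\mathcal{E}}_n(\hat F)\big)}_{A}+\underbrace{\big(\hat{\mathcal{E}}_n(\hat F)-\hat{\mathcal{E}}_n(F^\star)\big)}_{B}+\underbrace{\big(\hat{\mathcal{E}}_n(F^\star)-\tilde{\mathcal{E}}_{X|Z}(F^\star)\big)}_{C}.
\end{equation*}
Since $\hat F$ minimises $\hat{\mathcal{E}}_{X|Z,n,\lambda_n}$ over $\mathcal{G}_{\mathcal{X}\mathcal{Z}}$, comparing against $F^\star$ gives $B\le\lambda_n\lVert F^\star\rVert^2_{\mathcal{G}_{\mathcal{X}\mathcal{Z}}}-\lambda_n\lVert\hat F\rVert^2_{\mathcal{G}_{\mathcal{X}\mathcal{Z}}}\le\lambda_n\lVert F^\star\rVert^2_{\mathcal{G}_{\mathcal{X}\mathcal{Z}}}$, which is the first term of the claim, while comparing against $F=0$ gives the a priori bound $\lambda_n\lVert\hat F\rVert^2_{\mathcal{G}_{\mathcal{X}\mathcal{Z}}}\le\hat{\mathcal{E}}_{X|Z,n,\lambda_n}(0)=\frac{1}{n}\sum_i k_\mathcal{X}(x_i,x_i)\le B_\mathcal{X}^2$, i.e.\ $\lVert\hat F\rVert_{\mathcal{G}_{\mathcal{X}\mathcal{Z}}}\le B_\mathcal{X}/\sqrt{\lambda_n}$. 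This is the only structural control the regulariser provides, and it is what lets the argument avoid any uniform-convergence / Rademacher bound over a norm ball.

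For $A+C$ I would introduce the feature map $\Phi_z:\mathcal{H}_\mathcal{X}\to\mathcal{G}_{\mathcal{X}\mathcal{Z}}$, $\Phi_z h=l_{\mathcal{X}\mathcal{Z}}(\cdot,z)h=k_\mathcal{Z}(z,\cdot)h$, for which $\Phi_z^{*}F=F(z)$ and $\lVert\Phi_z\rVert\le B_\mathcal{Z}$, and set $T_Z:=\mathbb{E}_Z[\Phi_Z\Phi_Z^{*}]$, $\hat T_Z:=\frac{1}{n}\sum_i\Phi_{z_i}\Phi_{z_i}^{*}$, $g:=\mathbb{E}_{XZ}[\Phi_Z k_\mathcal{X}(X,\cdot)]$, $\hat g:=\frac{1}{n}\sum_i\Phi_{z_i}k_\mathcal{X}(x_i,\cdot)$, so that $\mathbb{E}[\hat g]=g$ and $\mathbb{E}[\hat T_Z]=T_Z$. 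Expanding $\lVert k_\mathcal{X}(x,\cdot)-F(z)\rVert^2_{\mathcal{H}_\mathcal{X}}=k_\mathcal{X}(x,x)-2\langle\Phi_z k_\mathcal{X}(x,\cdot),F\rangle_{\mathcal{G}_{\mathcal{X}\mathcal{Z}}}+\langle F,\Phi_z\Phi_z^{*}F\rangle_{\mathcal{G}_{\mathcal{X}\mathcal{Z}}}$ and subtracting empirical from population averages gives, for every $F\in\mathcal{G}_{\mathcal{X}\mathcal{Z}}$,
\begin{equation*}
\tilde{\mathcal{E}}_{X|Z}(F)-\hat{\mathcal{E}}_n(F)=\Big(\mathbb{E}_X[k_\mathcal{X}(X,X)]-\frac{1}{n}\sum_i k_\mathcal{X}(x_i,x_i)\Big)+2\langle\hat g-g,F\rangle_{\mathcal{G}_{\mathcal{X}\mathcal{Z}}}-\langle F,(\hat T_Z-T_Z)F\rangle_{\mathcal{G}_{\mathcal{X}\mathcal{Z}}}.
\end{equation*}
Evaluating this at $F=\hat F$ and at $F=F^\star$, the scalar $\mathbb{E}_X[k_\mathcal{X}(X,X)]$-average cancels between $A$ and $C$, and, $\hat T_Z-T_Z$ being self-adjoint, one obtains $A+C=2\langle\hat g-g,\hat F-F^\star\rangle_{\mathcal{G}_{\mathcal{X}\mathcal{Z}}}-\langle\hat F-F^\star,(\hat T_Z-T_Z)(\hat F+F^\star)\rangle_{\mathcal{G}_{\mathcal{X}\mathcal{Z}}}$. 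Cauchy--Schwarz, $\lVert F^\star(z)\rVert_{\mathcal{H}_\mathcal{X}}=\lVert\Phi_z^{*}F^\star\rVert\le B_\mathcal{Z}\lVert F^\star\rVert_{\mathcal{G}_{\mathcal{X}\mathcal{Z}}}$, and $\lVert\hat F\rVert_{\mathcal{G}_{\mathcal{X}\mathcal{Z}}}\le B_\mathcal{X}/\sqrt{\lambda_n}$ then bound $A+C$ by an explicit polynomial in $B_\mathcal{X},B_\mathcal{Z},\lVert F^\star\rVert_{\mathcal{G}_{\mathcal{X}\mathcal{Z}}},\lambda_n,1/\sqrt{\lambda_n}$ times $\lVert\hat g-g\rVert_{\mathcal{G}_{\mathcal{X}\mathcal{Z}}}$, plus a similar polynomial times $\lVert\hat T_Z-T_Z\rVert$; all the randomness now sits in these two centred i.i.d.\ averages.

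It remains to concentrate $\hat g-g$ and $\hat T_Z-T_Z$. Under the isometry $\mathcal{G}_{\mathcal{X}\mathcal{Z}}\cong\mathcal{H}_\mathcal{Z}\otimes\mathcal{H}_\mathcal{X}$ one has $\Phi_{z_i}k_\mathcal{X}(x_i,\cdot)=k_\mathcal{Z}(z_i,\cdot)\otimes k_\mathcal{X}(x_i,\cdot)$, of norm $\le B_\mathcal{Z}B_\mathcal{X}$, so a Hilbert-space Bernstein inequality applies directly to $\hat g-g$; the same isometry turns $\hat T_Z-T_Z$ into $(\hat{\mathcal{C}}_{\mathit{ZZ}}-\mathcal{C}_{\mathit{ZZ}})\otimes\text{Id}_{\mathcal{H}_\mathcal{X}}$, whence $\lVert\hat T_Z-T_Z\rVert=\lVert\hat{\mathcal{C}}_{\mathit{ZZ}}-\mathcal{C}_{\mathit{ZZ}}\rVert_{\text{op}}\le\lVert\hat{\mathcal{C}}_{\mathit{ZZ}}-\mathcal{C}_{\mathit{ZZ}}\rVert_{\text{HS}}$, the latter a centred average of Hilbert--Schmidt operators of norm $\le B_\mathcal{Z}^2$ to which a Hilbert-space Bernstein inequality again applies; this step is also where the infinite-dimensionality of $\mathcal{H}_\mathcal{X}$ must be absorbed, since $\hat T_Z$ itself need not be trace-class and a naive operator Bernstein bound would fail. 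Running each Bernstein bound at confidence $1-\delta/2$, and, the tails being two-sided (factor $2$), setting the tail to $\delta/2$ and solving $\tfrac{nt^2}{2}=\ln(4/\delta)\,(\sigma^2+ct/3)$ for $t$ produces exactly $\tfrac{c\ln(4/\delta)}{3n}\big(1+\sqrt{1+18n\sigma^2/(c^2\ln(4/\delta))}\big)$, which with the crude bound $\sigma^2\le c^2$ becomes the factor $\tfrac{\ln(4/\delta)}{3n}\big(1+\sqrt{1+18n/\ln(4/\delta)}\big)$ of the statement. Substituting into the bound on $A+C$, adding $B\le\lambda_n\lVert F^\star\rVert^2_{\mathcal{G}_{\mathcal{X}\mathcal{Z}}}$, pulling out the common factor and collecting the polynomial coefficients over $\lambda_n$ should reassemble the claimed inequality, with $(B_\mathcal{Z}\lVert F^\star\rVert_{\mathcal{G}_{\mathcal{X}\mathcal{Z}}}+B_\mathcal{X})^2\lambda_n+B_\mathcal{X}^2(B_\mathcal{Z}+\sqrt{\lambda_n})^2$ emerging as the bracketed factor and the overall $2/\lambda_n$ absorbing leftover constants; a union bound over the two events gives probability $1-\delta$.

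The main obstacle I expect is the final bookkeeping: making the two heterogeneous Bernstein bounds, their almost-sure constants ($\sim B_\mathcal{Z}B_\mathcal{X}$ and $\sim B_\mathcal{Z}^2$), and the $1/\sqrt{\lambda_n}$ and $1/\lambda_n$ factors inherited from $\lVert\hat F\rVert_{\mathcal{G}_{\mathcal{X}\mathcal{Z}}}\le B_\mathcal{X}/\sqrt{\lambda_n}$ collapse into the single clean factor $(B_\mathcal{Z}\lVert F^\star\rVert_{\mathcal{G}_{\mathcal{X}\mathcal{Z}}}+B_\mathcal{X})^2\lambda_n+B_\mathcal{X}^2(B_\mathcal{Z}+\sqrt{\lambda_n})^2$ without any slack that would spoil the $\mathcal{O}_p(n^{-1/4})$ rate (obtained by balancing $\lambda_n\lVert F^\star\rVert^2_{\mathcal{G}_{\mathcal{X}\mathcal{Z}}}$ against the dominant $B_\mathcal{X}^2 B_\mathcal{Z}^2/(\lambda_n\sqrt{n})$ term, which forces $\lambda_n\sim n^{-1/4}$). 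The secondary subtlety, flagged above, is the operator-norm control of $\hat T_Z-T_Z$ in the infinite-dimensional regime, for which the reduction to the scalar covariance operator $\hat{\mathcal{C}}_{\mathit{ZZ}}$ on $\mathcal{H}_\mathcal{Z}$ is what keeps a dimension-free Bernstein bound available.
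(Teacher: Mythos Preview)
The paper does not actually prove this theorem: its proof reads, in full, ``Follows immediately from \cite[Theorem 2.4]{park2020regularised}'', i.e.\ it defers to a companion technical paper by the same authors. So a line-by-line comparison is not possible from the present paper alone.

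That said, your proposal is a sound and standard route to such a bound and is very likely close in spirit to what the cited companion paper does. The decomposition $A+B+C$, the control $B\le\lambda_n\lVert F^\star\rVert^2_{\mathcal{G}_{\mathcal{X}\mathcal{Z}}}$, the a priori bound $\lVert\hat F\rVert_{\mathcal{G}_{\mathcal{X}\mathcal{Z}}}\le B_\mathcal{X}/\sqrt{\lambda_n}$, the quadratic-form expression for $A+C$, and the reduction $\lVert\hat T_Z-T_Z\rVert=\lVert\hat{\mathcal{C}}_{ZZ}-\mathcal{C}_{ZZ}\rVert$ via the tensor structure are all correct and are exactly the devices one expects. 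The Bernstein inversion you wrote down is also correct and does produce the factor $\tfrac{\ln(4/\delta)}{3n}\big(1+\sqrt{1+18n/\ln(4/\delta)}\big)$.

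The only point where your sketch may not land exactly on the stated inequality is the final bookkeeping, which you yourself flag. If one bounds $A+C$ crudely via $\lVert\hat F\pm F^\star\rVert\le B_\mathcal{X}/\sqrt{\lambda_n}+\lVert F^\star\rVert$ and plugs in Bernstein with almost-sure bounds $B_\mathcal{Z}B_\mathcal{X}$ and $B_\mathcal{Z}^2$, the resulting polynomial in $B_\mathcal{X},B_\mathcal{Z},\lVert F^\star\rVert,\lambda_n^{-1/2}$ matches the claimed bracket $(B_\mathcal{Z}\lVert F^\star\rVert+B_\mathcal{X})^2\lambda_n+B_\mathcal{X}^2(B_\mathcal{Z}+\sqrt{\lambda_n})^2$ in all dominant terms (in particular the leading $B_\mathcal{X}^2B_\mathcal{Z}^2/\lambda_n$ term), but a couple of lower-order monomials differ, e.g.\ you produce a $2B_\mathcal{Z}^2B_\mathcal{X}\lVert F^\star\rVert/\sqrt{\lambda_n}$ cross term whereas the stated bound has $2B_\mathcal{X}^2$ in its place. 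This does not affect the $\mathcal{O}_p(n^{-1/4})$ conclusion at $\lambda_n\asymp n^{-1/4}$, so your argument certainly proves the corollary stated after the theorem; recovering the \emph{exact} displayed constants would likely require organising the two Bernstein applications slightly differently (or consulting the companion paper to see precisely how they group terms).
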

In particular, if \(\lambda_n=\mathcal{O}(n^{-1/4})\), then \(\tilde{\mathcal{E}}_{X|Z}(\hat{F}_{P_{X|Z},n,\lambda_n})-\tilde{\mathcal{E}}_{X|Z}(F_{P_{X|Z}})=\mathcal{O}_p(n^{-1/4})\). The boundedness assumption is satisfied with many commonly used kernels, such as the Gaussian and Laplacian, and hence is not a restrictive condition. Note that some smoothness assumption on \(F_{P_{X|Z}}\) or other distributional assumptions are necessary to achieve universal convergence rates, otherwise the rates can be arbitrarily slow -- for more discussion, see e.g. \citep[p.56]{vapnik1998nature}, \citep[p.114, Theorem 7.2]{devroye1996probabilistic} or \citep[p.32, Theorem 3.1]{gyorfi2006distribution}. It is likely that better (and even optimal) rates can be achieved with further assumptions (see e.g. \cite{caponnetto2006optimal,steinwart2009optimal,blanchard2018optimal} for results with real or finite-dimensional output spaces), but we leave further investigation of learning rates with infinite-dimensional output spaces as future work. 

Theorem \ref{Tconsistency} is stated with respect to the surrogate loss \(\tilde{\mathcal{E}}_{X|Z}\), not the original loss \(\mathcal{E}_{X|Z}\). Let us now investigate its implications with respect to the original loss. Write \(\eta=\tilde{\mathcal{E}}_{X|Z}(F_{P_{X|Z}})\). Since \(\tilde{\mathcal{E}}_{X|Z}(\hat{F}_{P_{X|Z},n,\lambda_n})\geq\mathcal{E}_{X|Z}(\hat{F}_{P_{X|Z},n,\lambda_n})\), a consequence of Theorem \ref{Tconsistency} is that \(\lim_{n\rightarrow\infty}P_{\mathit{XZ}}(\mathcal{E}_{X|Z}(\hat{F}_{P_{X|Z},n,\lambda_n})>\epsilon+\eta)\leq\lim_{n\rightarrow\infty}P_{\mathit{XZ}}(\tilde{\mathcal{E}}_{X|Z}(\hat{F}_{P_{X|Z},n,\lambda_n})-\eta>\epsilon)=0\) for any \(\epsilon>0\). This shows that, in the limit as \(n\rightarrow\infty\), the loss \(\mathcal{E}_{X|Z}(\hat{F}_{P_{X|Z},n,\lambda_n})\) is at most an arbitrarily small amount larger than \(\eta\) with high probability. 

It remains to investigate what \(\eta\) represents, and how large it is. The law of total expectation gives \(\eta=\mathbb{E}_{X,Z}[\lVert k_\mathcal{X}(X,\cdot)-F_{P_{X|Z}}(Z)\rVert_{\mathcal{H}_\mathcal{X}}^2]=\mathbb{E}_Z[\mathbb{E}_{X|Z}[\lVert k_\mathcal{X}(X,\cdot)-\mathbb{E}_{X|Z}[k_\mathcal{X}(X,\cdot)\mid Z]\rVert_{\mathcal{H}_\mathcal{X}}^2\mid Z]]\). Here, the integrand \(\mathbb{E}_{X|Z}[\lVert k_\mathcal{X}(X,\cdot)-\mathbb{E}_{X|Z}[k_\mathcal{X}(X,\cdot)|\mid Z]\rVert_{\mathcal{H}_\mathcal{X}}^2\mid Z]\) is the \textit{variance} of \(k_\mathcal{X}(X,\cdot)\) given \(Z\) (see \citep[p.24]{bharucha1972random} for the definition of the variance of Banach-space valued random variables), and by integrating over \(\mathcal{Z}\) in the outer integral, \(\eta\) represents the \say{expected variance} of \(k_\mathcal{X}(X,\cdot)\).

Suppose \(X\) is measurable with respect to \(Z\), i.e. \(F_{P_{X|Z}}\) has no noise. Then \(\mathbb{E}_{X|Z}[k_\mathcal{X}(X,\cdot)\mid Z]=k_\mathcal{X}(X,\cdot)\), and consequently, \(\eta=0\). In this case, we have universal consistency in both the surrogate loss \(\tilde{\mathcal{E}}_{X|Z}\) and the original loss \(\mathcal{E}_{X|Z}\). On the other hand, \(\eta\) will be large if information about \(Z\) tells us little about \(X\), and subsequently \(k_\mathcal{X}(X,\cdot)\in\mathcal{H}_\mathcal{X}\). In the extreme case where \(X\) and \(Z\) are independent, we have \(\mathbb{E}_{X|Z}[k_\mathcal{X}(X,\cdot)\mid Z]=\mathbb{E}_X[k_\mathcal{X}(X,\cdot)]\), and \(\eta=\mathbb{E}_X[\lVert k_\mathcal{X}(X,\cdot)-\mathbb{E}_X[k_\mathcal{X}(X,\cdot)]\rVert_{\mathcal{H}_\mathcal{X}}^2]\), which is precisely the variance of \(k_\mathcal{X}(X,\cdot)\) in \(\mathcal{H}_\mathcal{X}\). Hence, \(\eta\) represents the irreducible loss of the true function due to noise in \(X\), and the surrogate loss represents the loss functional taking noise into account, while the original loss measures the deviance from the true conditional expectation. 

\section{Measures of discrepancy between conditional distributions and conditional independence}
In this section, we propose conditional analogues of the maximum mean discrepancy (MMD) and the Hilbert-Schmidt independence criterion (HSIC), to measure, respectively, the discrepancy between conditional distributions and conditional independence.
\subsection{Maximum conditional mean discrepancy}\label{SSdiscrepancy}
Let \(X':\Omega\rightarrow\mathcal{X}\), \(Z':\Omega\rightarrow\mathcal{Z}\) be additional random variables, with \(\int_\mathcal{X}\sqrt{k_\mathcal{X}(x',x')}dP_{X'}(x')<\infty\). Following Theorem \ref{Tcinlar4.4generalised}, we write \(\mu_{P_{X|Z}}=F_{P_{X|Z}}\circ Z\) and \(\mu_{P_{X'|Z'}}=F_{P_{X'|Z'}}\circ Z'\). 
\begin{definition}\label{Dmcmd}
	We define the \textit{maximum conditional mean discrepancy} (MCMD) between \(P_{X|Z}\) and \(P_{X'|Z'}\) to be the function \(\mathcal{Z}\rightarrow\mathbb{R}\) defined by \(M_{P_{X|Z},P_{X'|Z'}}(z)=\lVert F_{P_{X|Z}}(z)-F_{P_{X'|Z'}}(z)\rVert_{\mathcal{H}_\mathcal{X}}\).
\end{definition}
Using \(\{(x_i,z_i)\}_{i=1}^n,\{(x'_j,z'_j)\}_{j=1}^m\) from joint distributions \(P_{\mathit{XZ}},P_{\mathit{X'Z'}}\), we obtain a closed-form, plug-in estimate from (\ref{Eempirical}) for the square of the MCMD function as
\begin{alignat*}{2}
&\hat{M}_{P_{X|Z},P_{X'|Z'}}^2(\cdot)=\lVert\hat{F}_{P_{X|Z},n,\lambda}(\cdot)-\hat{F}_{P_{X'|Z'},m,\lambda'}(\cdot)\rVert_{\mathcal{H}_\mathcal{X}}^2\\
&=\mathbf{k}^T_Z(\cdot)\mathbf{W}_Z\mathbf{K}_X\mathbf{W}_Z^T\mathbf{k}_Z(\cdot)-2\mathbf{k}^T_Z(\cdot)\mathbf{W}_Z\mathbf{K}_{\mathit{XX'}}\mathbf{W}_{Z'}^T\mathbf{k}_{Z'}(\cdot)+\mathbf{k}^T_{Z'}(\cdot)\mathbf{W}_{Z'}\mathbf{K}_{X'}\mathbf{W}_{Z'}^T\mathbf{k}_{Z'}(\cdot),
\end{alignat*}
where \([\mathbf{K}_X]_{ij}=k_\mathcal{X}(x_i,x_j)\), \([\mathbf{K}_{X'}]_{ij}=k_\mathcal{X}(x'_i,x'_j)\), \([\mathbf{K}_{\mathit{XX'}}]_{ij}=k_\mathcal{X}(x_i,x'_j)\), \([\mathbf{K}_{\mathit{Z'}}]_{ij}=k_\mathcal{X}(z'_i,z'_j)\), \(\mathbf{k}_{Z'}(\cdot)=(k_Z(z'_1,\cdot),...,k_Z(z'_m,\cdot))^T\), \(\mathbf{W}_Z=(\mathbf{K}_Z+n\lambda\mathbf{I}_n)^{-1}\) and \(\mathbf{W}_{Z'}=(\mathbf{K}_{Z'}+m\lambda'\mathbf{I}_m)^{-1}\).

The term MMD stems from the equality \(\lVert\mu_{P_X}-\mu_{P_{X'}}\rVert_{\mathcal{H}_\mathcal{X}}=\sup_{f\in\mathcal{B}_\mathcal{X}}\lvert\mathbb{E}_X[f(X)]-\mathbb{E}_{X'}[f(X')]\rvert\) \citep{gretton2007kernel,sriperumbudur2010hilbert}, where \(\mathcal{B}_\mathcal{X}\vcentcolon=\{f\in\mathcal{H}_\mathcal{X}\mid\lVert f\rVert_{\mathcal{H}_\mathcal{X}}\leq1\}\). The supremum is attained by the \textit{witness function}, \(\frac{\mu_{P_X}-\mu_{P_{X'}}}{\lVert\mu_{P_X}-\mu_{P_{X'}}\rVert_{\mathcal{H}_\mathcal{X}}}\) \citep{gretton2012kernel}. Using Lemma \ref{Lconditionalinterchange}, the analogous (almost sure) equality for the MCMD is \(\sup_{f\in\mathcal{B}_\mathcal{X}}\lvert\mathbb{E}_{X|Z}[f(X)\mid Z]-\mathbb{E}_{X'|Z'}[f(X')\mid Z']\rvert=\lVert\mu_{P_{X|Z}}-\mu_{P_{X'|Z'}}\rVert_{\mathcal{H}_\mathcal{X}}\). We define the \textit{conditional witness function} as the \(\mathcal{H}_\mathcal{X}\)-valued random variable \(\frac{\mu_{P_{X|Z}}-\mu_{P_{X'|Z'}}}{\lVert\mu_{P_{X|Z}}-\mu_{P_{X'|Z'}}\rVert_{\mathcal{H}_\mathcal{X}}}\). We can informally think of \(\text{MCMD}_{P_{X|Z},P_{X'|Z'}}(z)\) as \say{MMD between \(P_{X|Z=z}\) and \(P_{X'|Z'=z}\)}. However, we do not have i.i.d. samples from \(P_{X|Z=z}\) and \(P_{X'|Z'=z}\), and hence the estimation cannot be done by U- or V-statistic procedures as done for the MMD. The following theorem says that, with characteristic kernels, the MCMD can indeed act as a discrepancy measure between conditional distributions. 
\begin{theorem}\label{Tmcmd}
	Suppose that \(k_\mathcal{X}\) is characteristic, that \(P_Z\) and \(P_{Z'}\) are absolutely continuous with respect to each other, and that \(P(\cdot\mid Z)\) and \(P(\cdot\mid Z')\) admit regular versions. Then \(M_{P_{X|Z},P_{X'|Z'}}=0\) almost everywhere if and only if, for almost all \(z\in\mathcal{Z}\), \(P_{X|Z=z}(B)=P_{X'|Z'=z}(B)\) for all \(B\in\mathfrak{X}\). 
\end{theorem}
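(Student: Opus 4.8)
The plan is to connect the $\mathcal{H}_\mathcal{X}$-valued conditional expectations $\mu_{P_{X|Z}} = F_{P_{X|Z}} \circ Z$ with the scalar-valued conditional expectations of functions $f \in \mathcal{H}_\mathcal{X}$, and then exploit the characteristic property of $k_\mathcal{X}$ to turn equality of embeddings into equality of conditional measures. The key link is Lemma~\ref{Lconditionalinterchange}, which says $\mathbb{E}_{X|Z}[f(X)\mid Z] = \langle f, \mu_{P_{X|Z}}\rangle_{\mathcal{H}_\mathcal{X}}$ almost surely, and Theorem~\ref{Tregularversion}, which (given a regular version $Q$) lets us write these conditional expectations as genuine integrals against conditional measures: $\langle f, F_{P_{X|Z}}(z)\rangle_{\mathcal{H}_\mathcal{X}} = \int_\mathcal{X} f(x)\, P_{X|Z=z}(dx)$ for $P_Z$-almost all $z$. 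The mutual absolute continuity of $P_Z$ and $P_{Z'}$ is what allows us to compare the two sides on a common full-measure set of $z$'s.

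The forward direction is the substantive one. Suppose $M_{P_{X|Z},P_{X'|Z'}} = 0$ almost everywhere, i.e. $F_{P_{X|Z}}(z) = F_{P_{X'|Z'}}(z)$ in $\mathcal{H}_\mathcal{X}$ for $z$ outside a $P_Z$-null (equivalently, by mutual absolute continuity, $P_{Z'}$-null) set $N$. First I would handle the measure-theoretic subtlety that ``almost everywhere'' should be read with respect to a measure dominating both $P_Z$ and $P_{Z'}$, but mutual absolute continuity collapses this to a single null-set notion. Then, for $z \notin N$ and any $f \in \mathcal{H}_\mathcal{X}$, applying the inner product with $f$ and using Lemma~\ref{Lconditionalinterchange} together with Theorem~\ref{Tregularversion} gives $\int_\mathcal{X} f\, dP_{X|Z=z} = \langle f, F_{P_{X|Z}}(z)\rangle = \langle f, F_{P_{X'|Z'}}(z)\rangle = \int_\mathcal{X} f\, dP_{X'|Z'=z}$. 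The nuisance is that the exceptional null set in Lemma~\ref{Lconditionalinterchange} depends on $f$; to get a single null set working for all $f$ simultaneously I would invoke separability of $\mathcal{H}_\mathcal{X}$ — take a countable dense set $\{f_m\}$, get a single full-measure set on which $\langle f_m, F_{P_{X|Z}}(z)\rangle = \int f_m\, dP_{X|Z=z}$ for all $m$, and pass to the limit by continuity of both sides in $f$. This yields $\mu_{P_{X|Z=z}} := \int k_\mathcal{X}(x,\cdot)\, dP_{X|Z=z}(x) = F_{P_{X|Z}}(z)$ as an identity of RKHS elements (again using density and the reproducing property to identify the Bochner integral coordinatewise), and similarly for the primed objects. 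Hence $\mu_{P_{X|Z=z}} = \mu_{P_{X'|Z'=z}}$ for $z \notin N$, and the characteristic property of $k_\mathcal{X}$ (Definition~\ref{Dcharacteristic}) forces $P_{X|Z=z} = P_{X'|Z'=z}$ on $\mathfrak{X}$.

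The reverse direction is essentially immediate: if $P_{X|Z=z} = P_{X'|Z'=z}$ for $P_Z$-almost all $z$, then $\mu_{P_{X|Z=z}} = \mu_{P_{X'|Z'=z}}$ for those $z$, so $F_{P_{X|Z}}(z) = F_{P_{X'|Z'}}(z)$ and thus $M_{P_{X|Z},P_{X'|Z'}}(z) = 0$ almost everywhere. I expect the main obstacle to be the careful bookkeeping of null sets: reconciling the $f$-dependent exceptional sets in Lemma~\ref{Lconditionalinterchange}, the exceptional set where Theorem~\ref{Tregularversion} fails to identify $QH$ with $\mathbb{E}[H \mid \mathcal{E}]$, and the hypothesis's ``almost everywhere'' — and showing these can all be merged into one $P_Z$-null (hence $P_{Z'}$-null) set using separability of $\mathcal{H}_\mathcal{X}$ and mutual absolute continuity of $P_Z, P_{Z'}$. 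The remaining steps are routine given the machinery already set up in the excerpt.
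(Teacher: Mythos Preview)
Your proposal is correct and follows the same overall strategy as the paper: use the regular versions (Theorem~\ref{Tregularversion}) to identify \(F_{P_{X|Z}}(z)\) with the kernel mean embedding of the regular conditional law \(P_{X|Z=z}\), then invoke the characteristic property of \(k_\mathcal{X}\), handling null sets via the mutual absolute continuity of \(P_Z\) and \(P_{Z'}\).

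The one tactical difference worth noting: you plan to pass through scalar-valued \(f\in\mathcal{H}_\mathcal{X}\) via Lemma~\ref{Lconditionalinterchange} and then use separability of \(\mathcal{H}_\mathcal{X}\) to merge the \(f\)-dependent null sets. The paper avoids this detour entirely by applying Theorem~\ref{Tregularversion} directly to the \(\mathcal{H}_\mathcal{X}\)-valued integrand \(H=k_\mathcal{X}(X,\cdot)\), which immediately yields \(\mu_{P_{X|Z}}(\omega)=\int_\mathcal{X}k_\mathcal{X}(x,\cdot)\,P_{X|Z}(\omega)(dx)\) on a single full-measure set, and then transfers this \(\omega\)-wise identity to a \(z\)-wise identity \(F_{P_{X|Z}}(z)=\int_\mathcal{X}k_\mathcal{X}(x,\cdot)\,R_{dx}(z)\) by a short contradiction argument. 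This buys a cleaner proof with no separability bookkeeping; your route also works but is slightly longer.
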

By \cite[p.11 \& p.151, Theorem 2.10]{cinlar2011probability}, we know that the space \((\Omega,\mathcal{F})\) being a Polish space with its Borel \(\sigma\)-algebra is a sufficient condition for \(P(\cdot\mid\mathcal{E})\) to have a regular version for any sub-\(\sigma\)-algebra \(\mathcal{E}\) of \(\mathcal{F}\). Hence, the assumption that \(P(\cdot\mid Z)\) admits a regular version is not a restrictive one. 

The MCMD is reminiscent of the \textit{conditional maximum mean discrepancy} of \citep{ren2016conditional}, defined as the Hilbert-Schmidt norm of the operator \(\mathcal{U}_{X|Z}-\mathcal{U}_{X'|Z}\) (see Definition \ref{Dconditionalmeanembedding}). However, due to previously discussed assumptions, \(\mathcal{U}_{X|Z}\) and \(\mathcal{U}_{X'|Z}\) often do not even exist, and/or do not have the desired properties of Theorem \ref{Tsong4}, so even at population level, \(\mathcal{U}_{X|Z}-\mathcal{U}_{X'|Z}\) is often not an exact measure of discrepancy between conditional distributions, unlike the MCMD. Moreover, \cite{ren2016conditional} only considers the case when the conditioning variable is the same.

\begin{figure*}[t]
	\begin{center}
		\centerline{\includegraphics[scale=0.3]{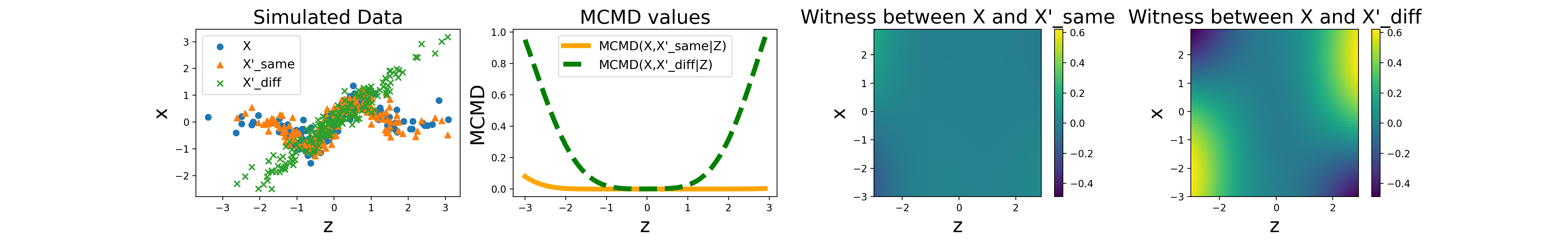}}
		\caption{We see that \(\text{MCMD}(X,X'_\text{same}|Z)\approx0\) \(\forall Z\). Near \(Z=0\), where the dependence on \(Z\) of \(X\) and \(X'_\text{diff}\) are similar, \(\text{MCMD}(X,X'_\text{diff}|Z)\approx0\), whereas away from 0, the dependence on \(Z\) of \(X\) and \(X'_\text{diff}\) are different, and so \(\text{MCMD}(X,X'_\text{diff}|Z)>0\). We also see that the conditional witness function between \(X\) and \(X'_\text{same}\) gives 0 at all values of \(X\) given any value of \(Z\), whereas we have a saddle-like function between \(X\) and \(X'_\text{diff}\), with non-zero functions in \(X\) in the regions of \(Z\) away from 0.}
		\label{Fmcmd}
	\end{center}
\end{figure*}

\subsection{Hilbert-Schmidt conditional independence criterion}\label{SSconditionalindependence}
In this subsection, we introduce a novel criterion of conditional independence.
\begin{definition}\label{Dhscic}
	We define the \textit{Hilbert-Schmidt Conditional Independence Criterion} between \(X\) and \(Y\) given \(Z\) to be \(\text{HSCIC}(X,Y\mid Z)=\lVert\mu_{P_{\mathit{XY}|Z}}-\mu_{P_{X|Z}}\otimes\mu_{P_{Y|Z}}\rVert_{\mathcal{H}_\mathcal{X}\otimes\mathcal{H}_\mathcal{Y}}\).
\end{definition}
We can write \(\text{HSCIC}(X,Y\mid Z)=H_{X,Y|Z}\circ Z\)
for some \(H_{X,Y|Z}:\mathcal{Z}\rightarrow\mathbb{R}\). Given a sample \(\{(x_i,y_i,z_i)\}^n_{i=1}\) from \(P_{XYZ}\), we obtain a plug-in, closed-form estimate of \(H_{X,Y|Z}^2(\cdot)\) as follows:
\begin{alignat*}{2}
\hat{H}^2_{X,Y|Z}(\cdot)&=\mathbf{k}_Z^T(\cdot)\mathbf{W}(\mathbf{K}_X\odot\mathbf{K}_Y)\mathbf{W}^T\mathbf{k}_Z(\cdot)-2\mathbf{k}_Z^T(\cdot)\mathbf{W}((\mathbf{K}_X\mathbf{W}^T\mathbf{k}_Z(\cdot))\odot(\mathbf{K}_Y\mathbf{W}^T\mathbf{k}_Z(\cdot)))\\
&\enspace+(\mathbf{k}_Z^T(\cdot)\mathbf{W}\mathbf{K}_X\mathbf{W}^T\mathbf{k}_Z(\cdot))(\mathbf{k}_Z^T(\cdot)\mathbf{W}\mathbf{K}_Y\mathbf{W}^T\mathbf{k}_Z(\cdot))
\end{alignat*}
where \([\mathbf{K}_Y]_{ij}\vcentcolon=k_\mathcal{Y}(y_i,y_j)\) and \(\odot\) denotes elementwise multiplication of matrices. 

Casting aside measure-theoretic issues arising from conditioning on an event of probability 0, we can conceptually think of the realisation of the HSCIC at each \(z=Z(\omega)\) as \say{the HSIC between \(P_{X|Z=z}\) and \(P_{Y|Z=z}\)}. Again, we do not have multiple samples from each distribution \(P_{X|Z=z}\) and \(P_{Y|Z=z}\), so the estimation cannot be done by U- or V-statistic procedures as done for HSIC. The following theorem shows that HSCIC is a measure of conditional independence. 
\begin{theorem}\label{Thscic}
	Suppose \(k_\mathcal{X}\otimes k_\mathcal{Y}\) is a characteristic kernel\footnote{See \citep{szabo2017characteristic} for a detailed discussion on characteristic tensor product kernels.} on \(\mathcal{X}\times\mathcal{Y}\), and that \(P(\cdot\mid Z)\) admits a regular version. Then \(\textnormal{HSCIC}(X,Y\mid Z)=0\) almost surely if and only if \(X\independent Y\mid Z\). 
\end{theorem}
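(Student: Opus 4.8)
The plan is to reduce the statement to a pointwise (in $z$) version of the standard characterisation of independence via the tensor-product mean embedding, and then to translate between the measure-theoretic conditional expectations appearing in the definition of the HSCIC and the conditional distributions $P_{X|Z=z}$, $P_{Y|Z=z}$ obtained from a regular version. First I would use the representation $\textnormal{HSCIC}(X,Y\mid Z) = H_{X,Y|Z}\circ Z$ together with the observation that $\textnormal{HSCIC}(X,Y\mid Z)=0$ almost surely is equivalent to $H_{X,Y|Z}(z)=0$ for $P_Z$-almost all $z$, i.e.\ to $\mu_{P_{XY|Z}}(\omega)=\mu_{P_{X|Z}}(\omega)\otimes\mu_{P_{Y|Z}}(\omega)$ for $P$-almost all $\omega$. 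So the goal becomes: for $P_Z$-almost every $z$, the equality of the conditional joint embedding and the product of conditional marginal embeddings holds if and only if $P_{X|Z=z}\otimes'$-factorises, where by $X\independent Y\mid Z$ I mean that for $P_Z$-almost all $z$, the regular conditional distribution $P_{(X,Y)|Z=z}$ equals the product measure $P_{X|Z=z}\otimes P_{Y|Z=z}$ on $(\mathcal{X}\times\mathcal{Y},\mathfrak{X}\otimes\mathfrak{Y})$.

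Next I would invoke Theorem \ref{Tregularversion}: since $P(\cdot\mid Z)$ admits a regular version $Q$, the three conditional expectations $\mu_{P_{XY|Z}}$, $\mu_{P_{X|Z}}$, $\mu_{P_{Y|Z}}$ can be computed, up to $P$-null sets, as genuine Bochner integrals against $Q_\omega$. Pushing $Q_\omega$ forward under $(X,Y)$, $X$, and $Y$ respectively, one obtains probability measures $P_{(X,Y)|Z=z}$, $P_{X|Z=z}$, $P_{Y|Z=z}$ (with $z=Z(\omega)$, well-defined $P_Z$-a.e.) such that $\mu_{P_{XY|Z}}(\omega) = \mu_{P_{(X,Y)|Z=z}}$, the unconditional kernel mean embedding of that conditional law into $\mathcal{H}_\mathcal{X}\otimes\mathcal{H}_\mathcal{Y}$, and similarly for the marginals. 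The integrability condition \eqref{Estrongerintegrability}, via the conditional Jensen inequality already used in Section \ref{SSuniversalityconsistency}, guarantees the relevant Bochner integrals exist $P$-a.s., so these unconditional embeddings are well-defined a.e. At this point the statement has been localised to a fixed (generic) $z$: $\mu_{P_{(X,Y)|Z=z}} = \mu_{P_{X|Z=z}}\otimes\mu_{P_{Y|Z=z}}$ iff $P_{(X,Y)|Z=z} = P_{X|Z=z}\otimes P_{Y|Z=z}$.

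For the fixed-$z$ equivalence I would argue exactly as in the unconditional HSIC case recalled at the end of Section \ref{SSrkhs}: the product measure $P_{X|Z=z}\otimes P_{Y|Z=z}$ has kernel mean embedding $\mu_{P_{X|Z=z}}\otimes\mu_{P_{Y|Z=z}}$ in $\mathcal{H}_\mathcal{X}\otimes\mathcal{H}_\mathcal{Y}$ (check this by testing against $f\otimes g$ using Lemma \ref{Ltensorinterchange} and Fubini, then extend to all of $\mathcal{H}_\mathcal{X}\otimes\mathcal{H}_\mathcal{Y}$ by density from Theorem \ref{Tberlinet13}); hence $\mu_{P_{(X,Y)|Z=z}} - \mu_{P_{X|Z=z}}\otimes\mu_{P_{Y|Z=z}}$ is the difference of the embeddings of two probability measures on $\mathcal{X}\times\mathcal{Y}$, and since $k_\mathcal{X}\otimes k_\mathcal{Y}$ is characteristic this difference vanishes iff the two measures coincide. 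The ``if'' direction is then immediate, and the ``only if'' direction follows by collecting the $P_Z$-null exceptional sets (one for each of the three a.e.\ identifications, plus the characteristic-kernel conclusion) into a single null set.

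The main obstacle I anticipate is purely measure-theoretic bookkeeping rather than a deep difficulty: ensuring that the map $\omega\mapsto\mu_{P_{(X,Y)|Z=z}}$ is genuinely a function of $Z(\omega)$ (so that ``$X\independent Y\mid Z$'' is phrased in terms of the regular conditional law indexed by $z$, consistently with Theorem \ref{Tcinlar4.4generalised}), and handling the interchange of the Bochner integral defining $\mu_{P_{XY|Z}}$ with the tensor factorisation — i.e.\ justifying that $\int k_\mathcal{X}(x,\cdot)\otimes k_\mathcal{Y}(y,\cdot)\,dQ_\omega = \mu_{P_{(X,Y)|Z=z}}$ as an element of the tensor RKHS, which is where separability of the RKHSs and the integrability bound \eqref{Estrongerintegrability} get used. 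I would isolate the statement ``the embedding of a product measure is the tensor product of the embeddings'' as a small lemma (it may already be folklore) to keep the main argument clean.
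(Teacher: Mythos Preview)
Your proposal is correct and follows essentially the same approach as the paper's proof: both invoke Theorem \ref{Tregularversion} to express $\mu_{P_{XY|Z}}$, $\mu_{P_{X|Z}}$, $\mu_{P_{Y|Z}}$ as Bochner integrals against the regular version $Q_\omega$ (up to $P$-null sets), use Fubini to identify $\mu_{P_{X|Z}}(\omega)\otimes\mu_{P_{Y|Z}}(\omega)$ with the embedding of the product measure $P_{X|Z}(\omega)P_{Y|Z}(\omega)$, apply the characteristic property of $k_\mathcal{X}\otimes k_\mathcal{Y}$ at each fixed $\omega$, and then intersect the finitely many full-measure events. The only cosmetic difference is that the paper works directly on $\Omega$ with events $A,A_1,A_2,A_3$, whereas you phrase things in terms of $z\in\mathcal{Z}$ via $H_{X,Y|Z}\circ Z$ and explicitly isolate ``embedding of a product is the tensor of embeddings'' as a lemma; neither change affects the substance of the argument.
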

Concurrent and independent work by \citet{sheng2019distance} proposes a similar criterion with the same nomenclature (HSCIC). However, they omit the discussion of CMEs entirely, and define the HSCIC as the usual HSIC between \(P_{XY|Z=z}\) and \(P_{X|Z=z}P_{Y|Z=z}\), without considerations for conditioning on an event of measure 0. Their focus is more on investigating connections to distance-based measures \citep{wang2015conditional,sejdinovic2013equivalence}. \citet{fukumizu2008kernel} propose \(I^{\mathit{COND}}\), defined as the squared Hilbert-Schmidt norm of the normalised conditional cross-covariance operator \(V_{\ddot{Y}\ddot{X}|Z}\vcentcolon=\mathcal{C}_{\ddot{Y}\ddot{Y}}^{-1/2}\Sigma_{\ddot{Y}\ddot{X}|Z}\mathcal{C}_{\ddot{X}\ddot{X}}^{-1/2}\), where \(\ddot{X}\vcentcolon=(X,Z)\) and \(\ddot{Y}\vcentcolon=(Y,Z)\). As discussed, these operator-based definitions rely on a number of strong assumptions that will often mean that \(V_{\ddot{Y}\ddot{X}|Z}\) does not exist, or it does not satisfy the conditions for it to be used as an exact criterion even at population level. On the other hand, the HSCIC defined as in Definition \ref{Dhscic} is an exact mathematical criterion of conditional independence at population level. Note that \(I^{\mathit{COND}}\) is a single-value criterion, whereas the HSCIC is a random criterion. 

\begin{figure*}[t]
	\begin{center}
		\centerline{\includegraphics[scale=0.28]{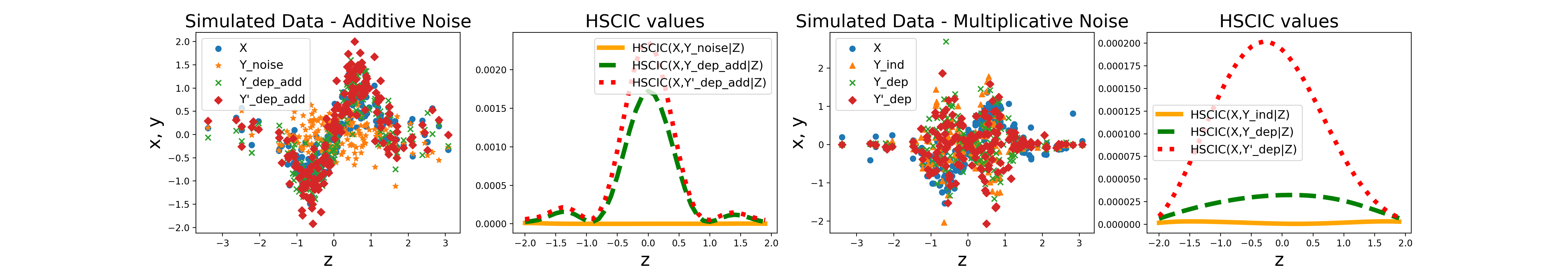}}
		\caption{We see that \(\text{HSCIC}(X,Y_\text{noise}|Z)\approx0\) (left) and \(\text{HSCIC}(X,Y_\text{ind}|Z)\approx0\) (right) for all \(Z\), whereas \(\text{HSCIC}(X,Y_\text{dep\_add}|Z)>0\), \(\text{HSCIC}(X,Y'_\text{dep\_add}|Z)>0\), \(\text{HSCIC}(X,Y_\text{dep}|Z)>0\), \(\text{HSCIC}(X,Y'_\text{dep}|Z)>0\). In particular, the dependence of \(Y'_\text{dep\_add}\) and \(Y'_\text{dep}\) on \(X\) is greater than that of \(Y_\text{dep\_add}\) and \(Y_\text{dep}\), and is represented by larger values of \(\text{HSCIC}(X,Y'_\text{dep\_add}|Z)\) and \(\text{HSCIC}(X,Y'_\text{dep}|Z)\) compared to \(\text{HSCIC}(X,Y_\text{dep}|Z)\) and \(\text{HSCIC}(X,Y_\text{dep\_add}|Z)\). }
		\label{Fhscic}
	\end{center}
\end{figure*}

\subsection{Experiments}\label{Sexperiments}
We carry out simulations to demonstrate the behaviour of the MCMD and HSCIC. In all simulations, we use the Gaussian kernel \(k_\mathcal{X}(x,x')=k_\mathcal{Y}(x,x')=k_\mathcal{Z}(x,x')=e^{-\frac{1}{2}\sigma_X\lVert x-x'\rVert^2_2}\) with hyperparameter \(\sigma_X=0.1\), and regularisation parameter \(\lambda=0.01\). 

In Figure \ref{Fmcmd}, we simulate 500 samples from \(Z,Z'\sim\mathcal{N}(0,1)\), \(X=e^{-0.5Z^2}\sin(2Z)+N_X\), \(X'_\text{same}=e^{-0.5Z'^2}\sin(2Z')+N_X\) and \(X'_\text{diff}=Z'+N_X\), where \(N_X\sim0.3\mathcal{N}(0,1)\) is the (additive) noise variable. The first plot shows simulated data, the second MCMD values against Z, and the heatmaps show the (unnormalised) conditional witness function, whose norm gives the MCMD. 

In Figure \ref{Fhscic}, on the left, we simulate 500 samples from the additive noise model, \(Z\sim\mathcal{N}(0,1)\), \(X=e^{-0.5Z^2}\sin(2Z)+N_X\), \(Y_\text{noise}=N_Y\), \(Y_\text{dep\_add}=e^{-0.5Z^2}\sin(2Z)+N_X+0.2X\) and \(Y'_\text{dep\_add}=e^{-0.5Z^2}\sin(2Z)+N_X+0.4X\), where \(N_X\sim0.3\mathcal{N}(0,1)\) is the (additive) noise variable. On the right, we simulate 500 samples from the multiplicative noise model, \(Z\sim\mathcal{N}(0,1)\), \(X=Y_\text{ind}=e^{-0.5Z^2}\sin(2Z)N_X\), \(Y_\text{dep}=e^{-0.5Z^2}\sin(2Z)N_Y+0.2X\) and \(Y'_\text{dep}=e^{-0.5Z^2}\sin(2Z)N_Y+0.4X\), where \(N_X,N_Y\sim0.3\mathcal{N}(0,1)\) are the (multiplicative) noise variables. 

\section{Conclusion}\label{Sconclusion}
In this paper, we proposed a new approach to kernel conditional mean embeddings, based on Bochner conditional expectation. Compared to the previous operator-based approaches, it does not rely on stringent assumptions that are often violated in common situations. Using this new approach, we discussed how to obtain empirical estimates via natural vector-valued regression, establishing universal consistency under no distributional assumptions and convergence rate of \(\mathcal{O}_p(n^{-1/4})\) in the well-specified case. Finally, we extended the notions of the MMD, witness function and HSIC to the conditional case. We believe that our new approach has the potential to unlock the powerful arsenal of kernel mean embeddings to the conditional setting, in a more convenient and rigorous manner. 

\section*{Broader Impact}
The nature of this work is theoretical, and hence we do not feel it is applicable to discuss its broader societal impact. 

\ack
We thank Mattes Mollenhauer at Freie Universit\"at Berlin for pointing out the missing conditions on the regularization parameter of our initial universal consistency result, and for other fruitful discussions. We also thank anonymous reviewers for pointing out typos, suggesting several improvements and correcting a mistake in the proof of Theorem \ref{Tcinlar4.4generalised}. Finally, we thank Simon Buchholz, Alessandro Ialongo, Heiner Kremer and Jonas K\"ubler at MPI T\"ubingen for helpful feedback on initial drafts. 

The idea behind this paper was conceived, and part of the work done, while JP was a Master's student at the Seminar for Statistics, Department of Mathematics, ETH Z\"urich. JP is extremely grateful to his Master's thesis supervisor, Professor Sara van de Geer, for readily accepting the proposed topic, and her expert guidance throughout the thesis. 

This work was funded by the federal and state governments of Germany through the Max Planck Society (MPG). 

\bibliography{ref}
\bibliographystyle{abbrvnat}
\newpage
\appendix
\section{Generalised Jensen's Inequality}\label{SgeneralisedJensen}
In Section \ref{Sempirical}, we require a version of Jensen's inequality generalised to (possibly) infinite-dimensional vector spaces, because our random variable takes values in \(\mathcal{H}_\mathcal{X}\), and our convex function is \(\lVert\cdot\rVert_{\mathcal{H}_\mathcal{X}}^2:\mathcal{H}_\mathcal{X}\rightarrow\mathbb{R}\). Note that this square norm function is indeed convex, since, for any \(t\in[0,1]\) and any pair \(f,g\in\mathcal{H}_\mathcal{X}\), 
\begin{alignat*}{3}
\lVert tf+(1-t)g\rVert_{\mathcal{H}_\mathcal{X}}^2&\leq(t\lVert f\rVert_{\mathcal{H}_\mathcal{X}}+(1-t)\lVert g\rVert_{\mathcal{H}_\mathcal{X}})^2\qquad&&\text{by the triangle inequality}\\
&\leq t\lVert f\rVert_{\mathcal{H}_\mathcal{X}}^2+(1-t)\lVert g\rVert_{\mathcal{H}_\mathcal{X}}^2,&&\text{by the convexity of }x\mapsto x^2.
\end{alignat*}
The following theorem generalises Jensen's inequality to infinite-dimensional vector spaces. 
\begin{theorem}[Generalised Jensen's Inequality, \citep{perlman1974jensen}, Theorem 3.10]\label{TgeneralisedJensen}
	Suppose \(\mathcal{T}\) is a real Hausdorff locally convex (possibly infinite-dimensional) linear topological space, and let \(C\) be a closed convex subset of \(\mathcal{T}\). Suppose \((\Omega,\mathcal{F},P)\) is a probability space, and \(V:\Omega\rightarrow\mathfrak{T}\) a Pettis-integrable random variable such that \(V(\Omega)\subseteq C\). Let \(f:C\rightarrow[-\infty,\infty)\) be a convex, lower semi-continuous extended-real-valued function such that \(\mathbb{E}_V[f(V)]\) exists. Then 
	\[f(\mathbb{E}_V[V])\leq\mathbb{E}_V[f(V)].\]
\end{theorem}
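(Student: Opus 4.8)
\textit{Proof proposal.} The plan is to reduce the infinite-dimensional statement to the elementary one-dimensional Jensen inequality, applied along every continuous linear functional, by exploiting two facts: (i) the defining property of the Pettis (weak) integral, namely $\phi(\mathbb{E}_V[V]) = \mathbb{E}_V[\phi(V)]$ for every $\phi \in \mathcal{T}^{*}$; and (ii) the Hahn--Banach consequence that a proper lower semi-continuous convex function is the pointwise supremum of its continuous affine minorants.

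First I would check that the barycentre $m := \mathbb{E}_V[V]$ lies in $C$, so that $f(m)$ is meaningful. Since $C$ is closed and convex it is the intersection of all closed half-spaces $\{x \in \mathcal{T} : \phi(x) \le \alpha\}$ ($\phi \in \mathcal{T}^{*}$, $\alpha \in \mathbb{R}$) that contain it, by Hahn--Banach separation in the locally convex space $\mathcal{T}$. For any such half-space we have $\phi(V(\omega)) \le \alpha$ for all $\omega$, hence $\phi(m) = \mathbb{E}_V[\phi(V)] \le \alpha$, so $m$ lies in every such half-space and therefore in $C$. Next I would dispose of the degenerate case: if $f$ attains the value $-\infty$ at some point $x_0 \in C$, then for any $y \in C$ convexity gives $f((1-t)y + tx_0) \le (1-t)f(y) + t f(x_0) = -\infty$ for $t \in (0,1]$, and letting $t \downarrow 0$ lower semi-continuity forces $f(y) = -\infty$; thus $f \equiv -\infty$ on $C$ and the inequality is trivial.

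In the remaining case $f$ is proper (nowhere $-\infty$, and nowhere $+\infty$ since it maps into $[-\infty,\infty)$), lsc and convex; extending it by $+\infty$ outside the closed convex set $C$ and applying the Fenchel--Moreau theorem yields $f(x) = \sup\{\phi(x) + c : \phi \in \mathcal{T}^{*},\, c \in \mathbb{R},\, \phi(\cdot) + c \le f \text{ on } C\}$ for all $x \in C$. For each admissible pair $(\phi, c)$, Pettis linearity gives $\phi(m) + c = \mathbb{E}_V[\phi(V)] + c = \mathbb{E}_V[\phi(V) + c]$; since $\phi(V) + c \le f(V)$ pointwise and $\phi(V) \in L^{1}$ by Pettis integrability, the negative part $(f(V))^{-}$ is dominated by the integrable $(\phi(V)+c)^{-}$, so $\mathbb{E}_V[f(V)]$ is a well-defined element of $(-\infty,\infty]$ and monotonicity of the expectation gives $\phi(m) + c = \mathbb{E}_V[\phi(V)+c] \le \mathbb{E}_V[f(V)]$. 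Taking the supremum over all admissible $(\phi,c)$ yields $f(m) = f(\mathbb{E}_V[V]) \le \mathbb{E}_V[f(V)]$.

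The step I expect to be the main obstacle is not any single hard estimate but the bookkeeping of the extended-real-valued setting: confirming that the Fenchel--Moreau representation applies verbatim once $f$ is proper (this is where local convexity of $\mathcal{T}$ is genuinely used, to separate points of the epigraph by continuous affine functionals), and verifying that the hypothesis ``$\mathbb{E}_V[f(V)]$ exists'' is precisely what legitimises the monotone passage from $\mathbb{E}_V[\phi(V)+c]$ to $\mathbb{E}_V[f(V)]$ in the arithmetic of $[-\infty,\infty]$. The remaining ingredients — $m \in C$, the $-\infty$ dichotomy, and Pettis linearity — are routine.
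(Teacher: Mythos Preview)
The paper does not actually give its own proof of this theorem: Theorem~\ref{TgeneralisedJensen} is stated as a citation of \cite[Theorem 3.10]{perlman1974jensen} with no accompanying argument. What the paper \emph{does} prove is the conditional version (Theorem~\ref{TgeneralisedconditionalJensen}), and there it follows exactly the route you propose: it invokes the envelope representation \(f(v)=\sup\{m(v):m\text{ affine},\,m\le f\text{ on }C\}\) (crediting Perlman for this step), pushes each affine minorant through the expectation using linearity, and takes the supremum. So your proposal is correct and is essentially the same argument the paper relies on, just carried out in the unconditional setting.

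If anything, your write-up is more complete than the paper's sketch: you explicitly verify that the barycentre \(\mathbb{E}_V[V]\) lies in \(C\) via Hahn--Banach separation of closed convex sets, and you dispose of the degenerate case \(f\equiv-\infty\). The paper's proof of the conditional version does neither of these explicitly, presumably leaving them to the Perlman reference. Your identification of the Fenchel--Moreau step as the place where local convexity is genuinely used is also accurate and matches the structure of Perlman's argument.
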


We will actually apply generalised Jensen's inequality with conditional expectations, so we need the following theorem. 
\begin{theorem}[Generalised Conditional Jensen's Inequality]\label{TgeneralisedconditionalJensen}
	Suppose \(\mathcal{T}\) is a real Hausdorff locally convex (possibly infinite-dimensional) linear topological space, and let \(C\) be a closed convex subset of \(\mathcal{T}\). Suppose \((\Omega,\mathcal{F},P)\) is a probability space, and \(V:\Omega\rightarrow\mathcal{T}\) a Pettis-integrable random variable such that \(V(\Omega)\subseteq C\). Let \(f:C\rightarrow[-\infty,\infty)\) be a convex, lower semi-continuous extended-real-valued function such that \(\mathbb{E}_V[f(V)]\) exists. Suppose \(\mathcal{E}\) is a sub-\(\sigma\)-algebra of \(\mathcal{F}\). Then 
	\[f(\mathbb{E}[V\mid\mathcal{E}])\leq\mathbb{E}[f(V)\mid\mathcal{E}].\]
\end{theorem}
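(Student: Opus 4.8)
The plan is to adapt the classical supporting-hyperplane proof of the conditional Jensen inequality to the vector-valued setting. Write $W := \mathbb{E}[V\mid\mathcal{E}]$ and $g := \mathbb{E}[f(V)\mid\mathcal{E}]$, and extend $f$ to all of $\mathcal{T}$ by setting $f\equiv+\infty$ on $\mathcal{T}\setminus C$, so that $f$ becomes a proper, lower semi-continuous, convex function on $\mathcal{T}$; the claim to be proved is simply $f(W)\le g$ almost surely. By Hahn--Banach separation applied to the closed convex epigraph of $f$, one has $f(v)=\sup\{\,\ell(v):\ell\text{ continuous affine on }\mathcal{T},\ \ell\le f\,\}$ for every $v\in\mathcal{T}$. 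The strategy is then: (i) show $\ell(W)\le g$ a.s.\ for each such $\ell$, using that affine maps commute with conditional expectation; and (ii) push the supremum over $\ell$ through the ``almost surely'' by reducing to a countable subfamily.

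For step (i), fix a continuous affine $\ell$ with $\ell\le f$ and write $\ell(v)=\phi(v)+c$ with $\phi\in\mathcal{T}^\ast$, $c\in\mathbb{R}$. Since $V(\Omega)\subseteq C$ we have $\ell(V)\le f(V)$ everywhere, hence $\mathbb{E}[\ell(V)\mid\mathcal{E}]\le\mathbb{E}[f(V)\mid\mathcal{E}]=g$ a.s.\ by monotonicity of the scalar conditional expectation. A bounded linear functional commutes with the Pettis (in particular Bochner) integral and hence with conditional expectation: for every $A\in\mathcal{E}$, $\int_A\phi(V)\,dP=\phi\big(\int_A V\,dP\big)=\phi\big(\int_A W\,dP\big)=\int_A\phi(W)\,dP$, so $\phi(W)$ is a version of $\mathbb{E}[\phi(V)\mid\mathcal{E}]$ and therefore $\ell(W)=\phi(W)+c=\mathbb{E}[\ell(V)\mid\mathcal{E}]\le g$ a.s. For step (ii), note that in the relevant case $V$ is Bochner integrable, hence essentially separably valued, and so is $W$; let $S$ be the separable closure of the essential range of $W$. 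It then suffices to pick countably many continuous affine minorants $\{\ell_n\}$ of $f$ with $\sup_n\ell_n=f$ on $S$: outside the $P$-null set $\bigcup_n\{\ell_n(W)>g\}$ we get $f(W)=\sup_n\ell_n(W)\le g$, which in particular forces $W\in C$ a.s., completing the argument.

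The main obstacle is step (ii): the family of supporting hyperplanes of $f$ is uncountable, and one cannot intersect uncountably many null sets. The crux is therefore the functional-analytic fact that, on a separable subset, a lower semi-continuous convex function is the pointwise supremum of a \emph{countable} family of continuous affine minorants; this is where separability enters, and some care is needed at points of $S$ where $f$ is not continuous (the boundary of its effective domain). In the setting of the paper this is automatic: $\mathcal{T}=\mathcal{H}_\mathcal{X}$ is a separable Hilbert space, $C=\mathcal{H}_\mathcal{X}$, and $f=\lVert\cdot\rVert_{\mathcal{H}_\mathcal{X}}^2$ is everywhere continuous with $\lVert v\rVert_{\mathcal{H}_\mathcal{X}}^2=\sup_a\big(2\langle v,a\rangle_{\mathcal{H}_\mathcal{X}}-\lVert a\rVert_{\mathcal{H}_\mathcal{X}}^2\big)$, the supremum being attained as a countable supremum over any dense set of $a$. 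An alternative route avoids supporting hyperplanes entirely: apply Theorem~\ref{TgeneralisedJensen} to the conditioned measure $P(\cdot\mid A)$ for each $A\in\mathcal{E}$ with $P(A)>0$, obtaining $f\big(\tfrac{1}{P(A)}\int_A W\,dP\big)\le\tfrac{1}{P(A)}\int_A g\,dP$; but upgrading this family of averaged inequalities to the pointwise statement $f(W)\le g$ a.s.\ requires a martingale or differentiation argument, so it merely relocates the difficulty.
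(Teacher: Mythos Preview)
Your approach is essentially the same as the paper's: represent \(f\) as the pointwise supremum of its continuous affine minorants \(\ell=v^\ast+\alpha\), and for each such \(\ell\) show \(\ell(\mathbb{E}[V\mid\mathcal{E}])\le\mathbb{E}[f(V)\mid\mathcal{E}]\) a.s.\ by commuting \(v^\ast\) with the conditional expectation. The paper verifies the commutation by checking the defining property of conditional expectation via \(\int_A v^\ast(\mathbb{E}[V\mid\mathcal{E}])\,dP=v^\ast(\int_A\mathbb{E}[V\mid\mathcal{E}]\,dP)=v^\ast(\int_A V\,dP)=\int_A v^\ast(V)\,dP\), exactly as you do.

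Where you differ is that you take step (ii) seriously, and rightly so. The paper's proof, after establishing the inequality for each fixed minorant, simply writes ``Now take the supremum of the right-hand side over \(Q\)'' and concludes, without addressing that the exceptional null set depends on the minorant and \(Q\) is uncountable. You have identified a genuine gap in the paper's argument; your proposed resolution --- reduce to a countable family of affine minorants using separability of the essential range of \(W\) --- is the standard fix, and as you observe it is immediate in the only case the paper actually uses, namely \(f=\lVert\cdot\rVert_{\mathcal{H}_\mathcal{X}}^2\) on a separable Hilbert space, where one may take \(\ell_a(v)=2\langle v,a\rangle_{\mathcal{H}_\mathcal{X}}-\lVert a\rVert_{\mathcal{H}_\mathcal{X}}^2\) over a countable dense set of \(a\). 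So your outline is not merely an alternative route but a strictly more careful version of the paper's own proof.
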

\begin{proof}
	Let \(\mathcal{T}^*\) be the dual space of all real-valued continuous linear functionals on \(\mathcal{T}\). The first part of the proof of \citep[Theorem 3.6]{perlman1974jensen} tells us that, for all \(v\in\mathcal{T}\), we can write
	\[f(v)=\sup\{m(v)\mid m\text{ affine, }m\leq f\text{ on }C\},\]
	where an \textit{affine} function \(m\) on \(\mathcal{T}\) is of the form \(m(v)=v^*(v)+\alpha\) for some \(v^*\in\mathcal{T}^*\) and \(\alpha\in\mathbb{R}\). If we define the subset \(Q\) of \(\mathcal{T}^*\times\mathbb{R}\) as
	\[Q\vcentcolon=\{(v^*,\alpha):v^*\in\mathcal{T}^*,\alpha\in\mathbb{R},v^*(v)+\alpha\leq f(v)\text{ for all }v\in\mathcal{T}\},\]
	then we can rewrite \(f\) as
	\begin{equation}\label{Econvexfunction}
	f(v)=\sup_{(v^*,\alpha)\in Q}\{v^*(v)+\alpha\},\qquad\text{for all }v\in\mathcal{T}.
	\end{equation}	
	
	See that, for any \((v^*,\alpha)\in Q\), we have
	\begin{alignat*}{3}
	\mathbb{E}\left[f(V)\mid\mathcal{E}\right]&\geq\mathbb{E}\left[v^*(V)+\alpha\mid\mathcal{E}\right]&&\text{almost surely, by assumption (*)}\\
	&=\mathbb{E}\left[v^*\left(V\right)\mid\mathcal{E}\right]+\alpha\qquad&&\text{almost surely, by linearity (**).}
	\end{alignat*}
	Here, (*) and (**) use the properties of conditional expectation of vector-valued random variables given in \citep[pp.45-46, Properties 43 and 40 respectively]{dinculeanu2000vector}. 

	We want to show that \(\mathbb{E}\left[v^*(V)\mid\mathcal{E}\right]=v^*\left(\mathbb{E}\left[V\mid\mathcal{E}\right]\right)\) almost surely, and in order to so, we show that the right-hand side is a version of the left-hand side. The right-hand side is clearly \(\mathcal{E}\)-measurable, since we have a linear operator on an \(\mathcal{E}\)-measurable random variable. Moreover, for any \(A\in\mathcal{E}\), 
	\begin{alignat*}{3}
	\int_Av^*\left(\mathbb{E}\left[V\mid\mathcal{E}\right]\right)dP&=v^*\left(\int_A\mathbb{E}\left[V\mid\mathcal{E}\right]dP\right)\qquad&&\text{by \citep[p.403, Proposition E.11]{cohn2013measure}}\\
	&=v^*\left(\int_AVdP\right)&&\text{by the definition of conditional expectation}\\
	&=\int_Av^*\left(V\right)dP&&\text{by \citep[p.403, Proposition E.11]{cohn2013measure}}
	\end{alignat*}
	(here, all the equalities are almost-sure equalities). Hence, by the definition of the conditional expectation, we have that \(\mathbb{E}\left[v^*(V)\mid\mathcal{E}\right]=v^*\left(\mathbb{E}\left[V\mid\mathcal{E}\right]\right)\) almost surely. Going back to our above work, this means that
	\[\mathbb{E}\left[f(V)\mid\mathcal{E}\right]\geq v^*\left(\mathbb{E}\left[V\mid\mathcal{E}\right]\right)+\alpha.\]
	
	Now take the supremum of the right-hand side over \(Q\). Then (\ref{Econvexfunction}) tells us that
	\[\mathbb{E}\left[f(V)\mid\mathcal{E}\right]\geq f\left(\mathbb{E}\left[V\mid\mathcal{E}\right]\right),\]
	as required. 
\end{proof}
In the context of Section \ref{Sempirical}, \(\mathcal{H}_\mathcal{X}\) is real and Hausdorff, and locally convex (because it is a normed space). We take the closed convex subset to be the whole space \(\mathcal{H}_\mathcal{X}\) itself. The function \(\lVert\cdot\rVert_{\mathcal{H}_\mathcal{X}}^2:\mathcal{H}_\mathcal{X}\rightarrow\mathbb{R}\) is convex (as shown above) and continuous, and finally, since Bochner-integrability implies Pettis integrability, all the conditions of Theorem \ref{TgeneralisedconditionalJensen} are satisfied. 

\section{Generalisation Error Bounds}\label{Sgeneralisation}
\citet{caponnetto2006optimal} give an optimal rate of convergence of vector-valued RKHS regression estimators, and its results are quoted by \citet{grunewalder2012conditional} as the state of the art convergence rates, \(O(\frac{\log{n}}{n})\). In particular, this implies that the learning algorithm is consistent. However, the lower rate uses an assumption that the output space is a finite-dimensional Hilbert space \citep[Theorem 2]{caponnetto2006optimal}; and in our case, this will mean that \(\mathcal{H}_\mathcal{X}\) is finite-dimensional. This is not true if, for example, we take \(k_\mathcal{X}\) to be the Gaussian kernel; indeed, this is noted as a limitation by \citet{grunewalder2012conditional}, stating that \say{It is likely that this (finite-dimension) assumption can be weakened, but this requires a deeper analysis}. In this paper, we do not want to restrict our attention to finite-dimensional \(\mathcal{H}_\mathcal{X}\). The upper bound would have been sufficient to guarantee consistency, but an assumption used in the upper bound requires the operator \(l_{XZ,z}:\mathcal{H}_\mathcal{X}\rightarrow\mathcal{G}_{\mathcal{X}\mathcal{Z}}\) defined by
\[l_{XZ,z}(f)(z')=l_{XZ}(z,z')(f)\]
to be Hilbert-Schmidt for all \(z\in\mathcal{Z}\). However, for each \(z\in\mathcal{Z}\), taking any orthonormal basis \(\{\varphi_i\}_{i=1}^\infty\) of \(\mathcal{H}_\mathcal{X}\), we see that
\begin{alignat*}{2}
\sum^\infty_{i=1}\langle l_{XZ,z}(\varphi_i),l_{XZ,z}(\varphi_i)\rangle_{\mathcal{G}_{\mathcal{X}\mathcal{Z}}}
&=\sum^\infty_{i=1}\langle k_\mathcal{Z}(z,\cdot)\varphi_i,k_\mathcal{Z}(z,\cdot)\varphi_i\rangle_{\mathcal{G}_{\mathcal{X}\mathcal{Z}}}\\
&=\sum^\infty_{i=1}\langle k_\mathcal{Z}(z,z)\varphi_i,\varphi_i\rangle_{\mathcal{H}_\mathcal{X}}\\
&=k_\mathcal{Z}(z,z)\sum^\infty_{i=1}1\\
&=\infty,
\end{alignat*}
meaning this assumption is not fulfilled with our choice of kernel either. Hence, results in \cite{caponnetto2006optimal}, used by \cite{grunewalder2012conditional}, are not applicable to guarantee consistency in our context.

\citet{kadri2016operator} address the problem of generalisability of function-valued learning algorithms, using the concept of uniform algorithmic stability \cite{bousquet2002stability}. Let us write
\[\mathcal{D}\vcentcolon=\{(x_1,z_1),...,(x_n,z_n)\}\]
for our training set of size \(n\) drawn i.i.d. from the distribution \(P_{XZ}\), and we denote by \(\mathcal{D}^i=\mathcal{D}\backslash(x_i,z_i)\) the set \(\mathcal{D}\) from which the data point \((x_i,z_i)\) is removed. Further, we denote by \(\hat{F}_{P_{X|Z},\mathcal{D}}=\hat{F}_{P_{X|Z},n,\lambda}\) the estimate produced by our learning algorithm from the dataset \(\mathcal{D}\) by minimising the loss \(\hat{\mathcal{E}}_{X|Z,n,\lambda}(F)=\sum^n_{i=1}\lVert k_\mathcal{X}(x_i,\cdot)-F(z_i)\rVert_{\mathcal{H}_\mathcal{X}}^2+\lambda\lVert F\rVert_{\mathcal{G}_{\mathcal{X}\mathcal{Z}}}^2\)

The assumptions used in this paper, with notations translated to our context, are
\begin{enumerate}[1.]
	\item There exists \(\kappa_1>0\) such that for all \(z\in\mathcal{Z}\), 
	\[\lVert l_{\mathcal{X}\mathcal{Z}}(z,z)\rVert_{\text{op}}=\sup_{f\in\mathcal{H}_\mathcal{X}}\frac{\lVert l_{\mathcal{X}\mathcal{Z}}(z,z)(f)\rVert_{\mathcal{H}_\mathcal{X}}}{\lVert f\rVert_{\mathcal{H}_\mathcal{X}}}\leq\kappa_1^2.\]
	\item The real function \(\mathcal{Z}\times\mathcal{Z}\rightarrow\mathbb{R}\) defined by
	\[(z_1,z_2)\mapsto\langle l_{\mathcal{X}\mathcal{Z}}(z_1,z_2)f_1,f_2\rangle_{\mathcal{H}_\mathcal{X}}\]
	is measurable for all \(f_1,f_2\in\mathcal{H}_\mathcal{X}\). 
	\item The map \((f,F,z)\mapsto\lVert f-F(z)\rVert_{\mathcal{H}_\mathcal{X}}^2\) is \(\tau\)-admissible, i.e. convex with respect to \(F\) and Lipschitz continuous with respect to \(F(z)\), with \(\tau\) as its Lipschitz constant. 
	\item There exists \(\kappa_2>0\) such that for all \((z,f)\in\mathcal{Z}\times\mathcal{H}_\mathcal{X}\) and any training set \(\mathcal{D}\),
	\[\lVert f-\hat{F}_{P_{X|Z},\mathcal{D}}(z)\rVert_{\mathcal{H}_\mathcal{X}}^2\leq\kappa_2.\]
\end{enumerate}
The concept of \textit{uniform stability}, with notations translated to our context, is defined as follows.
\begin{definition}[Uniform algorithmic stability, {\citep[Definition 6]{kadri2016operator}}]\label{Duniformstability}
	For each \(F\in\mathcal{G}_{\mathcal{X}\mathcal{Z}}\), define the function
	\begin{alignat*}{2}
	\mathcal{R}(F):&\mathcal{Z}\times\mathcal{H}_\mathcal{X}\rightarrow\mathbb{R}\\
	&(z,x)\mapsto\lVert k_\mathcal{X}(x,\cdot)-F(z)\rVert_{\mathcal{H}_\mathcal{X}}^2.
	\end{alignat*}
	
	A learning algorithm that calculates the estimate \(\hat{F}_{P_{X|Z},\mathcal{D}}\) from a training set has uniform stability \(\beta\) with respect to the squared loss if the following holds: for all \(n\geq1\), all \(i\in\{1,...,n\}\) and any training set \(\mathcal{D}\) of size \(n\), 
	\[\lVert\mathcal{R}(\hat{F}_{P_{X|Z},\mathcal{D}})-\mathcal{R}(\hat{F}_{P_{X|Z},\mathcal{D}^i})\rVert_\infty\leq\beta.\]
\end{definition}
The next two theorems are quoted from \citep{kadri2016operator}. 
\begin{theorem}[{\citep[Theorem 7]{kadri2016operator}}]\label{Tkadri7}
	Under assumptions 1, 2 and 3, a learning algorithm that maps a training set \(\mathcal{D}\) to the function \(\hat{F}_{P_{X|Z},\mathcal{D}}=\hat{F}_{P_{X|Z},n,\lambda}\) is \(\beta\)-stable with
	\[\beta=\frac{\tau^2\kappa_1^2}{2\lambda n}.\]
\end{theorem}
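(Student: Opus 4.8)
The plan is to carry out the classical Bousquet--Elisseeff stability argument for Tikhonov-regularised empirical risk minimisation, transported to the vector-valued RKHS $\mathcal{G}_{\mathcal{X}\mathcal{Z}}$. Write $\ell(F,z,x)\vcentcolon=\lVert k_\mathcal{X}(x,\cdot)-F(z)\rVert_{\mathcal{H}_\mathcal{X}}^2$, abbreviate $\hat F\vcentcolon=\hat F_{P_{X|Z},\mathcal{D}}$ and $\hat F^i\vcentcolon=\hat F_{P_{X|Z},\mathcal{D}^i}$, and set $\Delta\vcentcolon=\hat F-\hat F^i\in\mathcal{G}_{\mathcal{X}\mathcal{Z}}$. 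Recall (Definition \ref{Duniformstability}) that $\mathcal{R}(F)(z,x)=\ell(F,z,x)$, so the target is an $\ell^\infty$ bound $\lVert\mathcal{R}(\hat F)-\mathcal{R}(\hat F^i)\rVert_\infty\le\beta$; this will follow once we control $\lVert\Delta\rVert_{\mathcal{G}_{\mathcal{X}\mathcal{Z}}}$.

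\emph{Step 1: strong convexity.} The map $F\mapsto\lambda\lVert F\rVert_{\mathcal{G}_{\mathcal{X}\mathcal{Z}}}^2$ is $2\lambda$-strongly convex and, by Assumption 3, each $F\mapsto\ell(F,z_j,x_j)$ is convex; hence the empirical objective $\hat{\mathcal{E}}_{X|Z,n,\lambda}(F)=\frac1n\sum_{j=1}^n\ell(F,z_j,x_j)+\lambda\lVert F\rVert_{\mathcal{G}_{\mathcal{X}\mathcal{Z}}}^2$ and its leave-one-out analogue $\hat{\mathcal{E}}^{\,i}(F)\vcentcolon=\frac1n\sum_{j\ne i}\ell(F,z_j,x_j)+\lambda\lVert F\rVert_{\mathcal{G}_{\mathcal{X}\mathcal{Z}}}^2$ are both $2\lambda$-strongly convex with unique minimisers $\hat F$ and $\hat F^i$ (existence and uniqueness come from Theorem \ref{Tmiccheli4.1}). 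For any $2\lambda$-strongly convex $g$ with minimiser $F^\star$ one has $g(G)-g(F^\star)\ge\lambda\lVert G-F^\star\rVert_{\mathcal{G}_{\mathcal{X}\mathcal{Z}}}^2$; applying this with $g=\hat{\mathcal{E}}_{X|Z,n,\lambda}$ evaluated at $\hat F^i$ and with $g=\hat{\mathcal{E}}^{\,i}$ evaluated at $\hat F$, and adding the two inequalities, the regularisers cancel, and using that $\hat{\mathcal{E}}_{X|Z,n,\lambda}(F)-\hat{\mathcal{E}}^{\,i}(F)=\frac1n\ell(F,z_i,x_i)$ for every $F$, the left-hand side reduces to $\frac1n(\ell(\hat F^i,z_i,x_i)-\ell(\hat F,z_i,x_i))$, so that
\[
2\lambda\lVert\Delta\rVert_{\mathcal{G}_{\mathcal{X}\mathcal{Z}}}^2\;\le\;\frac1n\bigl(\ell(\hat F^i,z_i,x_i)-\ell(\hat F,z_i,x_i)\bigr).
\]

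\emph{Step 2: Lipschitz and reproducing-kernel bounds, and conclusion.} By the Lipschitz clause of Assumption 3, $|\ell(\hat F^i,z_i,x_i)-\ell(\hat F,z_i,x_i)|\le\tau\lVert\Delta(z_i)\rVert_{\mathcal{H}_\mathcal{X}}$. The reproducing property of the operator-valued kernel (Section \ref{SSfunctionregression}) gives, for every $z\in\mathcal{Z}$, $\lVert\Delta(z)\rVert_{\mathcal{H}_\mathcal{X}}\le\lVert l_{\mathcal{X}\mathcal{Z}}(z,z)\rVert_{\mathrm{op}}^{1/2}\lVert\Delta\rVert_{\mathcal{G}_{\mathcal{X}\mathcal{Z}}}\le\kappa_1\lVert\Delta\rVert_{\mathcal{G}_{\mathcal{X}\mathcal{Z}}}$ by Assumption 1. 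Plugging these into Step 1 yields $2\lambda\lVert\Delta\rVert_{\mathcal{G}_{\mathcal{X}\mathcal{Z}}}^2\le\frac{\tau\kappa_1}{n}\lVert\Delta\rVert_{\mathcal{G}_{\mathcal{X}\mathcal{Z}}}$, hence $\lVert\Delta\rVert_{\mathcal{G}_{\mathcal{X}\mathcal{Z}}}\le\frac{\tau\kappa_1}{2\lambda n}$. Finally, for arbitrary $(z,x)$, $|\mathcal{R}(\hat F)(z,x)-\mathcal{R}(\hat F^i)(z,x)|\le\tau\lVert\Delta(z)\rVert_{\mathcal{H}_\mathcal{X}}\le\tau\kappa_1\lVert\Delta\rVert_{\mathcal{G}_{\mathcal{X}\mathcal{Z}}}\le\frac{\tau^2\kappa_1^2}{2\lambda n}$; taking the supremum over $(z,x)$ gives $\lVert\mathcal{R}(\hat F)-\mathcal{R}(\hat F^i)\rVert_\infty\le\beta$ with $\beta=\frac{\tau^2\kappa_1^2}{2\lambda n}$, as claimed.

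I expect the only real care to lie in Step 1: verifying that the two objectives are genuinely proper strongly convex functionals with well-defined unique minimisers (which Theorem \ref{Tmiccheli4.1} supplies), and that the ``minimiser implies quadratic growth'' inequality for $2\lambda$-strongly convex functions goes through verbatim in the infinite-dimensional Hilbert space $\mathcal{G}_{\mathcal{X}\mathcal{Z}}$ (it does — one uses the convex combination $\hat F+t(\hat F^i-\hat F)$ in the strong-convexity inequality and lets $t\downarrow0$). One should also check that Assumption 3's Lipschitz clause is being invoked only on the range of function values actually produced by the algorithm. Assumption 2 enters merely as a background measurability requirement ensuring the quantities are well defined and plays no active role in the bound; the remaining steps are routine.
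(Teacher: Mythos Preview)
The paper does not supply its own proof of this statement; it is quoted from \cite{kadri2016operator}, whose argument is the Bousquet--Elisseeff uniform-stability proof for Tikhonov-regularised ERM carried over to operator-valued kernels. Your proposal reproduces that argument essentially verbatim, so it coincides with the source proof.
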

\begin{theorem}[{\citep[Theorem 8]{kadri2016operator}}]\label{Tkadri8}
	Let \(\mathcal{D}\mapsto\hat{F}_{P_{X|Z},\mathcal{D}}=\hat{F}_{P_{X|Z},n,\lambda}\) be a learning algorithm with uniform stability \(\beta\), and assume Assumption 4 is satisfied. Then, for all \(n\geq1\) and any \(0<\delta<1\), the following bound holds with probability at least \(1-\delta\) over the random draw of training samples:
	\[\tilde{\mathcal{E}}_{X|Z}(\hat{F}_{P_{X|Z},n,\lambda})\leq\frac{1}{n}\hat{\mathcal{E}}_{X|Z,n}(\hat{F}_{P_{X|Z},n,\lambda})+2\beta+(4n\beta+\kappa_2)\sqrt{\frac{\ln{\frac{1}{\delta}}}{2n}}.\]
\end{theorem}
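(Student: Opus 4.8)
The plan is to recognise Theorem \ref{Tkadri8} as the classical algorithmic-stability generalisation bound of Bousquet and Elisseeff \cite{bousquet2002stability}, transported to the vector-valued regression setting of \cite{kadri2016operator}. The key observation is that the loss $\mathcal{R}(F)(z,x)=\lVert k_\mathcal{X}(x,\cdot)-F(z)\rVert_{\mathcal{H}_\mathcal{X}}^2$ is \emph{real-valued}, so even though the regressor $F$ lives in the (possibly infinite-dimensional) space $\mathcal{G}_{\mathcal{X}\mathcal{Z}}$ of $\mathcal{H}_\mathcal{X}$-valued functions, the scalar argument goes through verbatim. Write $R(F):=\tilde{\mathcal{E}}_{X|Z}(F)=\mathbb{E}_{X,Z}[\mathcal{R}(F)(Z,X)]$ for the population surrogate risk and $R_\mathcal{D}(F):=\tfrac1n\hat{\mathcal{E}}_{X|Z,n}(F)=\tfrac1n\sum_{i=1}^n\mathcal{R}(F)(z_i,x_i)$ for its empirical version on the training set $\mathcal{D}$, and introduce the defect functional $D(\mathcal{D}):=R(\hat{F}_{P_{X|Z},\mathcal{D}})-R_\mathcal{D}(\hat{F}_{P_{X|Z},\mathcal{D}})$. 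Assumption 2 makes $\mathcal{R}(F)$ measurable so these expectations are well defined, and Assumption 4 supplies the uniform bound $\mathcal{R}(F)(z,x)\le\kappa_2$ for the relevant estimators.

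First I would bound the mean defect by $\mathbb{E}_\mathcal{D}[D(\mathcal{D})]\le 2\beta$. This is the standard ghost-sample / renaming computation: by the i.i.d.\ assumption $\mathbb{E}_\mathcal{D}[R_\mathcal{D}(\hat{F}_{P_{X|Z},\mathcal{D}})]=\mathbb{E}_\mathcal{D}[\mathcal{R}(\hat{F}_{P_{X|Z},\mathcal{D}})(z_i,x_i)]$ for any fixed $i$, whereas $\mathbb{E}_\mathcal{D}[R(\hat{F}_{P_{X|Z},\mathcal{D}})]=\mathbb{E}_{\mathcal{D},(x',z')}[\mathcal{R}(\hat{F}_{P_{X|Z},\mathcal{D}})(z',x')]$; relabelling $(x',z')$ against $(x_i,z_i)$ and routing both terms through the reduced dataset $\mathcal{D}^i$ changes the estimator by at most $\beta$ in sup-norm at each step by Definition \ref{Duniformstability}, so the difference of the two expectations is at most $2\beta$.

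Next I would establish a bounded-differences property. If $\mathcal{D}$ and $\tilde{\mathcal{D}}$ agree in all but one coordinate, chaining through the common reduced set gives $\lVert\mathcal{R}(\hat{F}_{P_{X|Z},\mathcal{D}})-\mathcal{R}(\hat{F}_{P_{X|Z},\tilde{\mathcal{D}}})\rVert_\infty\le 2\beta$; hence the true-risk parts of $D(\mathcal{D})$ and $D(\tilde{\mathcal{D}})$ differ by at most $2\beta$, the $n-1$ shared summands of the empirical risks contribute at most $\tfrac{n-1}{n}\cdot 2\beta\le 2\beta$ more, and the single replaced summand contributes at most $\kappa_2/n$ by Assumption 4, so $\lvert D(\mathcal{D})-D(\tilde{\mathcal{D}})\rvert\le 4\beta+\kappa_2/n$. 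Applying McDiarmid's bounded-differences inequality to the real random variable $D(\mathcal{D})$ with these increments yields $P\!\big(D(\mathcal{D})-\mathbb{E}[D(\mathcal{D})]\ge\epsilon\big)\le\exp\!\big(-2\epsilon^2/(n(4\beta+\kappa_2/n)^2)\big)$; equating the right-hand side to $\delta$ gives $\epsilon=(4n\beta+\kappa_2)\sqrt{\ln(1/\delta)/(2n)}$, and combining with $\mathbb{E}[D(\mathcal{D})]\le 2\beta$ produces exactly $R(\hat{F}_{P_{X|Z},n,\lambda})\le R_\mathcal{D}(\hat{F}_{P_{X|Z},n,\lambda})+2\beta+(4n\beta+\kappa_2)\sqrt{\ln(1/\delta)/(2n)}$ with probability at least $1-\delta$.

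The only genuine point to verify — the ``main obstacle'', such as it is — is that the vector-valued output causes no trouble: one must confirm that $R(\hat{F}_{P_{X|Z},\mathcal{D}})$ and $R_\mathcal{D}(\hat{F}_{P_{X|Z},\mathcal{D}})$ are legitimate measurable, finite real functions of $\mathcal{D}$ (which is precisely where Assumptions 2 and 4 enter), and that uniform stability is phrased, as in Definition \ref{Duniformstability}, through the sup-norm of the scalar function $\mathcal{R}(\hat{F}_{P_{X|Z},\mathcal{D}})$ rather than through any norm on $\mathcal{H}_\mathcal{X}$ or $\mathcal{G}_{\mathcal{X}\mathcal{Z}}$. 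Once this is in place, McDiarmid's inequality and the renaming argument are indifferent to the ambient Hilbert space, and nothing beyond the scalar Bousquet--Elisseeff proof is needed. Since the statement is quoted directly from \cite{kadri2016operator}, one may alternatively simply invoke their Theorem 8.
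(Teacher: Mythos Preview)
The paper does not prove this theorem at all; it is simply quoted verbatim from \cite{kadri2016operator} (their Theorem 8), as the paper explicitly states just before the theorem: ``The next two theorems are quoted from \cite{kadri2016operator}.'' Your proof sketch correctly reconstructs the Bousquet--Elisseeff stability argument that underlies that result, and your closing remark that one may simply invoke \cite[Theorem 8]{kadri2016operator} is in fact exactly what the paper does.
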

Theorems \ref{Tkadri7} and \ref{Tkadri8} give us results about the generalisability of our learning algorithm. It remains to check whether the assumptions are satisfied. 

Assumption 2 is satisfied thanks to our assumption that point embeddings are measurable functions, and Assumption 1 is satisfied if we assume that \(k_\mathcal{Z}\) is a bounded kernel (i.e. there exists \(B_\mathcal{Z}>0\) such that \(k_\mathcal{Z}(z_1,z_2)\leq B_\mathcal{Z}\) for all \(z_1,z_2\in\mathcal{Z}\)), because
\[\lVert l_{\mathcal{X}\mathcal{Z}}(z,z)\rVert_{\text{op}}=\sup_{f\in\mathcal{H}_\mathcal{X},\lVert f\rVert_{\mathcal{H}_\mathcal{X}}=1}\lVert k_\mathcal{Z}(z,z)(f)\rVert_{\mathcal{H}_\mathcal{X}}\leq B_\mathcal{Z}.\]
In \cite{kadri2016operator}, a general loss function is used rather than the squared loss, and it is noted that Assumption 3 is in general \textit{not} satisfied with the squared loss, which is what we use in our context. However, this issue can be addressed if we restrict the output space to a bounded subset. In fact, the only elements in \(\mathcal{H}_\mathcal{X}\) that appear as the output samples in our case are \(k_\mathcal{X}(x,\cdot)\) for \(x\in\mathcal{X}\), so if we place the assumption that \(k_\mathcal{X}\) is a bounded kernel (i.e. there exists \(B_\mathcal{X}>0\) such that \(k_\mathcal{X}(x_1,x_2)\leq B_\mathcal{X}\) for all \(x_1,x_2\in\mathcal{X}\)), then by the reproducing property, 
\[\lVert k_\mathcal{X}(x,\cdot)\rVert_{\mathcal{H}_\mathcal{X}}=\sqrt{k_\mathcal{X}(x,x)}\leq\sqrt{B_\mathcal{X}}.\]
So it is no problem, in our case, to place this boundedness assumption. \citep[Appendix D]{kadri2016operator} tells us that Assumption 1 with this boundedness assumption implies Assumption 4 with
\[\kappa_2=B_\mathcal{X}\left(1+\frac{\kappa_1}{\sqrt{\lambda}}\right)^2,\]
while \citep[Lemma 2]{kadri2016operator} provides us with a condition which can replace Assumption 3 in Theorem \ref{Tkadri7}, giving us the uniform stability of our algorithm with
\[\beta=\frac{2\kappa_1^2B_\mathcal{X}\left(1+\frac{\kappa_1}{\sqrt{\lambda}}\right)^2}{\lambda n}.\]
Then the result of Theorem \ref{Tkadri8} holds with this new \(\beta\). 

\section{Proofs}\label{Sproofs}
\begin{customproof}{Lemma 2.1}
	For each \(f\in\mathcal{H}_\mathcal{X}\), \(\int_\mathcal{X}f(x)dP_X(x)=\langle f,\mu_{P_X}\rangle_{\mathcal{H}_\mathcal{X}}\). 
\end{customproof}
\begin{proof}
	Let \(L_P\) be a functional on \(\mathcal{H}\) defined by \(L_P(f)\vcentcolon=\int_\mathcal{X}f(x)dP(x)\). Then \(L_P\) is clearly linear, and moreover, 
	\begin{alignat*}{3}
	\lvert L_P(f)\rvert&=\left\lvert\int_\mathcal{X}f(x)dP(x)\right\rvert&&\\
	&=\left\lvert\int_\mathcal{X}\langle f,k(x,\cdot)\rangle_{\mathcal{H}}dP(x)\right\rvert&&\text{by the reproducing property}\\
	&\leq\int_\mathcal{X}\lvert\langle f,k(x,\cdot)\rangle_{\mathcal{H}}\rvert dP(x)&&\text{by Jensen's inequality}\\
	&\leq\lVert f\rVert_\mathcal{H}\int_\mathcal{X}\lVert k(x,\cdot)\rVert_\mathcal{H}dP(x)\qquad&&\text{by Cauchy-Schwarz inequalty}.
	\end{alignat*}
	Since the map \(x\mapsto k(x,\cdot)\) is Bochner \(P\)-integrable, \(L_P\) is bounded, i.e. \(L_P\in\mathcal{H}^*\). So by the Riesz Representation Theorem, there exists a unique \(h\in\mathcal{H}\) such that \(L_P(f)=\langle f,h\rangle_{\mathcal{H}}\) for all \(f\in\mathcal{H}\). 
	
	Choose \(f(\cdot)=k(x,\cdot)\) for some \(x\in\mathcal{X}\). Then
	\begin{alignat*}{2}
	h(x)&=\langle k(x,\cdot),h\rangle_{\mathcal{H}}\\
	&=L_P(k(x,\cdot))\\
	&=\int_\mathcal{X}k(x',x)dP(x'),
	\end{alignat*}
	which means \(h(\cdot)=\int_\mathcal{X}k(x,\cdot)dP(x)=\mu_P(\cdot)\) (implicitly applying \citep[Corollary 37]{dinculeanu2000vector}).
\end{proof}
\begin{customproof}{Lemma 2.3}
	For \(f\in\mathcal{H}_\mathcal{X}\), \(g\in\mathcal{H}_\mathcal{Y}\), \(\langle f\otimes g,\mu_{P_{\mathit{XY}}}\rangle_{\mathcal{H}_\mathcal{X}\otimes\mathcal{H}_\mathcal{Y}}=\mathbb{E}_{\mathit{XY}}[f(X)g(Y)]\).
\end{customproof}
\begin{proof}
	For Bochner integrability, we see that
	\begin{alignat*}{2}
	\mathbb{E}_{XY}\left[\left\lVert k_\mathcal{X}(X,\cdot)\otimes k_\mathcal{Y}(Y,\cdot)\right\rVert_{\mathcal{H}_\mathcal{X}\otimes\mathcal{H}_\mathcal{Y}}\right]&=\mathbb{E}_{XY}\left[\sqrt{k_\mathcal{X}(X,X)}\sqrt{k_\mathcal{Y}(Y,Y)}\right]\\
	&\leq\sqrt{\mathbb{E}_X\left[k_\mathcal{X}(X,X)\right]}\sqrt{\mathbb{E}_Y\left[k_\mathcal{Y}(Y,Y)\right]},
	\end{alignat*}
	by Cauchy-Schwarz inequality. (\ref{Estrongerintegrability}) now implies that \(k_\mathcal{X}(X,\cdot)\otimes k_\mathcal{Y}(Y,\cdot)\) is Bochner \(P_{XY}\)-integrable. 
	
	Let \(L_{P_{XY}}\) be a functional on \(\mathcal{H}_\mathcal{X}\otimes\mathcal{H}_\mathcal{Y}\) defined by \(L_{P_{XY}}\left(\sum_if_i\otimes g_i\right)\vcentcolon=\mathbb{E}_{XY}\left[\sum_if_i(X)g_i(Y)\right]\). Then \(L_{P_{XY}}\) is clearly linear, and moreover,
	\begin{alignat*}{3}
	\lvert&L_{P_{XY}}(\sum_if_i\otimes g_i)\rvert=\lvert\mathbb{E}_{XY}[\sum_if_i(X)g_i(Y)]\rvert\\
	&\leq\mathbb{E}_{XY}[\lvert\sum_if_i(X)g_i(Y)\rvert]&&\text{by Jensen's inequality}\\
	&=\mathbb{E}_{XY}[\lvert\langle\sum_if_i\otimes g_i,k_\mathcal{X}(X,\cdot)\otimes k_\mathcal{Y}(Y,\cdot)\rangle_{\mathcal{H}_\mathcal{X}\otimes\mathcal{H}_\mathcal{Y}}\rvert]&&\text{by the reproducing property}\\
	&\leq\lVert\sum_if_i\otimes g_i\rVert_{\mathcal{H}_\mathcal{X}\otimes\mathcal{H}_\mathcal{Y}}\mathbb{E}_{XY}\left[\left\lVert k_\mathcal{X}(X,\cdot)\otimes k_\mathcal{Y}(Y,\cdot)\right\rVert_{\mathcal{H}_\mathcal{X}\otimes\mathcal{H}_\mathcal{Y}}\right]\quad&&\text{by Cauchy-Schwarz inequality.}
	\end{alignat*}
	Hence, by Bochner integrability shown above, \(L_{P_{XY}}\in(\mathcal{H}_\mathcal{X}\otimes\mathcal{H}_\mathcal{Y})^*\). So by the Riesz Representation Theorem, there exists \(h\in\mathcal{H}_\mathcal{X}\otimes\mathcal{H}_\mathcal{Y}\) such that \(L_{P_{XY}}(\sum_if_i\otimes g_i)=\langle\sum_if_i\otimes g_i,h\rangle_{\mathcal{H}_\mathcal{X}\otimes\mathcal{H}_\mathcal{Y}}\) for all \(\sum_if_i\otimes g_i\in\mathcal{H}_\mathcal{X}\otimes\mathcal{H}_\mathcal{Y}\). 
	
	Choose \(k_\mathcal{X}(x,\cdot)\otimes k_\mathcal{Y}(y,\cdot)\in\mathcal{H}_\mathcal{X}\otimes\mathcal{H}_\mathcal{Y}\) for some \(x\in\mathcal{X}\) and \(y\in\mathcal{Y}\). Then
	\begin{alignat*}{2}
	h(x,y)&=\langle k_\mathcal{X}(x,\cdot)\otimes k_\mathcal{Y}(y,\cdot),h\rangle_{\mathcal{H}_\mathcal{X}\otimes\mathcal{H}_\mathcal{Y}}\qquad\text{by the reproducing property}\\
	&=L_{P_{XY}}(k_\mathcal{X}(x,\cdot)\otimes k_\mathcal{Y}(y,\cdot))\\
	&=\mathbb{E}_{XY}\left[k_\mathcal{X}(x,X)\otimes k_\mathcal{Y}(y,Y)\right]\\
	&=\mu_{P_{XY}}(x,y),
	\end{alignat*}
	as required. 
\end{proof}
\begin{lemma}\label{Lisomorphism}
	Let \(\{\varphi_i\}_{i=1}^\infty\) and \(\{\psi_j\}_{j=1}^\infty\) be orthonormal bases of \(\mathcal{H}_\mathcal{X}\) and \(\mathcal{H}_\mathcal{Y}\) respectively (note that they are countable, since the RKHSs are separable). Then the map
	\begin{alignat*}{2}
	\Phi:&\mathcal{H}_\mathcal{X}\otimes\mathcal{H}_\mathcal{Y}\rightarrow\text{HS}(\mathcal{H}_\mathcal{X},\mathcal{H}_\mathcal{Y})\\
	&\sum_{i=1,j=1}^\infty c_{i,j}(\varphi_i\otimes\psi_j)\mapsto[h\mapsto\sum_{i=1,j=1}^\infty c_{i,j}\langle h,\varphi_i\rangle_{\mathcal{H}_\mathcal{X}}\psi_j]
	\end{alignat*}
	is an isometric isomorphism.
\end{lemma}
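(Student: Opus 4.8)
The plan is to verify well-definedness and isometry in one stroke, and then deduce bijectivity. First I would fix the orthonormal bases $\{\varphi_i\}_{i=1}^\infty$ and $\{\psi_j\}_{j=1}^\infty$ as in the statement and recall that $\{\varphi_i\otimes\psi_j\}_{i,j}$ is an orthonormal basis of $\mathcal{H}_\mathcal{X}\otimes\mathcal{H}_\mathcal{Y}$ (this is the standard construction of the tensor product of Hilbert spaces; see \citep[pp.47--48]{weidmann1980linear}), so every element of $\mathcal{H}_\mathcal{X}\otimes\mathcal{H}_\mathcal{Y}$ has a unique expansion $\sum_{i,j}c_{i,j}(\varphi_i\otimes\psi_j)$ with $\sum_{i,j}\lvert c_{i,j}\rvert^2<\infty$; in particular the coefficients are determined by the element, so no ambiguity arises in the definition of $\Phi$. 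Given such an element, I would check that $h\mapsto\sum_{i,j}c_{i,j}\langle h,\varphi_i\rangle_{\mathcal{H}_\mathcal{X}}\psi_j$ defines a bounded operator on $\mathcal{H}_\mathcal{X}$: for fixed $h$, Bessel's inequality gives $\sum_i\lvert\langle h,\varphi_i\rangle_{\mathcal{H}_\mathcal{X}}\rvert^2\leq\lVert h\rVert_{\mathcal{H}_\mathcal{X}}^2$, and a Cauchy--Schwarz estimate then shows the double series converges in $\mathcal{H}_\mathcal{Y}$ with norm at most $\lVert h\rVert_{\mathcal{H}_\mathcal{X}}(\sum_{i,j}\lvert c_{i,j}\rvert^2)^{1/2}$.

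Next I would compute the Hilbert--Schmidt norm of this operator $A$ using the basis $\{\varphi_k\}$: since $A\varphi_k=\sum_j c_{k,j}\psi_j$, we have $\lVert A\varphi_k\rVert_{\mathcal{H}_\mathcal{Y}}^2=\sum_j\lvert c_{k,j}\rvert^2$, hence $\lVert A\rVert_{\text{HS}}^2=\sum_k\lVert A\varphi_k\rVert_{\mathcal{H}_\mathcal{Y}}^2=\sum_{k,j}\lvert c_{k,j}\rvert^2=\lVert\sum_{i,j}c_{i,j}(\varphi_i\otimes\psi_j)\rVert_{\mathcal{H}_\mathcal{X}\otimes\mathcal{H}_\mathcal{Y}}^2$. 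This single computation shows at once that $A\in\text{HS}(\mathcal{H}_\mathcal{X},\mathcal{H}_\mathcal{Y})$, so $\Phi$ really takes values in the claimed space, and that $\Phi$ is norm-preserving; linearity is immediate from the defining formula.

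It then remains only to prove surjectivity, since an isometry is automatically injective. Given $A\in\text{HS}(\mathcal{H}_\mathcal{X},\mathcal{H}_\mathcal{Y})$, I would set $c_{i,j}\vcentcolon=\langle A\varphi_i,\psi_j\rangle_{\mathcal{H}_\mathcal{Y}}$, so that $\sum_{i,j}\lvert c_{i,j}\rvert^2=\sum_i\lVert A\varphi_i\rVert_{\mathcal{H}_\mathcal{Y}}^2=\lVert A\rVert_{\text{HS}}^2<\infty$ and hence $u\vcentcolon=\sum_{i,j}c_{i,j}(\varphi_i\otimes\psi_j)$ is a genuine element of $\mathcal{H}_\mathcal{X}\otimes\mathcal{H}_\mathcal{Y}$. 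Then $\Phi(u)\varphi_k=\sum_j c_{k,j}\psi_j=\sum_j\langle A\varphi_k,\psi_j\rangle_{\mathcal{H}_\mathcal{Y}}\psi_j=A\varphi_k$, so $\Phi(u)$ and $A$ agree on the orthonormal basis $\{\varphi_k\}$ and, both being bounded, coincide; thus $\Phi(u)=A$.

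I do not expect a real obstacle here; this is essentially the classical identification of the Hilbert-space tensor product with the space of Hilbert--Schmidt operators. The only points that deserve care are the well-definedness argument in the first step (convergence of the defining series for each $h$, and the fact that $A$ is genuinely Hilbert--Schmidt rather than merely bounded) and the implicit use of the standard fact that the Hilbert--Schmidt norm and inner product on $\text{HS}(\mathcal{H}_\mathcal{X},\mathcal{H}_\mathcal{Y})$ do not depend on the chosen orthonormal basis, which is what makes the isometry identity meaningful independently of the particular bases fixed at the outset.
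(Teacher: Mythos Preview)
Your proof is correct and follows essentially the same approach as the paper: the isometry computation via $\lVert A\varphi_k\rVert_{\mathcal{H}_\mathcal{Y}}^2=\sum_j\lvert c_{k,j}\rvert^2$ is identical, and surjectivity is established the same way by reading off coefficients from $A\varphi_i$. You are somewhat more careful than the paper about well-definedness (convergence of the defining series and membership in $\text{HS}$), but the argument is otherwise the same.
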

\begin{proof}
	\(\Phi\) is clearly linear. We first show isometry:
	\begin{alignat*}{3}
	\left\lVert\Phi(\sum_{i=1,j=1}^\infty c_{i,j}(\varphi_i\otimes\psi_j))\right\rVert_{\text{HS}}^2&=\left\lVert\sum_{i=1,j=1}^\infty c_{i,j}\langle\cdot,\varphi_i\rangle_{\mathcal{H}_\mathcal{X}}\psi_j\right\rVert^2_{\text{HS}}\\
	&=\sum_{k=1}^\infty\left\lVert\sum_{i=1,j=1}^\infty c_{i,j}\langle\varphi_k,\varphi_i\rangle_{\mathcal{H}_\mathcal{X}}\psi_j\right\rVert^2_{\mathcal{H}_\mathcal{Y}}\qquad&&\text{by definition}\\
	&=\sum_{i=1,j=1}^\infty c_{i,j}^2&&\text{by orthonormality}\\
	&=\left\lVert\sum_{i=1,j=1}^\infty c_{i,j}(\varphi_i\otimes\psi_j)\right\rVert^2_{\mathcal{H}_\mathcal{X}\otimes\mathcal{H}_\mathcal{Y}}&&\text{by orthonormality},
	\end{alignat*}
	as required. It remains to show surjectivity. 
	
	Take an element \(T\in\text{HS}(\mathcal{H}_\mathcal{X},\mathcal{H}_\mathcal{Y})\). Then \(T\) is completely determined by \(\{T\varphi_i\}_{i=1}^\infty\). For each \(i\), suppose \(T\varphi_i=\sum_{j=1}^\infty d^i_j\psi_j\), with \(d^i_j\in\mathbb{R}\) for all \(i\) and \(j\). Then 
	\begin{alignat*}{3}
	\Phi\left(\sum_{i'=1,j=1}^\infty d^{i'}_j(\varphi_{i'}\otimes\psi_j)\right)&=\left[\varphi_i\mapsto\sum_{i'=1,j=1}^\infty\langle d^{i'}_j\varphi_{i'},\varphi_i\rangle_{\mathcal{H}_\mathcal{X}}\psi_j\right]\\
	&=\left[\varphi_i\mapsto\sum_{j=1}^\infty d^i_j\psi_j\right]&&\text{by orthonormality}\\
	&=T.
	\end{alignat*}
	So \(\Phi\) is surjective, and hence an isometric isomorphism. 
\end{proof}
Before we prove Theorem \ref{Tregularversion}, we state the following definition and theorems related to measurable functions for Banach-space valued functions. 
\begin{definition}[{\citep[p.4, Definition 5]{dinculeanu2000vector}}]\label{Dmeasurablefunctions}
	A function \(H:\Omega\rightarrow\mathcal{H}\) is called an \(\mathcal{F}\)-simple function if it has the form \(H=\sum^n_{i=1}h_i\mathbf{1}_{B_i}\) for some \(h_i\in\mathcal{H}\) and \(B_i\in\mathcal{F}\). 
	
	A function \(H:\Omega\rightarrow\mathcal{H}\) is said to be \(\mathcal{F}\)-measurable if there is a sequence \((H_n)\) of \(\mathcal{H}\)-valued, \(\mathcal{F}\)-simple functions such that \(H_n\rightarrow H\) pointwise. 
\end{definition}
\begin{theorem}[{\citep[p.4, Theorem 6]{dinculeanu2000vector}}]\label{Tdinculeanu6}
	If \(H:\Omega\rightarrow\mathcal{H}\) is \(\mathcal{F}\)-measurable, then there is a sequence \((H_n)\) of \(\mathcal{H}\)-valued, \(\mathcal{F}\)-simple functions such that \(H_n\rightarrow H\) pointwise and \(\lvert H_n\rvert\leq\lvert H\rvert\) for every \(n\). 
\end{theorem}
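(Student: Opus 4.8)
The plan is to take whatever approximating sequence is supplied by $\mathcal{F}$-measurability and rescale it \emph{pointwise} so that the domination $\lvert H_n\rvert\le\lvert H\rvert$ is forced, while being careful not to spoil pointwise convergence. First I would record that $\lvert H\rvert$ is an $\mathcal{F}$-measurable $[0,\infty)$-valued function: by Definition \ref{Dmeasurablefunctions} there is a sequence $(G_n)$ of $\mathcal{H}$-valued $\mathcal{F}$-simple functions with $G_n\to H$ pointwise, and then the scalar simple functions $\lvert G_n\rvert$ converge pointwise to $\lvert H\rvert$ by continuity of the norm, so $\lvert H\rvert$ is a pointwise limit of scalar simple functions. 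Consequently the standard dyadic approximation gives an increasing sequence $(s_n)$ of nonnegative scalar $\mathcal{F}$-simple functions with $s_n\uparrow\lvert H\rvert$ pointwise and $0\le s_n\le\lvert H\rvert$ everywhere.

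Next I would set $H_n:=c_n\,G_n$, where $c_n(\omega):=\min\{1,\,s_n(\omega)/\lvert G_n(\omega)\rvert\}$ when $G_n(\omega)\neq0$ and $H_n(\omega):=0$ when $G_n(\omega)=0$. Each $H_n$ is $\mathcal{F}$-simple: writing $G_n$ and $s_n$ over a common finite $\mathcal{F}$-partition, the function $H_n$ is constant on every atom of that partition, taking there a value of the form $\min\{1,t/\lvert a\rvert\}\,a$ with $a\in\mathcal{H}$ and $t\ge0$ fixed. By construction $\lvert H_n(\omega)\rvert=\min\{\lvert G_n(\omega)\rvert,\,s_n(\omega)\}\le s_n(\omega)\le\lvert H(\omega)\rvert$, which is the required domination.

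It then remains to verify $H_n\to H$ pointwise. Fix $\omega$. If $H(\omega)=0$, then $s_n(\omega)=0$ for every $n$, so $\lvert H_n(\omega)\rvert\le s_n(\omega)=0$ and $H_n(\omega)=H(\omega)$ identically. If $r:=\lvert H(\omega)\rvert>0$, then $\lvert G_n(\omega)\rvert\to r$ and $s_n(\omega)\to r$, so $s_n(\omega)/\lvert G_n(\omega)\rvert\to1$ (the ratio being eventually well-defined since $\lvert G_n(\omega)\rvert>0$ for all large $n$), hence $c_n(\omega)\to1$; together with $G_n(\omega)\to H(\omega)$ this gives $H_n(\omega)=c_n(\omega)G_n(\omega)\to H(\omega)$.

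The only mildly delicate point is this last convergence at points where $H(\omega)\neq0$: one must exploit that $s_n(\omega)$ and $\lvert G_n(\omega)\rvert$ tend to the \emph{same} positive limit, so that the truncation factor $c_n(\omega)$ tends to $1$ rather than being collapsed to something smaller. This is exactly why one scales by auxiliary simple functions $s_n\uparrow\lvert H\rvert$ rather than hard-thresholding $G_n$ on the set $\{\lvert G_n\rvert\le\lvert H\rvert\}$ — the latter fails because $\lvert G_n(\omega)\rvert$ can oscillate above $\lvert H(\omega)\rvert$, killing convergence.
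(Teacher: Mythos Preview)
Your argument is correct: the rescaling $H_n=c_nG_n$ with $c_n=\min\{1,s_n/\lvert G_n\rvert\}$ preserves simplicity (common refinement of the partitions underlying $G_n$ and $s_n$), enforces $\lvert H_n\rvert=\min\{\lvert G_n\rvert,s_n\}\le s_n\le\lvert H\rvert$, and the convergence analysis in the two cases $H(\omega)=0$ and $H(\omega)\neq0$ is sound.

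There is nothing in the paper to compare against: Theorem~\ref{Tdinculeanu6} is quoted from \cite[p.4, Theorem~6]{dinculeanu2000vector} and used as a black box (in the proof of Theorem~\ref{Tregularversion}); the paper supplies no proof of its own. Your proof is essentially the standard one found in that reference.
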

\begin{theorem}[{\citep[p.19, Theorem 48]{dinculeanu2000vector}}, Lebesgue Convergence Theorem]\label{Tdinculeanu48}
	Let \((H_n)\) be a sequence in \(L^1_\mathcal{H}(P)\), \(H:\Omega\rightarrow\mathcal{H}\) a \(P\)-measurable function, and \(g\in L^1_+(P)\) such that \(H_n\rightarrow H\) \(P\)-almost everywhere and \(\lvert H_n\rvert\leq g\), \(P\)-almost everywhere, for each \(n\). Then \(H\in L^1_\mathcal{H}(P)\) and \(H_n\rightarrow H\) in \(L^1_\mathcal{H}(P)\), i.e. \(\int_\Omega H_ndP\rightarrow\int_\Omega HdP\). 
\end{theorem}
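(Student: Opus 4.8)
The plan is to reduce this Banach-space statement to the classical (scalar) dominated convergence theorem by pushing everything through the norm $\|\cdot\|_{\mathcal{H}}$. First I would fix a single $P$-null exceptional set once and for all: since $\{H_n\to H\}$ and each $\{\|H_n\|_{\mathcal{H}}\le g\}$ are co-null, the set $\Omega_0$ on which \emph{all} the almost-everywhere hypotheses hold simultaneously is a countable intersection of co-null sets, hence co-null; on $\Omega_0$ we have $H_n(\omega)\to H(\omega)$ in $\mathcal{H}$ and $\|H_n(\omega)\|_{\mathcal{H}}\le g(\omega)$ for every $n$.

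Next I would show $H\in L^1_{\mathcal{H}}(P)$. By continuity of the norm, $\|H(\omega)\|_{\mathcal{H}}=\lim_n\|H_n(\omega)\|_{\mathcal{H}}\le g(\omega)$ on $\Omega_0$. The real-valued function $\omega\mapsto\|H(\omega)\|_{\mathcal{H}}$ is $P$-measurable: by Theorem \ref{Tdinculeanu6}, $H$ is a pointwise limit of $\mathcal{H}$-valued simple functions $R_k$, the norms $\|R_k(\cdot)\|_{\mathcal{H}}$ are real simple functions, and $\|R_k(\omega)\|_{\mathcal{H}}\to\|H(\omega)\|_{\mathcal{H}}$, so $\|H(\cdot)\|_{\mathcal{H}}$ is a pointwise limit of real simple functions (Definition \ref{Dmeasurablefunctions}). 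Hence $\int_\Omega\|H\|_{\mathcal{H}}\,dP\le\int_\Omega g\,dP<\infty$, and since $H$ is $P$-measurable with integrable norm, the Bochner-integrability criterion \citep[p.15]{dinculeanu2000vector} gives $H\in L^1_{\mathcal{H}}(P)$.

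For the convergence I would apply the scalar dominated convergence theorem to $\varphi_n:=\|H_n-H\|_{\mathcal{H}}$. Each $\varphi_n$ is a nonnegative $P$-measurable real function — $H_n-H$ is $P$-measurable as a difference of $P$-measurable $\mathcal{H}$-valued maps, and composing with the continuous norm preserves $P$-measurability, exactly as in the previous paragraph. On $\Omega_0$, $\varphi_n(\omega)\to0$ by continuity of the norm, and $\varphi_n(\omega)\le\|H_n(\omega)\|_{\mathcal{H}}+\|H(\omega)\|_{\mathcal{H}}\le 2g(\omega)$ with $2g$ integrable. The classical theorem then gives $\int_\Omega\|H_n-H\|_{\mathcal{H}}\,dP\to0$, i.e.\ $H_n\to H$ in $L^1_{\mathcal{H}}(P)$. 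Finally, the norm inequality for the Bochner integral \citep[p.15]{dinculeanu2000vector} yields $\bigl\|\int_\Omega H_n\,dP-\int_\Omega H\,dP\bigr\|_{\mathcal{H}}=\bigl\|\int_\Omega(H_n-H)\,dP\bigr\|_{\mathcal{H}}\le\int_\Omega\|H_n-H\|_{\mathcal{H}}\,dP\to0$.

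There is no genuinely difficult step here; the only point that needs real care is the $P$-measurability of the scalar functions $\|H\|_{\mathcal{H}}$ and $\|H_n-H\|_{\mathcal{H}}$, which has to be argued from the definition of $P$-measurability for Banach-valued maps rather than assumed. Theorem \ref{Tdinculeanu6} is the tool I would lean on throughout, since it supplies simultaneously the approximating simple functions and the pointwise bounds that make the reduction to the scalar dominated convergence theorem mechanical.
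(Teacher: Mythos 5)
Your proof is correct. The paper states this result only as a quotation from \citep[p.19, Theorem 48]{dinculeanu2000vector} and supplies no proof of its own, so there is nothing to compare against; your argument --- collecting the almost-everywhere hypotheses on a single co-null set, establishing \(H\in L^1_{\mathcal{H}}(P)\) via measurability of \(\lVert H\rVert_{\mathcal{H}}\) and Bochner's integrability criterion, reducing to the scalar dominated convergence theorem through \(\varphi_n=\lVert H_n-H\rVert_{\mathcal{H}}\le 2g\), and finishing with the norm inequality for the Bochner integral --- is the standard, complete proof of this theorem.
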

\begin{customproof}{Theorem 2.9}
	Suppose that \(P(\cdot\mid\mathcal{E})\) admits a regular version \(Q\). Then \(QH:\Omega\rightarrow\mathcal{H}\) with \(\omega\mapsto Q_\omega H=\int_\Omega H(\omega')Q_\omega(d\omega')\)
	is a version of \(\mathbb{E}[H\mid\mathcal{E}]\) for every Bochner \(P\)-integrable \(H\). 
\end{customproof}
\begin{proof}
	Suppose \(H\) is Bochner \(P\)-integrable. Since \(Q\) is a regular version of \(P(\cdot\mid\mathcal{E})\), it is a probability transition kernel from \((\Omega,\mathcal{E})\) to \((\Omega,\mathcal{F})\). 
	
	We first show that \(QH\) is measurable with respect to \(\mathcal{E}\). The map \(Q:\Omega\rightarrow\mathcal{H}\) is well-defined, since, for each \(\omega\in\Omega\), \(Q_\omega H\) is the Bochner-integral of \(H\) with respect to the measure \(B\rightarrow Q_\omega(B)\). Since \(H\) is \(\mathcal{F}\)-measurable, by Theorem \ref{Tdinculeanu6}, there is a sequence \((H_n)\) of \(\mathcal{H}\)-valued, \(\mathcal{F}\)-simple functions such that \(H_n\rightarrow H\) pointwise. Then for each \(\omega\in\Omega\), \(Q_\omega H=\lim_{n\rightarrow\infty}Q_\omega H_n\) by Theorem \ref{Tdinculeanu48}. But for each \(n\), we can write \(H_n=\sum^m_{j=1}h_j\mathbf{1}_{B_j}\) for some \(h_j\in\mathcal{H}\) and \(B_j\in\mathcal{F}\), and so \(Q_\omega H_n=\sum^m_{j=1}h_jQ_\omega(B_j)\). For each \(B_j\) the map \(\omega\mapsto Q_\omega(B_j)\) is \(\mathcal{E}\)-measurable (by the definition of transition probability kernel, Definition \ref{Dtransitionprobabilitykernel}), and so as a linear combination of \(\mathcal{E}\)-measurable functions, \(QH_n\) is \(\mathcal{E}\)-measurable. Hence, as a pointwise limit of \(\mathcal{E}\)-measurable functions, \(QH\) is also \(\mathcal{E}\)-measurable, by \citep[p.6, Theorem 10]{dinculeanu2000vector}. 
	
	Next, we show that, for all \(A\in\mathcal{E}\), \(\int_AHdP=\int_AQHdP\). Fix \(A\in\mathcal{E}\).	By Theorem \ref{Tdinculeanu6}, there is a sequence \((H_n)\) of \(\mathcal{H}\)-valued, \(\mathcal{F}\)-simple functions such that \(H_n\rightarrow H\) pointwise. For each \(n\), we can write \(H_n=\sum^m_{j=1}h_j\mathbf{1}_{B_j}\) for some \(h_j\in\mathcal{H}\) and \(B_j\in\mathcal{F}\), and 
	\begin{alignat*}{3}
	\int_AQH_ndP&=\int_A\sum^m_{j=1}h_jQ(B_j)dP\\
	&=\int_A\sum^m_{j=1}h_jP(B_j\mid\mathcal{E})dP\quad&&\text{since }Q\text{ is a version of }P(\cdot\mid\mathcal{E})\\
	&=\sum^m_{j=1}h_j\int_A\mathbb{E}[\mathbf{1}_{B_j}\mid\mathcal{E}]dP&&\text{by the definition of conditional probability measures}\\
	&=\int_A\sum^m_{j=1}h_j\mathbf{1}_{B_j}dP&&\text{by the definition of conditional expectations, since }A\in\mathcal{E}\\
	&=\int_AH_ndP.
	\end{alignat*}
	We have \(H_n\rightarrow H\) pointwise by assertion, and as before, \(QH_n\rightarrow QH\) pointwise. Hence,
	\begin{alignat*}{3}
	\int_AQHdP&=\lim_{n\rightarrow\infty}\int_AQH_ndP\qquad&&\text{by Theorem \ref{Tdinculeanu48}}\\
	&=\lim_{n\rightarrow\infty}\int_AH_ndP&&\text{by above}\\
	&=\int_AHdP&&\text{by Theorem \ref{Tdinculeanu48}}.
	\end{alignat*}
	Hence, by the definition of the conditional expectation, \(QH\) is a version of \(\mathbb{E}[H\mid\mathcal{E}]\). 
\end{proof}
\begin{customproof}{Lemma 3.2}
	For any \(f\in\mathcal{H}_\mathcal{X}\), \(\mathbb{E}_{X|Z}[f(X)\mid Z]=\langle f,\mu_{P_{X|Z}}\rangle_{\mathcal{H}_\mathcal{X}}\) almost surely.
\end{customproof}
\begin{proof}
	The left-hand side is the conditional expectation of the real-valued random variable \(f(X)\) given \(Z\). We need to check that the right-hand side is also that. Note that \(\langle f,\mu_{P_{X|Z}}\rangle_{\mathcal{H}_\mathcal{X}}\) is clearly \(Z\)-measurable, and \(P\)-integrable (by the Cauchy-Schwarz inequality and the integrability condition (\ref{Eintegrability})). Take any \(A\in\sigma(Z)\). Then
	\begin{alignat*}{3}
	\int_A\langle f,\mu_{P_{X|Z}}\rangle_{\mathcal{H}_\mathcal{X}}dP&=\int_A\left\langle f,\mathbb{E}_{X|Z}[k_\mathcal{X}(\cdot,X)\mid Z]\right\rangle_{\mathcal{H}_\mathcal{X}}dP\quad&&\text{by definition}\\
	&=\left\langle f,\int_A\mathbb{E}_{X|Z}[k_\mathcal{X}(\cdot,X)\mid Z]dP\right\rangle_{\mathcal{H}_\mathcal{X}}&&(+)\\
	&=\left\langle f,\int_Ak_\mathcal{X}(\cdot,X)dP\right\rangle_{\mathcal{H}_\mathcal{X}}&&\text{see Definition \ref{Dconditionalexpectation}}\\
	&=\int_A\langle f,k_\mathcal{X}(\cdot,X)\rangle_{\mathcal{H}_\mathcal{X}}dP&&(+)\\
	&=\int_Af(X)dP&&\text{by the reproducing property.}
	\end{alignat*}
	Here, in \((+)\), we used the fact that the order of a continuous linear operator and Bochner integration can be interchanged \citep[p.30, Theorem 36]{dinculeanu2000vector}. Hence \(\langle f,\mu_{P_{X|Z}}\rangle_{\mathcal{H}_\mathcal{X}}\) is a version of the conditional expectation \(\mathbb{E}_{X|Z}[f(X)\mid Z]\). 
\end{proof}
\begin{customproof}{Lemma 3.3}
	For any pair \(f\in\mathcal{H}_\mathcal{X}\) and \(g\in\mathcal{H}_\mathcal{Y}\), \(\mathbb{E}_{\mathit{XY}|Z}[f(X)g(Y)\mid Z]=\langle f\otimes g,\mu_{P_{\mathit{XY}|Z}}\rangle_{\mathcal{H}_\mathcal{X}\otimes\mathcal{H}_\mathcal{Y}}\) almost surely. 
\end{customproof}
\begin{proof}
	The left-hand side is the conditional expectation of the real-valued random variable \(f(X)g(Y)\) given \(Z\). We need to check that the right-hand side is also that. Note that \(\langle f\otimes g,\mu_{P_{XY|Z}}\rangle_{\mathcal{H}_\mathcal{X}\otimes\mathcal{H}_\mathcal{Y}}\) is clearly \(Z\)-measurable, and \(P\)-integrable (by the Cauchy-Schwarz inequality and the integrability condition (\ref{Estrongerintegrability})). Take any \(A\in\sigma(Z)\). Then
	\begin{alignat*}{2}
	\int_A\langle f\otimes g,\mu_{P_{XY|Z}}\rangle_{\mathcal{H}_\mathcal{X}\otimes\mathcal{H}_\mathcal{Y}}dP&=\int_A\left\langle f\otimes g,\mathbb{E}_{XY|Z}[k_\mathcal{X}(\cdot,X)\otimes k_\mathcal{Y}(\cdot,Y)\mid Z]\right\rangle_{\mathcal{H}_\mathcal{X}\otimes\mathcal{H}_\mathcal{Y}}dP\\
	&=\left\langle f\otimes g,\int_A\mathbb{E}_{XY|Z}[k_\mathcal{X}(\cdot,X)\otimes k_\mathcal{Y}(\cdot,Y)\mid Z]dP\right\rangle_{\mathcal{H}_\mathcal{X}\otimes\mathcal{H}_\mathcal{Y}}\\
	&=\left\langle f\otimes g,\int_Ak_\mathcal{X}(\cdot,X)\otimes k_\mathcal{Y}(\cdot,Y)dP\right\rangle_{\mathcal{H}_\mathcal{X}\otimes\mathcal{H}_\mathcal{Y}}\\
	&=\int_A\langle f\otimes g,k_\mathcal{X}(\cdot,X)\otimes k_\mathcal{Y}(\cdot,Y)\rangle_{\mathcal{H}_\mathcal{X}\otimes\mathcal{H}_\mathcal{Y}}dP\\
	&=\int_Af(X)g(Y)dP.
	\end{alignat*}
	So \(\langle f\otimes g,\mu_{P_{XY|Z}}\rangle_{\mathcal{H}_\mathcal{X}\otimes\mathcal{H}_\mathcal{Y}}\) is a version of the conditional expectation \(\mathbb{E}_{XY|Z}[f(X)g(Y)\mid Z]\). 
\end{proof}
\begin{customproof}{Theorem 4.1}
	Assume that \(\mathcal{H}_\mathcal{X}\) is separable, and denote its Borel \(\sigma\)-algebra by \(\mathcal{B}(\mathcal{H}_\mathcal{X})\). Then we can write
	\[\mu_{P_{X|Z}}=F_{P_{X|Z}}\circ Z,\]
	where \(F_{P_{X|Z}}:\mathcal{Z}\rightarrow\mathcal{H}_\mathcal{X}\) is some deterministic function, measurable with respect to \(\mathfrak{Z}\) and \(\mathcal{B}(\mathcal{H}_\mathcal{X})\). 
\end{customproof}
\begin{proof}
	Let \(\Image(Z)\subseteq\mathcal{Z}\) be the image of \(Z:\Omega\rightarrow\mathcal{Z}\), and let \(\tilde{\mathfrak{Z}}\) denote the \(\sigma\)-algebra on \(\Image(Z)\) defined by \(\tilde{\mathfrak{Z}}=\{A\cap\Image(Z):A\in\mathfrak{Z}\}\) (see \citep[page 5, 1.15]{cinlar2011probability}). We will first construct a function \(\tilde{F}:\Image(Z)\rightarrow\mathcal{H}_\mathcal{X}\), measurable with respect to \(\tilde{\mathfrak{Z}}\) and \(\mathcal{B}(\mathcal{H}_\mathcal{X})\), such that \(\mu_{P_{X|Z}}=\tilde{F}\circ Z\). 
	
	For a given \(z\in\Image(Z)\subseteq\mathcal{Z}\), we have \(Z^{-1}(z)\subseteq\Omega\). Suppose for contradiction that there are two distinct elements \(\omega_1,\omega_2\in Z^{-1}(z)\) such that \(\mu_{P_{X|Z}}(\omega_1)\neq\mu_{P_{X|Z}}(\omega_2)\). Since \(\mathcal{H}_\mathcal{X}\) is Hausdorff, there are disjoint open neighbourhoods \(N_1\) and \(N_2\) of \(\mu_{P_{X|Z}}(\omega_1)\) and \(\mu_{P_{X|Z}}(\omega_2)\) respectively. By definition of a Borel \(\sigma\)-algebra, we have \(N_1,N_2\in\mathcal{B}(\mathcal{H}_\mathcal{X})\), and since \(\mu_{P_{X|Z}}\) is \(\sigma(Z)\)-measurable, 
	\begin{equation}\label{Einverseimages}
	\mu_{P_{X|Z}}^{-1}(N_1),\mu_{P_{X|Z}}^{-1}(N_2)\in\sigma(Z).
	\end{equation}
	Furthermore, \(\mu_{P_{X|Z}}^{-1}(N_1)\) and \(\mu_{P_{X|Z}}^{-1}(N_2)\) are neighbourhoods of \(\omega_1\) and \(\omega_2\) respectively, and are disjoint.
	\begin{enumerate}[(i)]
		\item For any \(B\in\tilde{\mathfrak{Z}}\) with \(z\in B\), since \(Z(\omega_1)=z=Z(\omega_2)\), we have \(\omega_1,\omega_2\in Z^{-1}(B)\). So \(Z^{-1}(B)\neq\mu_{P_{X|Z}}^{-1}(N_1)\) and \(Z^{-1}(B)\neq\mu_{P_{X|Z}}^{-1}(N_2)\), as \(\omega_2\notin\mu_{P_{X|Z}}^{-1}(N_1)\) and \(\omega_1\notin\mu_{P_{X|Z}}^{-1}(N_2)\). 
		\item For any \(B\in\tilde{\mathfrak{Z}}\) with \(z\notin B\), we have \(\omega_1\notin Z^{-1}(B)\) and \(\omega_2\notin Z^{-1}(B)\). So \(Z^{-1}(B)\neq\mu_{P_{X|Z}}^{-1}(N_1)\) and \(Z^{-1}(B)\neq\mu_{P_{X|Z}}^{-1}(N_2)\). 
	\end{enumerate}
	Since \(\sigma(Z)=\{Z^{-1}(B)\mid B\in\tilde{\mathfrak{Z}}\}\) (see \citep{cinlar2011probability}, page 11, Exercise 2.20), we can't have \(\mu_{P_{X|Z}}^{-1}(N_1)\in\sigma(Z)\) nor \(\mu_{P_{X|Z}}^{-1}(N_2)\in\sigma(Z)\). This is a contradiction to (\ref{Einverseimages}). We therefore conclude that, for any \(z\in\mathcal{Z}\), if \(Z(\omega_1)=z=Z(\omega_2)\) for distinct \(\omega_1,\omega_2\in\Omega\), then \(\mu_{P_{X|Z}}(\omega_1)=\mu_{P_{X|Z}}(\omega_2)\). 
	
	We define \(\tilde{F}(z)\) to be the unique value of \(\mu_{P_{X|Z}}(\omega)\) for all \(\omega\in Z^{-1}(z)\). Then for any \(\omega\in\Omega\), \(\mu_{P_{X|Z}}(\omega)=\tilde{F}(Z(\omega))\) by construction. It remains to check that \(\tilde{F}\) is measurable with respect to \(\tilde{\mathfrak{Z}}\) and \(\mathcal{B}(\mathcal{H}_\mathcal{X})\). 
	
	Take any \(N\in\mathcal{B}(\mathcal{H}_\mathcal{X})\). Since \(\mu_{P_{X|Z}}\) is \(\sigma(Z)\)-measurable, \(\mu_{P_{X|Z}}^{-1}(N)=Z^{-1}(\tilde{F}^{-1}(N))\in\sigma(Z)\). Since \(\sigma(Z)=\{Z^{-1}(B)\mid B\in\tilde{\mathfrak{Z}}\}\), we have \(Z^{-1}(\tilde{F}^{-1}(N))=Z^{-1}(C)\) for some \(C\in\tilde{\mathfrak{Z}}\). Since the mapping \(Z:\Omega\rightarrow\Image(Z)\) is surjective, \(\tilde{F}^{-1}(N)=C\). Hence \(\tilde{F}^{-1}(N)\in\tilde{\mathfrak{Z}}\), and so \(\tilde{F}\) is measurable with respect to \(\tilde{\mathfrak{Z}}\) and \(\mathcal{B}(\mathcal{H}_\mathcal{X})\). 
	
	Finally, we can extend \(\tilde{F}:\Image(Z)\rightarrow\mathcal{H}_\mathcal{X}\) to \(F:\mathcal{Z}\rightarrow\mathcal{H}_\mathcal{X}\) by \citep[page 128, Corollary 4.2.7]{dudley2018real} (note that \(\mathcal{H}_\mathcal{X}\) is a complete metric space, and assumed to be separable in this theorem). 
\end{proof}
\begin{customproof}{Theorem 4.2}
	\(F_{P_{X|Z}}\in L^2(\mathcal{Z},P_Z;\mathcal{H}_\mathcal{X})\) minimises both \(\tilde{\mathcal{E}}_{X|Z}\) and \(\mathcal{E}_{X|Z}\), i.e.
	\[F_{P_{X|Z}}=\argmin_{F\in L^2(\mathcal{Z},P_Z;\mathcal{H}_\mathcal{X})}\mathcal{E}_{X|Z}(F)=\argmin_{F\in L^2(\mathcal{Z},P_Z;\mathcal{H}_\mathcal{X})}\tilde{\mathcal{E}}_{X|Z}(F).\]
	Moreover, it is almost surely unique, i.e. it is almost surely equal to any other minimiser of the objective functionals. 
\end{customproof}
\begin{proof}
	Recall that we have 
	\[\mathcal{E}_{X|Z}(F)\vcentcolon=\mathbb{E}_Z\left[\lVert F_{P_{X|Z}}(Z)-F(Z)\rVert_{\mathcal{H}_\mathcal{X}}^2\right].\]
	So clearly, \(\mathcal{E}_{X|Z}(F_{P_{X|Z}})=0\), meaning \(F_{P_{X|Z}}\) minimises \(\mathcal{E}_{X|Z}\) in \(L^2(\mathcal{Z},P_Z;\mathcal{H}_\mathcal{X})\). So it only remains to show that \(\tilde{\mathcal{E}}_{X|Z}\) is minimised in \(L^2(\mathcal{Z},P_Z;\mathcal{H}_\mathcal{X})\) by \(F_{P_{X|Z}}\). 
	
	Let \(F\) be any element in \(L^2(\mathcal{Z},P_Z;\mathcal{H}_\mathcal{X})\). Then we have
	\begin{equation}\label{Egrunewalder3.1process}
	\begin{split}
	\tilde{\mathcal{E}}_{X|Z}(F)-\tilde{\mathcal{E}}_{X|Z}(F_{P_{X|Z}})&=\mathbb{E}_{X,Z}[\lVert k_\mathcal{X}(X,\cdot)-F(Z)\rVert_{\mathcal{H}_\mathcal{X}}^2]-\mathbb{E}_{X,Z}[\lVert k_\mathcal{X}(X,\cdot)-F_{P_{X|Z}}(Z)\rVert_{\mathcal{H}_\mathcal{X}}^2]\\
	&=\mathbb{E}_Z[\lVert F(Z)\rVert_{\mathcal{H}_\mathcal{X}}^2]-2\mathbb{E}_{X,Z}[\langle k_\mathcal{X}(X,\cdot),F(Z)\rangle_{\mathcal{H}_\mathcal{X}}]\\
	&\quad+2\mathbb{E}_{X,Z}\left[\langle k_\mathcal{X}(X,\cdot),F_{P_{X|Z}}(Z)\rangle_{\mathcal{H}_\mathcal{X}}\right]-\mathbb{E}_Z\left[\lVert F_{P_{X|Z}}(Z)\rVert_{\mathcal{H}_\mathcal{X}}^2\right].
	\end{split}
	\end{equation}
	Here, 
	\begin{alignat*}{3}
	\mathbb{E}_{X,Z}\left[\langle k_\mathcal{X}(X,\cdot),F(Z)\rangle_{\mathcal{H}_\mathcal{X}}\right]&=\mathbb{E}_Z\left[\mathbb{E}_{X|Z}\left[F(Z)(X)\mid Z\right]\right]&&\text{by the reproducing property}\\
	&=\mathbb{E}_Z\left[\langle F(Z),\mu_{P_{X|Z}}\rangle_{\mathcal{H}_\mathcal{X}}\right]&&\text{by Lemma \ref{Lconditionalinterchange}}\\
	&=\mathbb{E}_Z\left[\langle F(Z),F_{P_{X|Z}}(Z)\rangle_{\mathcal{H}_\mathcal{X}}\right]\qquad&&\text{since }\mu_{P_{X|Z}}=F_{P_{X|Z}}\circ Z
	\end{alignat*}
	and similarly, 
	\begin{alignat*}{3}
	\mathbb{E}_{X,Z}[\langle k_\mathcal{X}(X,\cdot),F_{P_{X|Z}}(Z)\rangle_{\mathcal{H}_\mathcal{X}}]&=\mathbb{E}_Z[\mathbb{E}_{X|Z}[F_{P_{X|Z}}(Z)(X)\mid Z]]\qquad&&\text{by the reproducing property}\\
	&=\mathbb{E}_Z\left[\langle F_{P_{X|Z}}(Z),F_{P_{X|Z}}(Z)\rangle_{\mathcal{H}_\mathcal{X}}\right]&&\text{by Lemma \ref{Lconditionalinterchange}}\\
	&=\mathbb{E}_Z\left[\lVert F_{P_{X|Z}}(Z)\rVert_{\mathcal{H}_\mathcal{X}}^2\right].
	\end{alignat*}
	Substituting these expressions back into (\ref{Egrunewalder3.1process}), we have
	\begin{alignat*}{2}
	\tilde{\mathcal{E}}_{X|Z}(F)&-\tilde{\mathcal{E}}_{X|Z}(F_{P_{X|Z}})\\
	&=\mathbb{E}_Z[\lVert F(Z)\rVert_{\mathcal{H}_\mathcal{X}}^2]-2\mathbb{E}_Z[\langle F(Z),F_{P_{X|Z}}(Z)\rangle_{\mathcal{H}_\mathcal{X}}]+\mathbb{E}_Z[\lVert F_{P_{X|Z}}(Z)\rVert_{\mathcal{H}_\mathcal{X}}^2]\\
	&=\mathbb{E}_Z[\lVert F(Z)-F_{P_{X|Z}}(Z)\rVert_{\mathcal{H}_\mathcal{X}}^2]\\
	&\geq0.
	\end{alignat*}
	Hence, \(F_{P_{X|Z}}\) minimises \(\tilde{\mathcal{E}}_{X|Z}\) in \(L^2(\mathcal{Z},P_Z;\mathcal{H}_\mathcal{X})\). The minimiser is further more \(P_Z\)-almost surely unique; indeed, if \(F'\in L^2(\mathcal{Z},P_Z;\mathcal{H}_\mathcal{X})\) is another minimiser of \(\tilde{\mathcal{E}}_{X|Z}\), then the calculation in (\ref{Egrunewalder3.1process}) shows that
	\[\mathbb{E}_Z\left[\lVert F_{P_{X|Z}}(Z)-F'(Z)\rVert_{\mathcal{H}_\mathcal{X}}^2\right]=0,\]
	which immediately implies that \(\lVert F_{P_{X|Z}}(Z)-F'(Z)\rVert_{\mathcal{H}_\mathcal{X}}=0\) \(P_Z\)-almost surely, which in turn implies that \(F_{P_{X|Z}}=F'\) \(P_Z\)-almost surely. 
\end{proof}
\begin{customproof}{Theorem 4.4}
	Suppose that \(k_\mathcal{X}\) and \(k_\mathcal{Z}\) are bounded kernels, i.e. there exist \(B_\mathcal{Z},B_\mathcal{X}>0\) such that \(\sup_{z\in\mathcal{Z}}k_\mathcal{Z}(z,z)\leq B_\mathcal{Z}\) and \(\sup_{x\in\mathcal{X}}k_\mathcal{X}(x,x)\leq B_\mathcal{X}\), and that the operator-valued kernel \(l_{\mathcal{X}\mathcal{Z}}\) is \(\mathcal{C}_0\)-universal. Let the regularisation parameter \(\lambda_n\) decay to 0 at a slower rate than \(\mathcal{O}(n^{-1/2})\). Then our learning algorithm that produces \(\hat{F}_{P_{X|Z},n,\lambda_n}\) is universally consistent (in the surrogate loss \(\tilde{\mathcal{E}}_{X|Z}\)), i.e. for any joint distribution \(P_{XZ}\) and constants \(\epsilon>0\) and \(\delta>0\),
	\[P_{\mathit{XZ}}(\tilde{\mathcal{E}}_{X|Z}(\hat{F}_{P_{X|Z},n,\lambda_n})-\tilde{\mathcal{E}}_{X|Z}(F_{P_{X|Z}})>\epsilon)<\delta\]
	for large enough \(n\). 
\end{customproof}
\begin{proof}
	Follows immediately from \citep[Theorem 2.3]{park2020regularised}. 

\end{proof}
\begin{customproof}{Theorem 4.5}
	In addition to the setting in Theorem \ref{Tconsistency}, assume that \(F_{P_{X|Z}}\in\mathcal{G}_{\mathcal{X}\mathcal{Z}}\). Let the regularisation parameter \(\lambda_n\) decay to 0 with rate \(\mathcal{O}(n^{-1/4})\). Then \(\tilde{\mathcal{E}}_{X|Z}(\hat{F}_{P_{X|Z},n,\lambda_n})-\tilde{\mathcal{E}}_{X|Z}(F_{P_{X|Z}})=\mathcal{O}_P(n^{-1/4})\). 
\end{customproof}
\begin{proof}
	Follows immediately from \citep[Theorem 2.4]{park2020regularised}. 
\end{proof}
\begin{customproof}{Theorem 5.2}
	Suppose that \(k_\mathcal{X}\) is a characteristic kernel, that \(P_Z\) and \(P_{Z'}\) are absolutely continuous with respect to each other, and that \(P(\cdot\mid Z)\) and \(P(\cdot\mid Z')\) admit regular versions. Then \(\textnormal{MCMD}_{P_{X|Z},P_{X'|Z'}}=0\) \(P_Z\)- (or \(P_{Z'}\)-)almost everywhere if and only if, for \(P_Z\)- (or \(P_{Z'}\)-)almost all \(z\in\mathcal{Z}\), \(P_{X|Z=z}(B)=P_{X'|Z'=z}(B)\) for all \(B\in\mathfrak{X}\). 
\end{customproof}
\begin{proof}
	Write \(Q\) and \(Q'\) for some regular versions of \(P(\cdot\mid Z)\) and \(P(\cdot\mid Z')\) respectively, and assume without loss of generality that the conditional distributions \(P_{X|Z}\) and \(P_{X'|Z'}\) are given by \(P_{X|Z}(\omega)(B)=Q_\omega(X\in B)\) and \(P_{X'|Z'}(\omega)(B)=Q'_\omega(X'\in B)\) for \(B\in\mathfrak{X}\). By the definition of regular versions, for each \(B\in\mathfrak{X}\), the real-valued random variables \(\omega\mapsto P_{X|Z}(\omega)(B)\) and \(\omega\mapsto P_{X'|Z'}(\omega)(B)\) are measurable with respect to \(Z\) and \(Z'\) respectively, and so there are functions \(R_B:\mathcal{Z}\rightarrow\mathbb{R}\) and \(R'_B:\mathcal{Z}\rightarrow\mathbb{R}\) such that \(P_{X|Z}(\omega)(B)=R_B(Z(\omega))\) and \(P_{X'|Z'}(\omega)(B)=R'_B(Z'(\omega))\). Moreover, for each fixed \(z\in\mathcal{Z}\), the mappings \(B\mapsto P_{X|Z}(Z^{-1}(z))(B)=R_B(z)\) and \(B\mapsto P_{X'|Z'}(Z'^{-1}(z))(B)=R'_B(z)\) are measures. We write \(R_B(z)=P_{X|Z=z}(B)\) and \(R'_B(z)=P_{X'|Z'=z}(B)\). 
	
	By Theorem \ref{Tregularversion}, there exists an event \(A_1\in\mathcal{F}\) with \(P(A_1)=1\) such that for all \(\omega\in A_1\),
	\[\mu_{P_{X|Z}}(\omega)\vcentcolon=\mathbb{E}_{X|Z}[k_\mathcal{X}(X,\cdot)\mid Z](\omega)=\int_{\Omega}k_\mathcal{X}(X(\omega'),\cdot)Q_\omega(d\omega')=\int_\mathcal{X}k_\mathcal{X}(x,\cdot)P_{X|Z}(\omega)(dx),\]
	and an event \(A_2\in\mathcal{F}\) with \(P(A_2)=1\) such that for all \(\omega\in A_2\),
	\begin{alignat*}{2}
	\mu_{P_{X'|Z'}}(\omega)\vcentcolon=\mathbb{E}_{X'|Z'}[k_\mathcal{X}(X',\cdot)\mid Z'](\omega)&=\int_{\Omega}k_\mathcal{X}(X'(\omega'),\cdot)Q_\omega(d\omega')\\
	&=\int_\mathcal{X}k_\mathcal{X}(x',\cdot)P_{X'|Z'}(\omega)(dx').
	\end{alignat*}
	
	Suppose for contradiction that there exists some \(D\in\mathfrak{Z}\) with \(P_Z(D)>0\) such that for all \(z\in D\), \(F_{P_{X|Z}}(z)\neq\int_\mathcal{X}k_\mathcal{X}(x,\cdot)R_{dx}(z)\). Then \(P(Z^{-1}(D))=P_Z(D)>0\), and hence \(P(Z^{-1}(D)\cap A_1)>0\). For all \(\omega\in Z^{-1}(D)\cap A_1\), we have \(Z(\omega)\in D\), and hence
	\[\mu_{P_{X|Z}}(\omega)=F_{P_{X|Z}}(Z(\omega))\neq\int_\mathcal{X}k_\mathcal{X}(x,\cdot)R_{dx}(Z(\omega))=\int_\mathcal{X}k_\mathcal{X}(x,\cdot)P_{X|Z}(\omega)(dx).\]
	This contradicts our assertion that \(\mu_{P_{X|Z}}(\omega)=\int_\mathcal{X}k_\mathcal{X}(x,\cdot)P_{X|Z}(\omega)(dx)\) for all \(\omega\in A_1\), hence there does not exist \(D\in\mathfrak{Z}\) with \(P_Z(D)>0\) such that for all \(z\in D\), \(F_{P_{X|Z}}(z)\neq\int_\mathcal{X}k_\mathcal{X}(x,\cdot)R_{dx}(z)\). Therefore, there must exist some \(C_1\in\mathfrak{Z}\) with \(P_Z(C_1)=1\) such that for all \(z\in C_1\), \(F_{P_{X|Z}}(z)=\int_\mathcal{X}k_\mathcal{X}(x,\cdot)R_{dx}(z)\). Similarly, there must exist some \(C_2\in\mathfrak{Z}\) with \(P_Z(C_2)=1\) such that for all \(z\in C_2\), \(F_{P_{X'|Z'}}(z)=\int_\mathcal{X}k_\mathcal{X}(x,\cdot)R'_{dx}(z)\). Since \(P_Z\) and \(P_{Z'}\) are absolutely continuous with respect to each other, we also have \(P_Z(C_2)=1=P_{Z'}(C_1)\). 
	\begin{description}
		\item[(\(\implies\))] Suppose first that \(\text{MCMD}_{P_{X|Z},P_{X'|Z'}}=\lVert F_{P_{X|Z}}-F_{P_{X'|Z'}}\rVert_{\mathcal{H}_\mathcal{X}}=0\) \(P_Z\)-almost everywhere, i.e. there exists \(C\in\mathfrak{Z}\) with \(P_Z(C)=1\) such that for all \(z\in C\), \(\lVert F_{P_{X|Z}}(z)-F_{P_{X'|Z'}}(z)\rVert_{\mathcal{H}_\mathcal{X}}=0\). Then for each \(z\in C\cap C_1\cap C_2\), 
		\begin{alignat*}{3}
		\int_\mathcal{X}k_\mathcal{X}(x,\cdot)R_{dx}(z)&=F_{P_{X|Z}}(z)&&\text{since }z\in C_1\\
		&=F_{P_{X'|Z'}}(z)&&\text{since }z\in C\\
		&=\int_\mathcal{X}k_\mathcal{X}(x,\cdot)R'_{dx}(z)\qquad&&\text{since }z\in C_2.
		\end{alignat*}
		Since the kernel \(k_\mathcal{X}\) is characteristic, this means that \(B\mapsto R_B(z)\) and \(B\mapsto R'_B(z)\) are the same probability measure on \((\mathcal{X},\mathfrak{X})\). By countable intersection, we have \(P_Z(C\cap C_1\cap C_2)=1\), so \(P_Z\)-almost everywhere,
		\[P_{X|Z=z}(B)=P_{X'|Z'=z}(B)\]
		for all \(B\in\mathfrak{X}\). 
		\item[(\(\impliedby\))] Now assume there exists \(C\in\mathfrak{Z}\) with \(P_Z(C)=1\) such that for each \(z\in C\), \(R_B(z)=R'_B(z)\) for all \(B\in\mathfrak{X}\). Then for all \(z\in C\cap C_1\cap C_2\), 
		\begin{alignat*}{3}
		&\left\lVert F_{P_{X|Z}}(z)-F_{P_{X'|Z'}}(z)\right\rVert_{\mathcal{H}_\mathcal{X}}\\
		&=\left\lVert\int_\mathcal{X}k_\mathcal{X}(x,\cdot)R_{dx}(z)-\int_\mathcal{X}k_\mathcal{X}(x,\cdot)R'_{dx}(z)\right\rVert_{\mathcal{H}_\mathcal{X}}\quad&\text{since }z\in C_1\cap C_2\\
		&=\left\lVert\int_\mathcal{X}k_\mathcal{X}(x,\cdot)R_{dx}(z)-\int_\mathcal{X}k_\mathcal{X}(x,\cdot)R_{dx}(z)\right\rVert_{\mathcal{H}_\mathcal{X}}&\text{since }z\in C\\
		&=0,
		\end{alignat*}
		and since \(P_Z(C\cap C_1\cap C_2)=1\), \(\lVert F_{P_{X|Z}}-F_{P_{X'|Z'}}\rVert_{\mathcal{H}_\mathcal{X}}=0\) \(P_Z\)-almost everywhere. 
	\end{description}
\end{proof}
\begin{customproof}{Theorem 5.4}
	Suppose \(k_\mathcal{X}\otimes k_\mathcal{Y}\) is a characteristic kernel on \(\mathcal{X}\times\mathcal{Y}\), and that \(P(\cdot\mid Z)\) admits a regular version. Then \(\textnormal{HSCIC}(X,Y\mid Z)=0\) almost surely if and only if \(X\independent Y\mid Z\). 
\end{customproof}
\begin{proof}
	Write \(Q\) for a regular version of \(P(\cdot\mid Z)\), and assume without loss of generality that the conditional distributions \(P_{X|Z}\), \(P_{Y|Z}\) and \(P_{XY|Z}\) are given by \(P_{X|Z}(\omega)(B)=Q_\omega(X\in B)\) for \(B\in\mathcal{X}\), \(P_{Y|Z}(\omega)(C)=Q_\omega(Y\in C)\) for \(C\in\mathfrak{Y}\) and \(P_{XY|Z}(\omega)(D)=Q_\omega((X,Y)\in D)\) for \(D\in\mathfrak{X}\times\mathfrak{Y}\). By Theorem \ref{Tregularversion}, there exists an event \(A_1\in\mathcal{F}\) with \(P(A_1)=1\) such that for all \(\omega\in A_1\),
	\[\mu_{P_{X|Z}}(\omega)\vcentcolon=\mathbb{E}_{X|Z}[k_\mathcal{X}(X,\cdot)\mid Z](\omega)=\int_{\Omega}k_\mathcal{X}(X(\omega'),\cdot)Q_\omega(d\omega')=\int_\mathcal{X}k_\mathcal{X}(x,\cdot)P_{X|Z}(\omega)(dx),\]
	an event \(A_2\in\mathcal{F}\) with \(P(A_2)=1\) such that for all \(\omega\in A_2\),
	\[\mu_{P_{Y|Z}}(\omega)\vcentcolon=\mathbb{E}_{Y|Z}[k_\mathcal{Y}(Y,\cdot)\mid Z](\omega)=\int_{\Omega}k_\mathcal{Y}(Y(\omega'),\cdot)Q_\omega(d\omega')=\int_\mathcal{Y}k_\mathcal{Y}(y,\cdot)P_{Y|Z}(\omega)(dy),\]
	and an event \(A_3\in\mathcal{F}\) with \(P(A_3)=1\) such that for all \(\omega\in A_3\),
	\[\mu_{P_{XY|Z}}(\omega)=\int_{\mathcal{X}\times\mathcal{Y}}k_\mathcal{X}(x,\cdot)\otimes k_\mathcal{Y}(y,\cdot)P_{XY|Z}(\omega)(d(x,y)).\]
	This means that, for each \(\omega\in A_1\), \(\mu_{P_{X|Z}}(\omega)\) is the mean embedding of \(P_{X|Z}(\omega)\), and for each \(\omega\in A_2\), \(\mu_{P_{Y|Z}}(\omega)\) is the mean embedding of \(P_{Y|Z}(\omega)\). 
	\begin{description}
		\item[(\(\implies\))] Suppose first that \(\text{HSCIC}(X,Y\mid Z)=\lVert \mu_{P_{XY|Z}}-\mu_{P_{X|Z}}\otimes\mu_{P_{Y|Z}}\rVert_{\mathcal{H}_\mathcal{X}\otimes\mathcal{H}_\mathcal{Y}}=0\) almost surely, i.e. there exists \(A\in\mathcal{F}\) with \(P(A)=1\) such that for all \(\omega\in A\), \(\lVert \mu_{P_{XY|Z}}(\omega)-\mu_{P_{X|Z}}(\omega)\otimes\mu_{P_{Y|Z}}(\omega)\rVert_{\mathcal{H}_\mathcal{X}\otimes\mathcal{H}_\mathcal{Y}}=0\). Then for each \(\omega\in A\cap A_1\cap A_2\cap A_3\), 
		\begin{alignat*}{3}
		\int_{\mathcal{X}\times\mathcal{Y}}&k_\mathcal{X}(x,\cdot)\otimes k_\mathcal{Y}(y,\cdot)P_{XY|Z}(\omega)(d(x,y))=\mu_{P_{XY|Z}}(\omega)&&\text{since }\omega\in A_3\\
		&=\mu_{P_{X|Z}}(\omega)\otimes\mu_{P_{Y|Z}}(\omega)&&\text{since }\omega\in A\\
		&=\int_\mathcal{X}k_\mathcal{X}(x,\cdot)P_{X|Z}(\omega)(dx)\otimes\int_\mathcal{Y}k_\mathcal{Y}(y,\cdot)P_{Y|Z}(\omega)(dy)\quad&&\text{since }\omega\in A_1\cap A_2\\
		&=\int_{\mathcal{X}\times\mathcal{Y}}k_\mathcal{X}(x,\cdot)\otimes k_\mathcal{Y}(y,\cdot)P_{X|Z}(\omega)P_{Y|Z}(\omega)(d(x,y))&&\text{by Fubini.}
		\end{alignat*}
		Since the kernel \(k_\mathcal{X}\otimes k_\mathcal{Y}\) is characteristic, the distributions \(P_{XY|Z}(\omega)\) and \(P_{X|Z}(\omega)P_{Y|Z}(\omega)\) on \(\mathcal{X}\times\mathcal{Y}\) are the same. By countable intersection, we have \(P(A\cap A_1\cap A_2\cap A_3)=1\), so \(P_{XY|Z}\) and \(P_{X|Z}P_{Y|Z}\) are the same almost surely, and we have \(X\independent Y\mid Z\). 
		\item[(\(\impliedby\))] Now assume \(X\independent Y\mid Z\), i.e. there exists \(A\in\mathcal{F}\) with \(P(A)=1\) such that for each \(\omega\in A\), the distributions \(P_{XY|Z}(\omega)\) and \(P_{X|Z}(\omega)P_{Y|Z}(\omega)\) are the same. Then for all \(\omega\in A\cap A_1\cap A_2\cap A_3\), 
		\begin{alignat*}{3}
		\mu_{P_{XY|Z}}(\omega)&=\int_{\mathcal{X}\times\mathcal{Y}}k_\mathcal{X}(x,\cdot)\otimes k_\mathcal{Y}(y,\cdot)P_{XY|Z}(\omega)(d(x,y))&&\text{since }\omega\in A_3\\
		&=\int_{\mathcal{X}\times\mathcal{Y}}k_\mathcal{X}(x,\cdot)\otimes k_\mathcal{Y}(y,\cdot)P_{X|Z}(\omega)(dx)P_{Y|Z}(\omega)(dy)\quad&&\text{since }\omega\in A\\
		&=\int_\mathcal{X}k_\mathcal{X}(x,\cdot)P_{X|Z}(\omega)(dx)\otimes\int_\mathcal{Y}k_\mathcal{Y}(y,\cdot)P_{Y|Z}(\omega)(dy)&&\text{by Fubini}\\
		&=\mu_{P_{X|Z}}(\omega)\otimes\mu_{P_{Y|Z}}(\omega)&&\text{since }\omega\in A_1\cap A_2.
		\end{alignat*}
		and since \(P(A\cap A_1\cap A_2\cap A_3)=1\), \(\text{HSCIC}(X,Y\mid Z)=0\) almost surely.  
	\end{description}
\end{proof}
\end{document}